\documentclass[10pt]{article} 


\usepackage[accepted]{tmlr}




\usepackage{courier}  
\usepackage[hyphens]{url}  
\usepackage{graphicx} 
\urlstyle{rm} 
\usepackage{natbib}  
\usepackage{caption} 
\frenchspacing  
\setlength{\pdfpagewidth}{8.5in}  
\setlength{\pdfpageheight}{11in}  
%
\usepackage{amsmath}
\usepackage{amssymb}
\newcommand{\sign}{\text{sign}}
\allowdisplaybreaks[4] 

\usepackage{diagbox} 

\newcommand{\mathbbm}[1]{\text{\usefont{U}{bbm}{m}{n}#1}} 

\usepackage{amsthm}
\newtheorem{theorem}{Theorem}
\newtheorem{corollary}{Corollary}

\newtheorem{proposition}{Proposition}
\newtheorem*{remark}{Remark}

\usepackage{graphicx} 
\usepackage{float}
\usepackage{subfigure} 
\usepackage{subcaption} 

\usepackage{stfloats}

\usepackage{multicol}

\usepackage{comment}

\usepackage{multirow}


\usepackage{hyperref}
\usepackage{booktabs}

\usepackage{colortbl,color}
\usepackage{graphicx}
\definecolor{citeColor}{RGB}{0,20,115}

\hypersetup{colorlinks,linkcolor={citeColor},citecolor={citeColor},urlcolor=black}

\usepackage{url}

\urlstyle{same}

\usepackage{mathrsfs}

\allowdisplaybreaks[4]

%
\usepackage{newfloat}
\usepackage{listings}

\usepackage{algorithm}
\usepackage{algorithmic}

\newcommand\nnfootnote[1]{%
  \begin{NoHyper}
  \renewcommand\thefootnote{}\footnote{#1}%
  \addtocounter{footnote}{-1}%
  \end{NoHyper}
}

\DeclareCaptionStyle{ruled}{labelfont=normalfont,labelsep=colon,strut=off} 
\lstset{%
	basicstyle={\footnotesize\ttfamily},
	numbers=left,numberstyle=\footnotesize,xleftmargin=2em,
	aboveskip=0pt,belowskip=0pt,%
	showstringspaces=false,tabsize=2,breaklines=true}
\floatstyle{ruled}
\newfloat{listing}{tb}{lst}{}
\floatname{listing}{Listing}

\title{Understanding Fairness Surrogate Functions \\ in Algorithmic Fairness}


\author{\name Wei~Yao~\textsuperscript{$\star$}
    \email wei.yao@ruc.edu.cn \\
    \addr Gaoling School of Artificial Intelligence\\
    Renmin University of China, Beijing \\ Beijing Key Laboratory of Big Data Management and Analysis Methods, Beijing
    \and
    \name Zhanke~Zhou~\textsuperscript{$\star$} 
    \email cszkzhou@comp.hkbu.edu.hk \\
    \addr TMLR Group, Department of Computer Science \\ Hong Kong Baptist University
    \and
    \name Zhicong~Li \email zhicongli@ruc.edu.cn \\
    \addr Gaoling School of Artificial Intelligence\\
    Renmin University of China, Beijing \\ Beijing Key Laboratory of Big Data Management and Analysis Methods, Beijing
    \and
    \name Bo~Han \email bhanml@comp.hkbu.edu.hk \\
    \addr TMLR Group, Department of Computer Science \\ Hong Kong Baptist University
    \and
    \name Yong~Liu~\textsuperscript{$\dagger$} \email liuyonggsai@ruc.edu.cn \\
    \addr Gaoling School of Artificial Intelligence\\
    Renmin University of China, Beijing \\ Beijing Key Laboratory of Big Data Management and Analysis Methods, Beijing
}



\begin{document}

\maketitle

\begin{abstract}
\nnfootnote{$\star$ indicates equal contribution.}
\nnfootnote{$\dagger$ indicates the corresponding author.}
It has been observed that machine learning algorithms exhibit biased predictions against certain population groups. To mitigate such bias while achieving comparable accuracy, a promising approach is to introduce surrogate functions of the concerned fairness definition and solve a constrained optimization problem. However, it is intriguing in previous work that such fairness surrogate functions may yield unfair results and high instability. In this work, in order to deeply understand them, taking a widely used fairness definition—demographic parity as an example, we show that there is a \textit{surrogate-fairness gap} between the fairness definition and the fairness surrogate function. Also, the theoretical analysis and experimental results about the ``gap” motivate us that the fairness and stability will be affected by the points far from the decision boundary, which is the \textit{large margin points issue} investigated in this paper. To address it, we propose the general sigmoid surrogate to simultaneously reduce both the surrogate-fairness gap and the variance, and offer a rigorous fairness and stability upper bound. Interestingly, the theory also provides insights into two important issues that deal with the \textit{large margin points} as well as obtaining a more \textit{balanced dataset} are beneficial to fairness and stability. Furthermore, we elaborate a novel and general algorithm called Balanced Surrogate, which iteratively reduces the ``gap” to mitigate unfairness. Finally, we provide empirical evidence showing that our methods consistently improve fairness and stability while maintaining accuracy comparable to the baselines in three real-world datasets.
Our code is available at \url{https://github.com/yw101004244/Understanding-Fairness-Surrogate-Functions}.
\end{abstract}

\section{Introduction}
\label{sec:introduction}

Recently, increasing attention has been paid to the fairness issue 
in supervised machine learning. 
That is, 
although the classifiers seek a higher accuracy, 
some groups with certain sensitive features 
(e.g., sex, race, age) may be unfairly treated, 
which raises ethical problems 
\citep{COMPAS, fairness_survey_1, fairness_survey_2}. One can be litigated for committing adverse impacts 
if his/her decision-making process 
disproportionately treats groups with sensitive attributes \citep{law_di}.

To quantitatively measure the extent of fairness violation, 
a usual way is adopting the fairness definition,
\textit{demographic parity} (DP), 
which requires the decision makers to accept a roughly equal proportion of each group \citep{book}. Existing methods
follow the fairness-aware manner of solving a constrained optimization problem, 
where the learning objective is integrated with the standard loss and a fairness constraint. 
To incorporate DP into the constraint, fairness surrogate functions are used to replace the indicator function, which is intractable for gradient-based algorithms \citep{too_relaxed,nips_sigmoid} (refer to Figure \ref{motiv:surro} for some examples). 
To date, various surrogate functions have been proposed to incorporate fairness definitions into constraints
\citep{two_surrogate_www_2019, ramp_nips_2016,2021_uai, cov_2,ramp_nips_2017, cov, nips_sigmoid}. 
They are widely applied in various machine learning domains, such as differential privacy \citep{cov_app_1}, meta-learning \citep{cov_app_2}, and semi-supervised learning \citep{cov_app_3}. 
Unfortunately, these surrogate functions encounter two risks. One risk is that if these fairness constraints are used, even when the constraints are perfectly satisfied, there is \textit{no guarantee} whether DP is satisfied \citep{too_relaxed}. And the fairness surrogate functions may lead to even unfair solutions \citep{new_2023}. Moreover, another risk arises from the high variance issue observed in existing fairness-aware algorithms with surrogate functions~\citep{variance_fairness_2}, rendering them unstable for deployment under fairness requirements~\citep{variance_fairness_1}.


\begin{figure}[t] 
\centering 
\subfigure[Surrogate Functions]{
\includegraphics[width=0.47\textwidth]{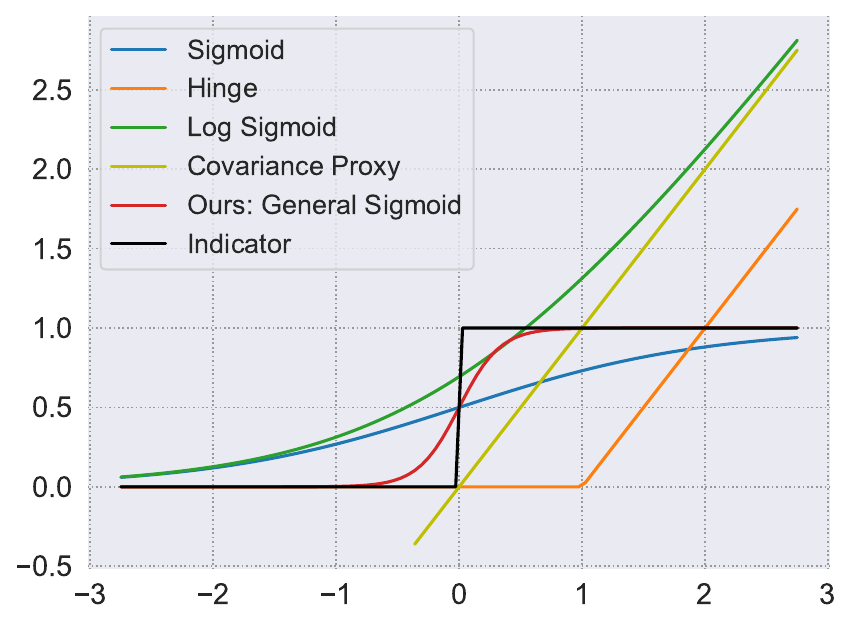}
\label{motiv:surro}
}
\subfigure[Surrogate-Fairness Gap]{
\includegraphics[width=0.47\textwidth]{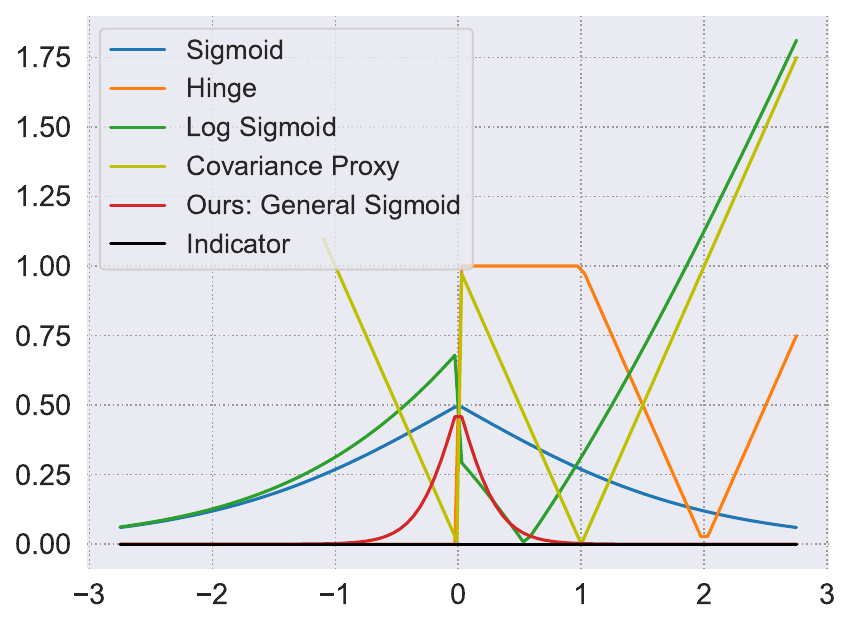}
\label{motiv:gap}
}
\caption{(a) Some examples of fairness surrogate functions. The closer the surrogate functions are to the indicator function, the better they represent DP. 
More details are introduced in Section \ref{background:surrogate functions}. 
(b) The surrogate-fairness gap of different surrogate functions. It measures the difference between surrogate functions and the indicator function. 
There is a much smaller gap for our general sigmoid surrogate. 
}
\label{motiv:Surrogate_Functions} 
\end{figure}


In this paper, 
we evaluate these multifarious surrogate functions in algorithmic fairness with both rigorous theorems and extensive experiments. 
Firstly, 
we stress the importance of the “\textbf{surrogate-fairness gap}”, which is \textit{the disparity between the fairness surrogate function and the fairness definition}. 
It is the decisive factor of whether the fairness surrogate function can lead to fair outcomes and should be minimized. 
Additionally, we delve into the \textbf{variance} of the substitute for DP, highlighting the adverse impact of unbounded surrogates on stability.
Drawing upon the inherent property of the surrogate-fairness gap and instability, 
we conduct an in-depth examination of the \textbf{large margin points issue} within the context of unbounded surrogate functions.
To reduce the “gap”, we propose two solutions to improve the existing surrogate functions: a theoretically motivated fairness surrogate function named \textit{general sigmoid} with upper bounds of the violation of DP, and a novel algorithm called the \textit{balanced surrogate} to iteratively reduce the gap during training.


Our analysis is general for fairness surrogate functions in the case of a common fairness definition, DP. Additionally, we employ a widely recognized covariance proxy~\citep{cov} as an illustrative instance of fairness surrogate function. In particular, we first derive the violation of DP for the fairness surrogate functions in Section \ref{Relation to demographic parity}. The violation of DP depends on two factors: the surrogate function itself and the surrogate-fairness gap. The “gap” (shown in Figure \ref{motiv:gap}) directly determines whether a surrogate function is an appropriate substitute for DP. 
Secondly, we explore the variance of the surrogate function in Section \ref{sec:instability}, emphasizing the detrimental impact of unbounded surrogates on stability.
Furthermore, driven by the “gap” and variance, we recognize that large margin points—those data points lying significantly distant from the decision boundary—pose challenges in constraining the fairness and stability for unbounded surrogate functions. This observation is validated through a case study on three real-world datasets in Section \ref{outlier_section}.
With theoretical motivations, we introduce the general sigmoid surrogate in Section \ref{general_sigmoid_section} to address large margin points and simultaneously bound the “gap” and variance. 
We theoretically demonstrate that there is a reliable fairness and stability guarantee for it. Interestingly, the theorems also shed light on the importance of a balanced dataset for both fairness and stability.
Furthermore, in Section \ref{balanced_surrogates}, we propose balanced surrogates, a novel and general algorithm that iteratively reduces the “gap” to improve fairness. 
It is a plug-and-play learning paradigm for the naive fairness-aware training framework using fairness surrogate functions.
In the experiments in Section \ref{experiment} using three real-world datasets, our methods generally enhance fair predictions and stability, while maintaining accuracy comparable to the baselines. Overall, our main contributions are three-fold:

\begin{itemize}
\item We demonstrate the importance of \textit{Surrogate-fairness Gap} for fairness surrogate functions and provide an analysis of the \textit{variance}. We emphasize the importance for researchers to consider the impact of the \textit{large margin points issue} on the fairness and stability of unbounded surrogate functions.
\item We propose \textit{General Sigmoid Surrogate} and demonstrate that it achieves fairness and stability guarantees. The theoretical results further provide insights to the community that \textit{large margin points issue} needs to be solved and a \textit{balanced dataset} is beneficial to obtain a fairer and more stable classifier.

\item We present \textit{Balanced Surrogate}, a novel and general method that iteratively reduces the “gap” to improve the fairness of any fairness surrogate functions. 

\end{itemize}

\section{Related Work}
\paragraph{Fairness-aware Algorithms.} To mitigate bias, there are various kinds of classical fair algorithms, most of which fall into three categories: pre-processing, in-processing, and post-processing. The pre-processing method is to learn a fair representation that tries to remove information correlated to the sensitive feature while preserving other information for training, e.g., \citep{pre_1,pre_2,pre_3,pre_4,pre_5}. The downstream tasks then use the fair representation instead of the original biased dataset. The post-processing method is to modify the prediction results to satisfy the fairness definition, e.g., \citep{post_1,post_2,post_3}. The in-processing method is to remove unfairness during training. Some intuitive and easy-to-use ideas involve applying fairness constraints \citep{ramp_nips_2016,cov_2,ramp_nips_2017,cov,linear_directly,two_surrogate_www_2019,nips_sigmoid,2021_uai} and adding a regularization term to penalize unfairness \citep{in_1_regularization,in_2_regularization,icml_2018_reduction_approach,too_relaxed,sufficiency_reg}. Refer to Appendix~\ref{appendix: fair_algs} for other fairness-aware in-processing approaches. Our paper focuses on in-processing methods, with a particular emphasis on fairness surrogate functions, which are widely used in fairness constraints and fairness regularization methods mentioned above.

\paragraph{Fairness Surrogate Functions.}
Although many existing popular surrogates work well in practice, for example, linear \citep{nips_2018_erm, icml_2018_reduction_approach, linear_directly}, ramp \citep{ramp_nips_2016,ramp_nips_2017}, convex-concave \citep{cov_2}, hinge~\citep{two_surrogate_www_2019}, sigmoid and log-sigmoid \citep{nips_sigmoid}. They suffer from the same issue: there is not a fairness guarantee for them \citep{too_relaxed}. 
And using fairness constraints or regularization can unexpectedly yield unfair solutions \citep{new_2023}. Refer to Appendix \ref{appendix:No_theoretical_guarantee} for a meticulous overview of existing works, most of which present counterexamples for analysis.
The high variance issue has been observed in existing fairness-aware algorithms~\citep{variance_fairness_1}, including the instability of the covariance proxy~\citep{variance_fairness_2}. In this paper, in addition to empirically showing counter-examples, we both theoretically and empirically underscore the significance of the surrogate-fairness gap and variance, which are fundamental factors contributing to the two aforementioned problems, respectively. Our general sigmoid surrogate is shown to deal with the large margin points to simultaneously reduce both the surrogate-fairness gap and the variance. There is also fairness and stability upper bound for it, which is crucial in this field~\citep{survey_llm, fairness_survey_1, fairness_survey_2}. Additionally, in order to reduce the gap, we also devise a balanced surrogate approach to further improve the fairness and stability of surrogate functions, which may deserve a deeper exploration in this field for future work.

\section{Preliminaries}


\subsection{Fairness-aware Classification} \label{section_2_1}
Note the general purpose of fairness-aware classification is to find a classifier with minimal accuracy loss while satisfying certain fairness constraints. 
For simplicity,
we set up the problem as the binary classification task
with only a binary-sensitive feature:
%
%
with the training set $\mathcal{S}=\{( \mathbf{x}_i ,y_i) \}_{i=1}^N$ consisting of feature vectors $\mathbf{x}_i \in R^d$ and the corresponding class labels $y_i \in \{0,1\}$, one needs to predict the labels of a test set. Let $d_\theta(\mathbf{x})$ denotes the signed distance between the feature vector $\mathbf{x}$ and the decision boundary parameterized by $\theta$. Given a point $\mathbf{x}_i$ in the test set, a classifier will predict it as positive if $d_{\theta}(\mathbf{x}_i) > 0$ and zero if $d_{\theta}(\mathbf{x}_i) \le 0$. Among the features of $\mathbf{x}$, there is one binary sensitive attribute $z \in \{-1, +1\}$ (e.g., sex, race, age).

As introduced in Section \ref{sec:introduction},
a widely used fairness definition is called the \textit{demographic parity} (DP) \citep{fairness_survey_1, fairness_survey_2}. 
It states that each protected class should receive the positive outcome at equal rates, i.e.,
$$P(d_\theta(\mathbf{x})>0 | z=+1)=P(d_\theta(\mathbf{x})>0 | z=-1).$$
And further,
the \textit{difference of demographic parity} (DDP) metric \citep{too_relaxed} can be used to measure the degree to which demographic parity is violated:
$$DDP = P(d_\theta(\mathbf{x})>0 | z=+1)-P(d_\theta(\mathbf{x})>0 | z=-1).$$
Then, with this metric,
whether a classifier satisfies demographic parity can be determined 
by the condition $\left| DDP \right| \le \epsilon$, 
where $\epsilon \ge 0$ is a given threshold.


\subsection{Surrogate Functions} \label{background:surrogate functions}
We divide the training set into four classes according to the predicted labels and sensitive features:
\begin{align} 
& \mathcal{N}_{1a} = \{ (\mathbf{x}_i ,y_i) \in \mathcal{S} \left| \right. d_{\theta}(\mathbf{x}_i) > 0,z_i=+1 \}, & \mathcal{N}_{1b} = \{ (\mathbf{x}_i ,y_i) \in \mathcal{S} \left| \right.  d_{\theta}(\mathbf{x}_i) > 0,z_i=-1 \}, \notag \\ & \mathcal{N}_{0a} = \{ (\mathbf{x}_i ,y_i) \in \mathcal{S} \left| \right. d_{\theta}(\mathbf{x}_i) \le 0,z_i=+1 \}, & \mathcal{N}_{0b} = \{ (\mathbf{x}_i ,y_i) \in \mathcal{S} \left| \right. d_{\theta}(\mathbf{x}_i) \le 0,z_i=-1 \}, \notag 
\end{align}

where $N_{1a}, N_{1b}, N_{0a}, N_{0b}$ are sizes of $\mathcal{N}_{1a}, \mathcal{N}_{1b}, \mathcal{N}_{0a}, \mathcal{N}_{0b}$, respectively.
To consider DDP as fairness constraints for optimization, the probability in it cannot be computed directly, so frequency is used to estimate them:
\begin{align} 
    \widehat{DDP}_{\mathcal{S}} & = \frac{N_{1a}}{N_{1a}+N_{0a}} - \frac{N_{1b}}{N_{1b}+N_{0b}} \label{DDP_hat_def} \\ &  =  \frac{\sum_{(\mathbf{x} ,y) \in\mathcal{N}_{1a} \cup \mathcal{N}_{0a}}\mathbbm{1}_{d_\theta(\mathbf{x})>0}}{N_{1a}+N_{0a}}-\frac{\sum_{(\mathbf{x} ,y) \in \mathcal{N}_{1b} \cup \mathcal{N}_{0b}}\mathbbm{1}_{d_\theta(\mathbf{x})>0}}{N_{1b}+N_{0b}}, \notag
\end{align}

where $\mathbbm{1}_{[\cdot]}:\mathbb{R} \rightarrow \{ 0,1 \}$ is the indicator function that returns 1 if the condition is true and 0 otherwise. 

In application, $\widehat{DDP}_{\mathcal{S}}$ usually serves as a substitute for $DDP$ to judge the fairness of a classifier. However, due to $\mathbbm{1}_{d_\theta(\mathbf{x})>0}$, it is intractable to directly incorporate $\widehat{DDP}_{\mathcal{S}}$ into constraints for gradient-based algorithms. So smooth \textbf{surrogate function} $\phi: \mathbb{R} \rightarrow \mathbb{R}$ is used to replace $\mathbbm{1}_{d_\theta(\mathbf{x})>0}$ with $\phi(d_\theta(\mathbf{x}))$:
\begin{equation} \label{DDP_estim}
    \widetilde{DDP}_{\mathcal{S}}(\phi) = \frac{\sum_{(\mathbf{x} ,y) \in \mathcal{N}_{1a} \cup \mathcal{N}_{0a}}\phi(d_\theta(\mathbf{x}))}{N_{1a}+N_{0a}}-\frac{\sum_{(\mathbf{x} ,y) \in \mathcal{N}_{1b} \cup \mathcal{N}_{0b}}\phi(d_\theta(\mathbf{x}))}{N_{1b}+N_{0b}}.
\end{equation}
Then $\widetilde{DDP}_{\mathcal{S}}(\phi)$ can be incorporated into constraints as $\left|\widetilde{DDP}_{\mathcal{S}}(\phi)\right| \le \epsilon$, where $\epsilon$ is the threshold. In this way, the fairness-aware classification problem becomes a feasible constrained optimization problem.

One popular fairness surrogate function is \textit{covariance proxy} (CP), which is introduced by \citep{cov}. Empirical study shows that CP can reflect the difference of demographic parity and can be incorporated as constraints for fairness-aware classification problem \citep{cov_app_1, cov_app_2, cov_app_3}. We defined a general version of it as 
\begin{align} \label{foudation_of_derivation}
    \widehat{Cov}_{\mathcal{S}}(\phi) = \frac{1}{N}\sum_{i=1}^N(z_i-\overline{z})\phi(d_{\theta}(\mathbf{x}_i)),
\end{align}
where $\overline{z}$ is the mean of $z$ over the training set. If $\phi(x)=x$, the equation \eqref{foudation_of_derivation} recovers the original definition of CP in \citep{cov}. Also, $\widetilde{DDP}_{\mathcal{S}}(\phi) \propto \widehat{Cov}_{\mathcal{S}}(\phi)$ and the proof can be found in Appendix \ref{The_Step-by-step_Derivation}. 
It means that \textit{the original CP is equivalent to the linear surrogate function} $\phi(x)=x$, which is also explained in previous work \citep{too_relaxed,nips_sigmoid}. 
We provide theoretical results for $\widetilde{DDP}_{\mathcal{S}}(\phi)$ in the main paper, and extend them to $\widehat{Cov}_{\mathcal{S}}(\phi)$ in Appendix~\ref{appendix:extension} with the same conclusions.

\section{The Surrogate-fairness Gap} 
\label{Relation to demographic parity}

We first emphasize the importance of surrogate-fairness gap, and then point out two issues: instability (Section~\ref{sec:instability}) and large margin points (Section~\ref{outlier_section}), which may influence the surrogate-fairness gap.




Firstly,
we connect $\widehat{DDP}_{\mathcal{S}}$ and $\widetilde{DDP}_{\mathcal{S}}(\phi)$ together, and build the surrogate-fairness gap between them.

\begin{proposition} \label{DDP_Cov_theorem}
Define the magnitude of the signed distance by $D_\theta(\mathbf{x})$, i.e., $D_\theta(\mathbf{x})=|d_\theta(\mathbf{x})|$. 
It satisfies:
\begin{align}
    \underbrace{\widehat{DDP}_{\mathcal{S}} - \widetilde{DDP}_{\mathcal{S}}(\phi)}_{\text{surrogate-fairness gap}} = \frac{\sum_{(\mathbf{x} ,y) \in \mathcal{N}_{1a} \cup \mathcal{N}_{0a}}\left[\mathbbm{1}_{d_\theta(\mathbf{x})>0}-\phi(d_\theta(\mathbf{x}))\right]}{N_{1a}+N_{0a}}-\frac{\sum_{(\mathbf{x} ,y) \in \mathcal{N}_{1b} \cup \mathcal{N}_{0b}}\left[\mathbbm{1}_{d_\theta(\mathbf{x})>0}-\phi(d_\theta(\mathbf{x}))\right]}{N_{1b}+N_{0b}}. \label{DDP_before}
\end{align}
\label{theorem_1}
\end{proposition}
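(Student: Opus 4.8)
This proposition is essentially an algebraic identity, so the proof should be short and direct. The plan is to start from the definitions of $\widehat{DDP}_{\mathcal{S}}$ in \eqref{DDP_hat_def} (specifically its second form, written via the indicator function) and $\widetilde{DDP}_{\mathcal{S}}(\phi)$ in \eqref{DDP_estim}, and simply subtract the two expressions term by term.

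Concretely, first I would write
\begin{align}
    \widehat{DDP}_{\mathcal{S}} &= \frac{\sum_{(\mathbf{x},y) \in \mathcal{N}_{1a}\cup\mathcal{N}_{0a}}\mathbbm{1}_{d_\theta(\mathbf{x})>0}}{N_{1a}+N_{0a}} - \frac{\sum_{(\mathbf{x},y)\in\mathcal{N}_{1b}\cup\mathcal{N}_{0b}}\mathbbm{1}_{d_\theta(\mathbf{x})>0}}{N_{1b}+N_{0b}}, \notag \\
    \widetilde{DDP}_{\mathcal{S}}(\phi) &= \frac{\sum_{(\mathbf{x},y) \in \mathcal{N}_{1a}\cup\mathcal{N}_{0a}}\phi(d_\theta(\mathbf{x}))}{N_{1a}+N_{0a}} - \frac{\sum_{(\mathbf{x},y)\in\mathcal{N}_{1b}\cup\mathcal{N}_{0b}}\phi(d_\theta(\mathbf{x}))}{N_{1b}+N_{0b}}. \notag
\end{align}
Since the two expressions are sums over the same index sets with the same denominators, subtracting them and grouping the $z=+1$ terms (those over $\mathcal{N}_{1a}\cup\mathcal{N}_{0a}$) and the $z=-1$ terms (those over $\mathcal{N}_{1b}\cup\mathcal{N}_{0b}$) separately, then combining each numerator into a single sum, yields exactly \eqref{DDP_before}. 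The only facts being used are the linearity of finite sums and the distributivity of division over subtraction, so there is essentially nothing subtle here.

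The one point worth stating carefully (so it does not get glossed over) is that $\mathcal{N}_{1a}\cup\mathcal{N}_{0a}$ is exactly the set of training points with $z=+1$ and $\mathcal{N}_{1b}\cup\mathcal{N}_{0b}$ is exactly the set with $z=-1$, since the classes $\mathcal{N}_{1a},\mathcal{N}_{0a}$ (resp.\ $\mathcal{N}_{1b},\mathcal{N}_{0b}$) partition the $z=+1$ (resp.\ $z=-1$) points by the sign of $d_\theta(\mathbf{x})$; this is what guarantees the index sets in the two definitions genuinely coincide so that the term-by-term subtraction is legitimate. (The quantity $D_\theta(\mathbf{x})$ defined in the statement does not actually appear in \eqref{DDP_before}; it is introduced here only for use in later results, so I would not need it in this proof.)

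There is no real obstacle in this proof — it is a one-line rearrangement — so I would simply present the subtraction and the regrouping and be done. If anything, the only "work" is being careful with the bookkeeping of which sum goes with which denominator, which the matching structure of \eqref{DDP_hat_def} and \eqref{DDP_estim} makes automatic.
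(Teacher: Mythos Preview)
Your proposal is correct and matches the paper's own treatment: the proposition follows immediately by subtracting the definition of $\widetilde{DDP}_{\mathcal{S}}(\phi)$ in \eqref{DDP_estim} from the indicator-function form of $\widehat{DDP}_{\mathcal{S}}$ in \eqref{DDP_hat_def}, and the paper does not supply a separate proof for it (it is stated as a direct consequence of the definitions). Your side remark that $D_\theta(\mathbf{x})$ plays no role in \eqref{DDP_before} is also accurate; it is introduced here only for later use (e.g., in the restated form in Appendix~\ref{appendix:extension} and in the proof of Theorem~\ref{theorem_ineq}).
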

There is a surrogate-fairness gap 
between $\widehat{DDP}_{\mathcal{S}}$ and $\widetilde{DDP}_{\mathcal{S}}(\phi)$.
For $\widetilde{DDP}_{\mathcal{S}}(\phi)$, it can serve as a fairness constraint or regularization in the algorithm. The algorithm will automatically find a solution that penalizes large $\left|\widetilde{DDP}_{\mathcal{S}}(\phi)\right|$.
Unfortunately, it is different for the “gap”, which comes from the inherent difference between the indicator function and the fairness surrogate function. 
For the ideal case 
$\phi(x)=\mathbbm{1}_{x>0}$, which means that $\phi(d_\theta(\mathbf{x})) = \mathbbm{1}_{d_\theta(\mathbf{x})>0}$,
the gap is zero and reducing the constraint or regularization term $\left|\widetilde{DDP}_{\mathcal{S}}(\phi)\right|$ is equivalent to reducing $\left|\widehat{DDP}_{\mathcal{S}}\right|$.
However, for any surrogate function $\phi$, the gap will be inevitably introduced unless $\phi(x) = \mathbbm{1}_{x>0}$. When the surrogate-fairness gap is small enough, there is fairness guarantee for the classifier. 
In practice, Figure~\ref{motiv:gap} shows the surrogate-fairness gap of different fairness surrogate functions, including CP (which is equivalent to linear surrogate function)~\citep{cov}, hinge~\citep{two_surrogate_www_2019}, log-sigmoid as well as sigmoid~\citep{nips_sigmoid}, and our general sigmoid surrogate. It suggests that unbounded surrogate functions tend to exhibit a larger surrogate-fairness gap. The bounded surrogate functions, such as sigmoid and general sigmoid, both exhibit a bounded ``gap''.

\subsection{Instability} \label{sec:instability}

The variance of $\widetilde{DDP}_{\mathcal{S}} \left( \phi \right)$ is $Var \left(\widetilde{DDP}_{\mathcal{S}}(\phi) \right) = \mathbb{E}\left(\widetilde{DDP}_{\mathcal{S}}(\phi)\right)^2 - \left[\mathbb{E}\left(\widetilde{DDP}_{\mathcal{S}}(\phi)\right)\right]^2$.
If we choose bounded surrogate $\phi(x) \in [0,1]$, then $\widetilde{DDP}_{\mathcal{S}}(\phi) \in [-1,1]$, which means that $Var\left(\widetilde{DDP}_{\mathcal{S}}(\phi)\right) \in [0,1]$.
Therefore, there is an stability guarantee for $\widetilde{DDP}_{\mathcal{S}}(\phi)$ if $\phi(x) \in [0,1]$.
However, if we choose unbounded surrogate function (such as $\phi(x)=x \in [-\infty, +\infty]$ for the original CP), the resulting values of $\phi(x)$ are not constrained within the range $[0,1]$.
Therefore, we cannot conclude that $\widetilde{DDP}_{\mathcal{S}}(\phi) \in [-1,1]$.
Consequently, we also cannot conclude that $Var\left(\widetilde{DDP}_{\mathcal{S}}(\phi)\right) \in [0,1]$.
As a result, there is no longer a stability guarantee for $\widetilde{DDP}_{\mathcal{S}}(\phi)$.

To summarize the aforementioned problems, incorporating $\widetilde{DDP}_{\mathcal{S}}(\phi)$ into fairness regularization and constraints to indirectly minimize $\widehat{DDP}_{\mathcal{S}}$ may encounter difficulties for two reasons. Firstly, due to the existence of the surrogate-fairness gap, minimizing $\widetilde{DDP}_{\mathcal{S}}(\phi)$ is not equivalent to minimizing $\widehat{DDP}_{\mathcal{S}}$. Secondly, if unbounded surrogate functions are employed, the uncontrollable variance of $\widetilde{DDP}_{\mathcal{S}}(\phi)$ makes it even more challenging to be an appropriate estimator of $\widehat{DDP}_{\mathcal{S}}$.



\subsection{The Large Margin Points} \label{outlier_section}
In this section, we emphasize the trouble of large margin points, which may influence the surrogate-fairness gap. 
In this paper, the points with too large $D_\theta(\mathbf{x})$ are called \textit{large margin points} and others are normal points. Unfortunately, we regret to assert that these large margin points may simultaneously worsen the first two issues mentioned above.
To illustrate, we take CP (linear surrogate $\phi(x)=x$) as an example. For three famous real-world data sets, Adult \citep{Adult}, COMPAS \citep{COMPAS} and Bank Marketing \citep{Bank}, we provide the boxplot of $d_\theta(\mathbf{x})$ in the test set in Figure \ref{boxplot}. The experimental details are in Appendix \ref{details_boxplot}. 
There are three main observations in Figure \ref{boxplot}: ($\expandafter{\romannumeral1}$) Most of the points are near the decision boundary. ($\expandafter{\romannumeral2}$) Over 5\% points are large margin points for Adult and COMPAS. ($\expandafter{\romannumeral3}$) Almost all the large margin points are predicted as positive class. 

Firstly, for the surrogate-fairness gap problem, the gap in Equation~\eqref{theorem_1} may be amplified in the presence of such large margin points.
For example, Figure~\ref{boxplot} shows that most of the large margin points are predicted positive, so there is not a tight bound for $\sum_{(\mathbf{x} ,y) \in\mathcal{N}_{1a}}d_\theta(\mathbf{x})$ and $\sum_{(\mathbf{x} ,y) \in\mathcal{N}_{1b}}d_\theta(\mathbf{x})$.
Also, Figure~\ref{boxplot} suggests that the points with negative prediction exhibit a relatively smaller $|d_\theta(\mathbf{x})|$ comparing to those points with positive prediction. 
Thus, $\sum_{(\mathbf{x} ,y) \in\mathcal{N}_{0a}}d_\theta(\mathbf{x})$ and $\sum_{(\mathbf{x} ,y) \in\mathcal{N}_{0b}}d_\theta(\mathbf{x})$ are bounded (for instance, In Figure \ref{boxplot b}, we have $\left| \sum_{(\mathbf{x} ,y) \in\mathcal{N}_{0a}}d_\theta(\mathbf{x}) \right| \le 2N_{0a}$). 
Therefore, there is still not a tight bound for $\sum_{(\mathbf{x} ,y) \in\mathcal{N}_{1a} \cup \mathcal{N}_{0a}}d_\theta(\mathbf{x})$ and $\sum_{(\mathbf{x} ,y) \in\mathcal{N}_{1b}\cup\mathcal{N}_{0b}}d_\theta(\mathbf{x})$, which may lead to a large surrogate-fairness gap in Equation~\eqref{theorem_1}.
Finally, if the surrogate-fairness gap becomes large, constraining the fairness surrogate function is inconsistent with the specific fairness definition, which may lead to unfair result. 

Secondly, regarding the instability issue, while the majority of points are close to the decision boundary, a small number of large margin points contribute to the increased variance of $d_\theta(\mathbf{x})$, thereby influencing both $\widetilde{DDP}_{\mathcal{S}}(\phi)$ and $Var\left(\widetilde{DDP}_{\mathcal{S}}(\phi)\right)$. The presence of large margin points, along with the use of an unbounded surrogate function, surpasses the constraint on $Var\left(\widetilde{DDP}_{\mathcal{S}}(\phi)\right)$ and may result in unstable fairness guidance for the classifier. These analytical insights above will be further validated through our experiments.




\begin{figure*}[t] 
\centering 
\subfigure[Adult.]{
\includegraphics[width=0.31\textwidth]{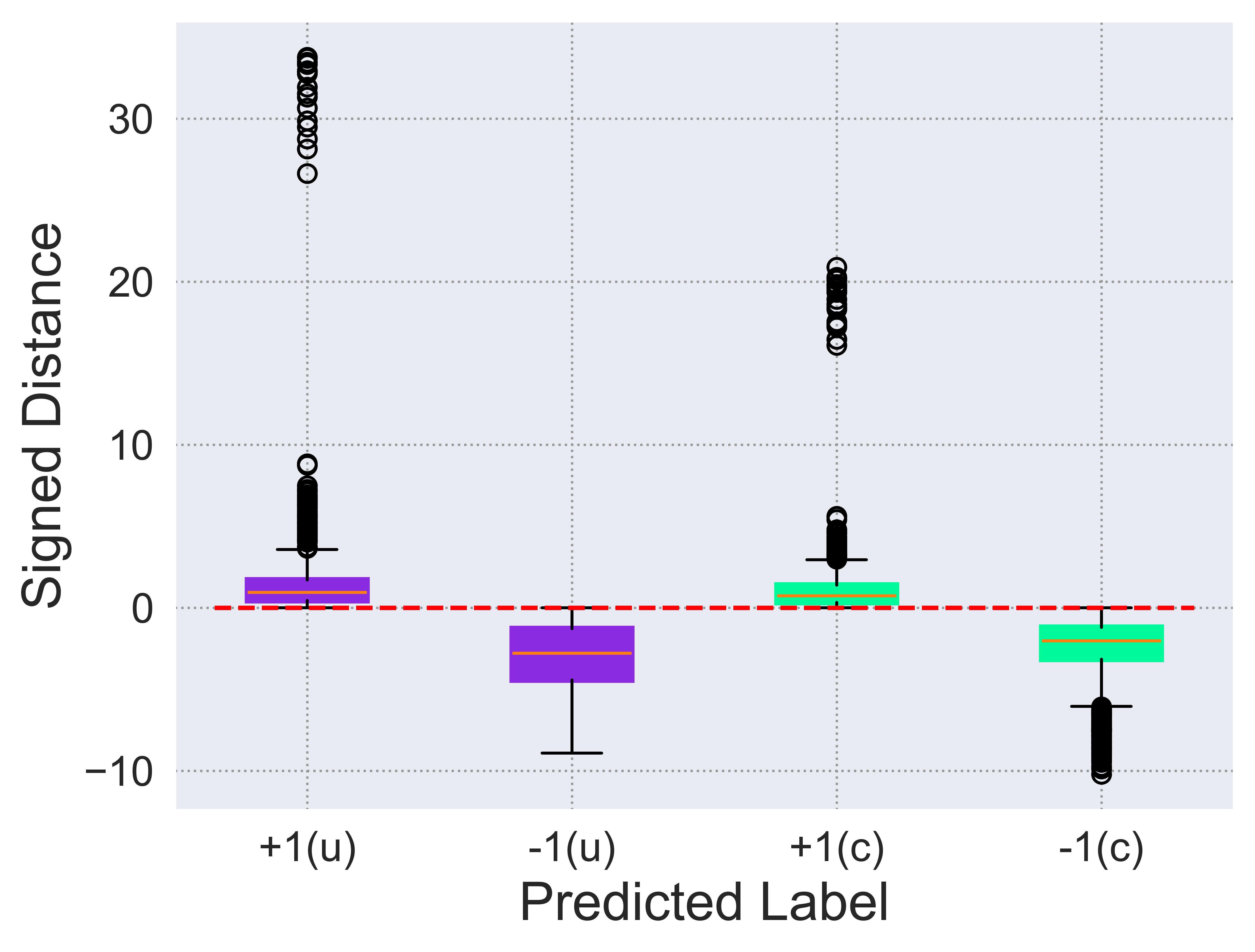}
\label{boxplot a}
}
\subfigure[COMPAS.]{
\includegraphics[width=0.31\textwidth]{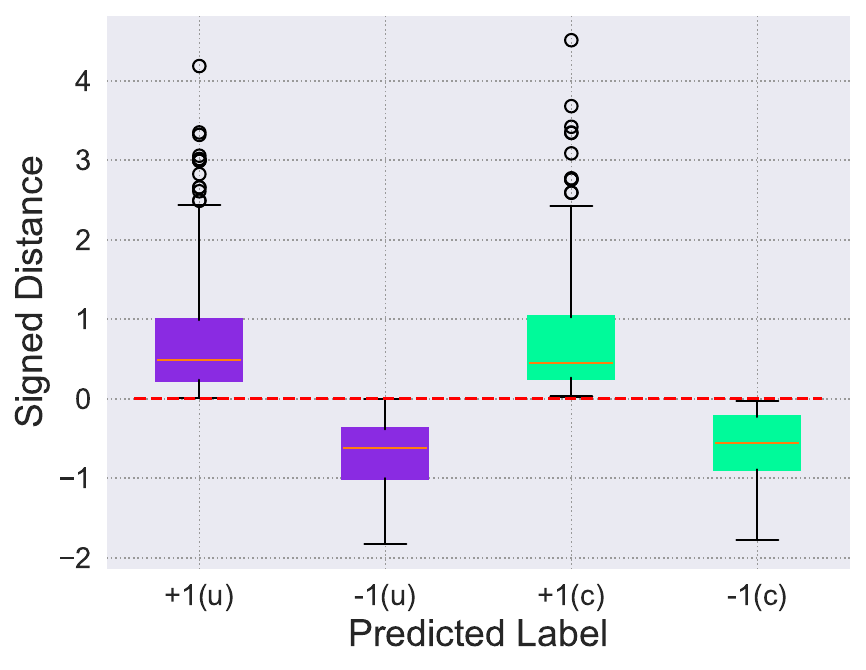}
\label{boxplot b}
}
\subfigure[Bank.]{
\includegraphics[width=0.31\textwidth]{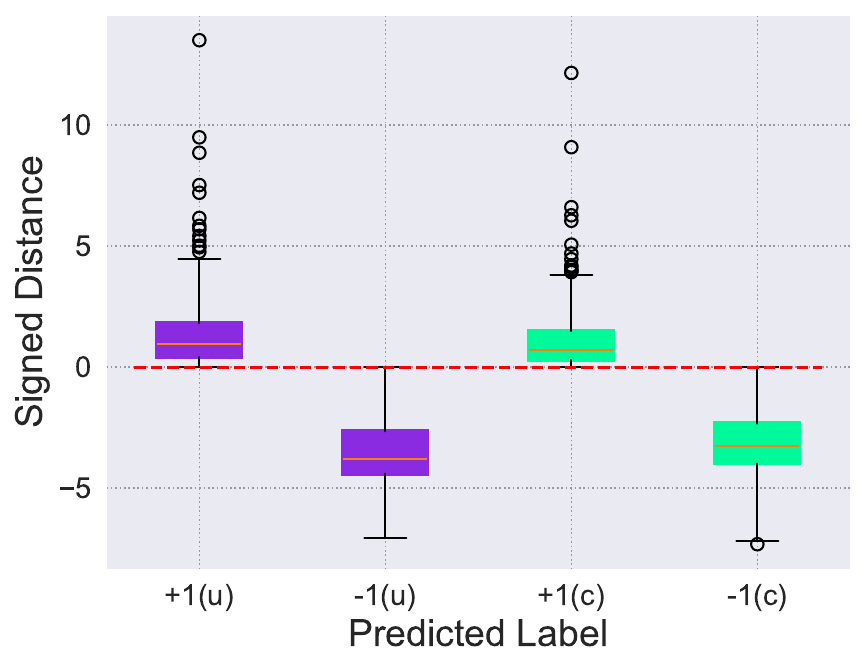}
\label{boxplot c}
}
\caption{The boxplot for the unconstrained logistic classifier (u) and logistic classifier with fairness constraints using linear surrogate (c) in three datasets. $+1$ and $-1$ represent the predicted label. The red dashed line means $d_\theta(\mathbf{x})=0$. The orange line in the box is the median. The circles outside the box are large margin points.
The rates of large margin points 
are 7.61\%, 5.12\% and 0.82\% respectively. }



\label{boxplot} 
\end{figure*}


\section{Our Approach} 
\label{mapping}
We devise the general sigmoid surrogate function with fairness and stability guarantees in Section \ref{general_sigmoid_section}. The theory suggests that addressing the large margin points issue and obtaining a more balanced dataset contribute to a fairer classifier.
Then 
we present our balanced surrogates in Section \ref{balanced_surrogates}, which is a novel iterative approach to reduce the “gap” and thus improving fairness.

\subsection{General Sigmoid Surrogates} \label{general_sigmoid_section}
We generalize sigmoid function as
\begin{equation} \label{general_sigmoid}
    G(x)=\sigma(wx),
\end{equation}
where $\sigma(x)$ is the sigmoid function and $w>0$ is the parameter.
The general sigmoid surrogate is flexible because of the adjustable $w$. Moreover, of paramount importance, it achieves a much lower surrogate-fairness gap, making it more consistent with DP, which is shown in Figure \ref{motiv:gap}. Additionally, it enjoys stability guarantees as its values fall within the range $[0,1]$, ensuring that $Var\left(\widetilde{DDP}_{\mathcal{S}}(G)\right) \in [0,1]$.

\subsubsection{Fairness Guarantees} 

The following Theorem \ref{theorem_ineq} provides the upper bound of $\left| \widehat{DDP}_{\mathcal{S}} \right|$ when 
$G(D_\theta(\mathbf{x}))$ is close to $1$ for all the points under the $\widetilde{DDP}_{\mathcal{S}}(\phi)$ fairness constraint. 

\begin{theorem} \label{theorem_ineq}
We assume that $G(D_\theta(\mathbf{x})) \in [1-\gamma, 1] $, where 
$\gamma>0$. $\forall \epsilon > 0$, if $\left| \widetilde{DDP}_{\mathcal{S}}(G) \right| \le \epsilon$, then it holds:
\begin{align}
    \left| \widehat{DDP}_{\mathcal{S}} \right| \le \frac{1}{2}\epsilon+\gamma.
\end{align}
\end{theorem}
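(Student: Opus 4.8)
The plan is to start from the exact identity in Proposition~\ref{DDP_Cov_theorem}, which expresses the surrogate-fairness gap $\widehat{DDP}_{\mathcal{S}} - \widetilde{DDP}_{\mathcal{S}}(G)$ as a difference of two averages of the quantity $\mathbbm{1}_{d_\theta(\mathbf{x})>0} - G(d_\theta(\mathbf{x}))$ over the two sensitive groups. First I would bound the per-point term $|\mathbbm{1}_{d_\theta(\mathbf{x})>0} - G(d_\theta(\mathbf{x}))|$ uniformly. The key observation is that since $G(x)=\sigma(wx)$ with $w>0$, we have $G(d_\theta(\mathbf{x})) \in (0,1)$ always, and moreover $G$ is related to $G$ evaluated at the magnitude $D_\theta(\mathbf{x})$ by the sigmoid symmetry $\sigma(-t) = 1 - \sigma(t)$: when $d_\theta(\mathbf{x}) > 0$ we have $G(d_\theta(\mathbf{x})) = G(D_\theta(\mathbf{x})) \ge 1-\gamma$, so $|\mathbbm{1}_{d_\theta(\mathbf{x})>0} - G(d_\theta(\mathbf{x}))| = 1 - G(D_\theta(\mathbf{x})) \le \gamma$; when $d_\theta(\mathbf{x}) \le 0$ we have $G(d_\theta(\mathbf{x})) = 1 - G(D_\theta(\mathbf{x})) \le \gamma$, so again $|\mathbbm{1}_{d_\theta(\mathbf{x})>0} - G(d_\theta(\mathbf{x}))| = G(d_\theta(\mathbf{x})) \le \gamma$. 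Hence the per-point term is bounded by $\gamma$ in absolute value regardless of the sign of the margin.

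Next I would push this through the averages: each of the two group-averaged sums in \eqref{DDP_before} is an average of terms each bounded by $\gamma$, so each average lies in $[-\gamma,\gamma]$, and therefore the surrogate-fairness gap satisfies $|\widehat{DDP}_{\mathcal{S}} - \widetilde{DDP}_{\mathcal{S}}(G)| \le 2\gamma$. Combining with the triangle inequality and the hypothesis $|\widetilde{DDP}_{\mathcal{S}}(G)| \le \epsilon$ gives $|\widehat{DDP}_{\mathcal{S}}| \le \epsilon + 2\gamma$. However, this is weaker than the claimed bound $\tfrac12\epsilon + \gamma$, so I expect the actual argument needs a sharper accounting — most likely the theorem implicitly uses a stronger structural fact, e.g. that under the constraint the two group sums of $G(d_\theta(\mathbf{x}))$ are each controlled (not just their difference), or that $\widetilde{DDP}_{\mathcal{S}}(G)$ and the per-group error terms interact so that the factor of $2$ on $\epsilon$ is halved and the $\gamma$ terms do not both contribute with the same sign. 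I would look for a decomposition where $\widehat{DDP}_{\mathcal{S}}$ is written as $\tfrac12$ times a sum of two terms each of which is separately bounded, e.g. writing the positive-prediction rate in group $a$ as a perturbation of $\tfrac12(1 + \text{something})$, so that the coefficient $\tfrac12$ appears naturally.

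The main obstacle, then, is recovering the precise constants $\tfrac12$ and $\gamma$ rather than the naive $1$ and $2\gamma$. The resolution likely hinges on exploiting that each group's average of $G(d_\theta(\mathbf{x}))$ is itself a frequency-like quantity lying in $[1-\gamma,1]$ on the positively-predicted points and in $[0,\gamma]$ on the negatively-predicted ones, so that $\widetilde{DDP}_{\mathcal{S}}(G)$ already encodes $\widehat{DDP}_{\mathcal{S}}$ up to an additive $\gamma$ with a favorable averaging that produces the $\tfrac12$; I would therefore first rewrite $\widetilde{DDP}_{\mathcal{S}}(G) = \widehat{DDP}_{\mathcal{S}} - (\text{gap})$ and then, rather than bounding the gap crudely, split it as gap $= A_a - A_b$ where $A_a, A_b$ are the two group error averages, bound each by $\gamma$, and carefully combine $|\widehat{DDP}_{\mathcal{S}}| = |\widetilde{DDP}_{\mathcal{S}}(G) + A_a - A_b|$ using any sign/magnitude structure forced by the constraint to arrive at $\tfrac12\epsilon + \gamma$. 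Verifying that this structure indeed yields the stated constants is the crux of the proof.
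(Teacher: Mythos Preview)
Your pointwise bound $|\mathbbm{1}_{d_\theta(\mathbf{x})>0}-G(d_\theta(\mathbf{x}))|\le\gamma$ and the resulting $\epsilon+2\gamma$ are correct; the obstacle you identify is real. The paper's route to the $\tfrac12$ is \emph{not} via Proposition~\ref{DDP_Cov_theorem}: it first rewrites $\widetilde{DDP}_{\mathcal{S}}(\phi)$ by splitting each group sum according to the sign of $d_\theta(\mathbf{x})$,
\[
\widetilde{DDP}_{\mathcal{S}}(\phi)
= \frac{\sum_{\mathcal{N}_{1a}}\phi(D_\theta(\mathbf{x}))-\sum_{\mathcal{N}_{0a}}\phi(D_\theta(\mathbf{x}))}{N_{1a}+N_{0a}}
 -\frac{\sum_{\mathcal{N}_{1b}}\phi(D_\theta(\mathbf{x}))-\sum_{\mathcal{N}_{0b}}\phi(D_\theta(\mathbf{x}))}{N_{1b}+N_{0b}},
\]
introduces residuals $T_{1a}=\sum_{\mathcal{N}_{1a}}(\phi(D_\theta(\mathbf{x}))-1)$ etc., and algebraically isolates $\tfrac{N_{1a}-N_{0a}}{N_{1a}+N_{0a}}-\tfrac{N_{1b}-N_{0b}}{N_{1b}+N_{0b}}=2\,\widehat{DDP}_{\mathcal{S}}$; bounding the $T$-error by $2\gamma$ then yields $2|\widehat{DDP}_{\mathcal{S}}|\le\epsilon+2\gamma$. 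This is precisely the ``$\tfrac12(1+\text{something})$'' trick you were reaching for.

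However, the displayed sign-split requires $\phi(-D_\theta(\mathbf{x}))=-\phi(D_\theta(\mathbf{x}))$, i.e.\ $\phi$ odd. The general sigmoid $G(x)=\sigma(wx)$ instead satisfies $G(-t)=1-G(t)$, so the identity fails for $\phi=G$ as defined. In fact the constant $\tfrac12\epsilon+\gamma$ is not achievable for this $G$: take $N_{1a}=N_{0b}=1$, $N_{0a}=N_{1b}=0$, with $G(D_\theta(\mathbf{x}))=1-\gamma$ at both points; then $\widehat{DDP}_{\mathcal{S}}=1$ while $\widetilde{DDP}_{\mathcal{S}}(G)=(1-\gamma)-\gamma=1-2\gamma$, so with $\epsilon=1-2\gamma$ the claimed bound gives $\tfrac12$, contradicting $\widehat{DDP}_{\mathcal{S}}=1$. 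The paper's derivation is tacitly working with the odd variant $2\sigma(wx)-1$ (a substitution made explicit in a later appendix proof); since $\widetilde{DDP}_{\mathcal{S}}(2\sigma(w\cdot)-1)=2\,\widetilde{DDP}_{\mathcal{S}}(G)$ and the assumption $G(D_\theta)\in[1-\gamma,1]$ translates to $2G(D_\theta)-1\in[1-2\gamma,1]$, applying the odd-surrogate argument and translating back recovers exactly your $\epsilon+2\gamma$. Your inability to halve the constants therefore reflects an inconsistency in the paper's statement/proof, not a gap in your reasoning.
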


The proof can be found in Appendix \ref{Proof_of_bounded}. The first term is similar to that in \eqref{DDP_before}. Now with the assumption $G(D_\theta(\mathbf{x})) \in [1-\gamma, 1]$, the gap here can be limited to a small range of variation. If the general sigmoid surrogates limit $G(D_\theta(\mathbf{x}))$ to around $1$ so that $\gamma$ is small enough, then there are fairness guarantees for the classifier. In contrast, the gap for CP is influenced by the magnitude of $D_\theta(\mathbf{x})$ for every large margin point and thus hard to be bounded.

\begin{remark}
In Appendix \ref{appendix:Deeper Understanding of the Covariance Proxy}, considering CP, we provide extensions of Theorem \ref{theorem_ineq} to other five fairness definitions: three kinds of disparate mistreatment (Theorem \ref{appendix: theorem_begin}-\ref{appendix:theorem_end} in Appendix \ref{appendix:Deeper Understanding of the Covariance Proxy}) and balance for positive (negative) class (Theorem \ref{prove_for_balance_for_positive_class}-\ref{proof_for_neg_class} in Appendix \ref{relation_to_balance_for_pos_neg_class}). In particular, CP is generalized to disparate mistreatment \citep{cov_2}, and we theoretically devise the form of CP to better meet with disparate mistreatment, which is empirically validated in \citep{cov_mix}. 
\end{remark}

However, in some cases, we do not need to guarantee that the assumption $G(D_\theta(\mathbf{x})) \in [1-\gamma, 1]$ holds for all the points. So the theorem below relaxes the assumption by giving the upper bound of $\left| \widehat{DDP}_{\mathcal{S}} \right|$ when most of the points satisfy the assumption in Theorem \ref{theorem_ineq}.

\begin{theorem} \label{few_violated}
We assume that $k$ points satisfy $G(D_\theta(\mathbf{x})) \in [0,1-\gamma]$ and others satisfy $G(D_\theta(\mathbf{x})) \in [1-\gamma, 1]$, where $\gamma>0$. $\forall \epsilon > 0$, if $\left| \widetilde{DDP}_{\mathcal{S}}(G) \right| \le \epsilon$, then it holds:

\begin{align}
    \left| \widehat{DDP}_{\mathcal{S}} \right| \le \frac{1}{2} \epsilon+\gamma + \underbrace{\frac{1}{2} \left(\frac{1}{N_{1a}+N_{0a}}+\frac{1}{N_{1b}+N_{0b}}\right) k}_{\text{relaxation factor}}.
\end{align}
\end{theorem}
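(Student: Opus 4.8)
The plan is to mirror the argument behind Theorem~\ref{theorem_ineq}, but carve out the $k$ exceptional points as a separate error term. I would start from the surrogate-fairness gap identity of Proposition~\ref{DDP_Cov_theorem} applied with $\phi=G$, writing $\widehat{DDP}_{\mathcal{S}} = \widetilde{DDP}_{\mathcal{S}}(G) + \mathrm{Gap}$ where $\mathrm{Gap}$ is the right-hand side of~\eqref{DDP_before}. Since $|\widetilde{DDP}_{\mathcal{S}}(G)|\le\epsilon$ is assumed, the triangle inequality reduces the whole statement to an upper bound on $|\mathrm{Gap}|$.

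The key structural input is the symmetry of the general sigmoid. Because $\sigma(-t)=1-\sigma(t)$, for every $\mathbf{x}$ we have $G(d_\theta(\mathbf{x}))=G(D_\theta(\mathbf{x}))$ when $d_\theta(\mathbf{x})>0$ and $G(d_\theta(\mathbf{x}))=1-G(D_\theta(\mathbf{x}))$ when $d_\theta(\mathbf{x})\le 0$, hence $\mathbbm{1}_{d_\theta(\mathbf{x})>0}-G(d_\theta(\mathbf{x}))=\sign(d_\theta(\mathbf{x}))\,\bigl(1-G(D_\theta(\mathbf{x}))\bigr)$, with $0\le 1-G(D_\theta(\mathbf{x}))\le \tfrac12$ always (the upper bound because $w>0$ and $D_\theta(\mathbf{x})\ge 0$ force $G(D_\theta(\mathbf{x}))=\sigma(wD_\theta(\mathbf{x}))\ge\tfrac12$). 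For the points obeying the assumption of Theorem~\ref{theorem_ineq} one additionally has $1-G(D_\theta(\mathbf{x}))\le\gamma$, and this is exactly the estimate that produces the $\tfrac12\epsilon+\gamma$ bound there.

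Third, I would split each of the two per-group sums appearing in~\eqref{DDP_before} into its conforming points and its exceptional points. Let $k_a$ and $k_b$ be the numbers of exceptional points lying in $\mathcal{N}_{1a}\cup\mathcal{N}_{0a}$ and in $\mathcal{N}_{1b}\cup\mathcal{N}_{0b}$ respectively; since these two sets partition $\mathcal{S}$, we have $k_a+k_b=k$. On the conforming part, bounding every term by $\gamma$ reproduces the computation of Theorem~\ref{theorem_ineq} verbatim and contributes $\tfrac12\epsilon+\gamma$. On the exceptional part, bounding each of the (at most $k_a$, resp.\ $k_b$) terms in absolute value by $\tfrac12$ gives an extra contribution of at most $\tfrac12\cdot\tfrac{1}{N_{1a}+N_{0a}}\,k_a$ from the first group and $\tfrac12\cdot\tfrac{1}{N_{1b}+N_{0b}}\,k_b$ from the second; using $k_a\le k$ and $k_b\le k$ and adding the two pieces yields the relaxation factor $\tfrac12\bigl(\tfrac{1}{N_{1a}+N_{0a}}+\tfrac{1}{N_{1b}+N_{0b}}\bigr)k$. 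Combining the conforming contribution, the exceptional contribution, and $|\widetilde{DDP}_{\mathcal{S}}(G)|\le\epsilon$ through the triangle inequality gives the claimed inequality.

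The main obstacle is the bookkeeping of constants rather than any conceptual difficulty: one must reuse the conforming-point estimate of Theorem~\ref{theorem_ineq} in exactly the same form (so that its contribution is genuinely $\tfrac12\epsilon+\gamma$), and insert the universal bound $1-G(D_\theta(\mathbf{x}))\le\tfrac12$ for the exceptional points at precisely the step where the $\tfrac12$ in the relaxation factor is born. A secondary subtlety is that the exceptional points must be apportioned between the two group-unions — they cannot all be charged against a single denominator — while the sign cancellation between positively- and negatively-predicted points inside each group-union is \emph{not} needed, so one may pass to absolute values term by term without loss.
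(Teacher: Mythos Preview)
Your sigmoid-symmetry identity and the idea of splitting each group-sum into conforming and exceptional points are both fine, but the bookkeeping you flag as ``the main obstacle'' is in fact not met: the constants do not come out as $\tfrac12\epsilon+\gamma$ with your decomposition.

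Concretely, your first step is $|\widehat{DDP}_{\mathcal S}|\le |\widetilde{DDP}_{\mathcal S}(G)|+|\mathrm{Gap}|\le \epsilon+|\mathrm{Gap}|$, and then you bound $|\mathrm{Gap}|$ term-by-term. With $r(\mathbf x)=1-G(D_\theta(\mathbf x))$ you get, for the $a$-group, $\bigl|\sum_{\mathcal N_{1a}}r-\sum_{\mathcal N_{0a}}r\bigr|/N_a\le \gamma+\tfrac{k_a}{2N_a}$, and similarly for the $b$-group, so $|\mathrm{Gap}|\le 2\gamma+\tfrac{k}{2}\bigl(\tfrac1{N_a}+\tfrac1{N_b}\bigr)$. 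The final bound is therefore $\epsilon+2\gamma+\tfrac{k}{2}(\tfrac1{N_a}+\tfrac1{N_b})$, off by a factor of two on both the $\epsilon$ and the $\gamma$ term. Saying that ``the conforming part reproduces the computation of Theorem~\ref{theorem_ineq} verbatim and contributes $\tfrac12\epsilon+\gamma$'' does not rescue this: once you have already separated $|\widetilde{DDP}_{\mathcal S}(G)|$ out via the triangle inequality, the $\epsilon$ is booked in full, and the conforming part of the gap alone contributes $2\gamma$, not $\gamma$.

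The paper's proof does \emph{not} start from Proposition~\ref{DDP_Cov_theorem} plus the triangle inequality. It reuses the intermediate inequality~\eqref{eq:theorem1_proof_eq0} established in the proof of Theorem~\ref{theorem_ineq},
\[
|\widehat{DDP}_{\mathcal S}|\;\le\;\tfrac12\Bigl(\delta+\Bigl|\tfrac{T_{1a}-T_{0a}}{N_{1a}+N_{0a}}+\tfrac{T_{0b}-T_{1b}}{N_{1b}+N_{0b}}\Bigr|\Bigr),
\qquad T_{\bullet}=\sum_{\mathcal N_{\bullet}}\bigl(G(D_\theta(\mathbf x))-1\bigr),
\]
where the global prefactor $\tfrac12$ comes from the identity $\tfrac{N_{1a}-N_{0a}}{N_a}-\tfrac{N_{1b}-N_{0b}}{N_b}=2\,\widehat{DDP}_{\mathcal S}$ obtained through the rewriting~\eqref{eq:cov_change}. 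Under the relaxed hypothesis the paper then shows $\bigl|\tfrac{T_{1a}-T_{0a}}{N_a}+\tfrac{T_{0b}-T_{1b}}{N_b}\bigr|\le 2\gamma+\bigl(\tfrac1{N_a}+\tfrac1{N_b}\bigr)k$, and the $\tfrac12$ in front converts this into $\gamma+\tfrac12(\tfrac1{N_a}+\tfrac1{N_b})k$. In short, to reach the stated constants you must start from~\eqref{eq:theorem1_proof_eq0} (or rederive an equivalent ``$2\,\widehat{DDP}$'' identity), not from the naive triangle-inequality split you propose.
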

The proof can be found in Appendix \ref{proof_of_few_violated}. Comparing Theorem \ref{theorem_ineq} with Theorem \ref{few_violated}, the relaxation of the assumption produces an extra relaxation factor. According to Theorem \ref{theorem_ineq}, if $w$ is large, then $\gamma$ can be small enough, thus leading to a classifier with fairness guarantee. But an arbitrarily too large $w$ makes training more challenging because of the diminished gradient magnitude for general sigmoid surrogate. Theorem \ref{few_violated} tells us that we need not to assume that all points satisfy $G(D_\theta(\mathbf{x})) \in [1-\gamma,1]$. Few points violating the assumption (a small $k$) can also be tolerated. So it supports the idea that we do not have to choose a large $w$ because a relatively small $w$ can also guarantee fairness. 


\subsubsection{Insights from the Theorems}

\paragraph{Large Margin Points Issue.}
An obvious takeaway of Theorem \ref{few_violated} is that addressing the large margin points issue makes $k$ lower and thus obtaining a tighter bound of $\left| \widehat{DDP}_{\mathcal{S}} \right|$. While \citet{nips_sigmoid} also explores the influence of outliers on the model under fairness constraints, their theoretical analysis in primarily centers on loss degeneracy under fairness constraints, whereas our paper aims to establish upper bounds concerning fairness. 


\paragraph{Balanced Dataset.}
Another interesting take-away of Theorem \ref{theorem_ineq}-\ref{few_violated} is that we need a \textit{balanced dataset} to obtain a fairer classifier. The approximately same number between two sensitive groups contributes to a tighter $\left| \widehat{DDP}_{\mathcal{S}} \right|$ upper bound. First of all, for Theorem \ref{theorem_ineq}, the coefficient of $\epsilon$ satisfies $\frac{N^2}{4(N_{1a}+N_{0a})(N_{1b}+N_{0b})} \ge 1.$
The equality holds if and only if $N_{1a}+N_{0a} = N_{1b}+N_{0b}$, which means that the two sensitive groups share the equal size. Therefore, a more balanced dataset will obtain a tighter $\left| \widehat{DDP}_{\mathcal{S}}\right|$ upper bound. 
Similarly, comparing to Theorem \ref{theorem_ineq}, we discover that those $k$ points in Theorem \ref{few_violated} relaxed the original bound by a relaxation factor. The coefficient of the relaxation factor $\frac{1}{2} \left(\frac{N}{N_{1a}+N_{0a}}+\frac{N}{N_{1b}+N_{0b}}\right) \ge 2$, and the equality holds if and only if $N_{1a}+N_{0a} = N_{1b}+N_{0b}$. So, if we use a balanced dataset, then $N_{1a}+N_{0a}$ and $N_{1b}+N_{0b}$ are close to each other, thus making the fairness bounded tighter.
%
The imbalance issue also applies to other surrogates and one can balance the dataset in advance to achieve better fairness performance.

Notably, certain theoretical investigations illuminate the positive impact of \textit{a balanced dataset} on fostering fairness in machine learning.
For example, the impossibility theorem~\citep{fairness_impossible_theorem_facct} in fairness literature states that, in the context of binary classification, equalizing some specific set of multiple common performance metrics between protected classes is impossible, except in two special cases: a perfect predictor and equal \textit{base rate}~\citep{fair_impossible_1,fair_impossible_2,fair_impossible_3}. Furthermore, the reduction of variation in group \textit{base rates} has been demonstrated to yield a diminished lower bound for separation gap and independence gap~\citep{fairness_lower_bound}. Moreover, minimizing the difference in \textit{base rates} results in a decreased lower bound for joint error across both sensitive groups~\citep{fairness_impossible}. In contrast, our Theorem \ref{few_violated} provides an elucidation from the perspective of \textit{upper bound}. It implies that a more balanced dataset results in a tighter upper bound on the violation of DP.

\begin{remark}
    In Appendix \ref{appendix:variance_analysis}, Theorem \ref{theorem:variance_upper_bound} shows that $Var\left(\widehat{DDP}_{\mathcal{S}}\right) \le \frac{1}{4}\left( \frac{1}{N_a} + \frac{1}{N_b} \right)$. Therefore, it also highlights the advantage of a balanced dataset in reducing variance.
\end{remark}



\subsection{Balanced Surrogates} \label{balanced_surrogates}

The naive fairness-aware training framework can be formulated as
\begin{equation} 
\label{original_opt_problem_after_mapping}
    \min \quad L(\theta,\mathbf{x},y) + \rho \cdot \widetilde{DDP}_{\mathcal{S}}(\phi),
\end{equation}
where $L(\theta,\mathbf{x},y)=\frac{1}{N} \sum_{(\mathbf{x},y) \in \mathcal{S}} \ell(\theta,\mathbf{x},y)$ is the empirical loss over the training set, $\ell$ is a convex loss function, $\widetilde{DDP}_{\mathcal{S}}(\phi)$ is the fairness regularization, and $\rho > 0$ is the coefficient. 
Recall in Proposition~\ref{DDP_Cov_theorem} that whether $\widetilde{DDP}_{\mathcal{S}}(\phi)$ is an appropriate estimation of $\widehat{DDP}_{\mathcal{S}}$ depends on $\phi$. And as stated in Proposition~\ref{DDP_Cov_theorem}, the fairness surrogate function $\phi$ directly affects the surrogate-fairness gap. Thus, the existence of such “gap” motivates us that it is still necessary to reduce “gap” to improve fairness.

Our balanced surrogates approach mitigates unfairness by treating different sensitive groups differently using a parameter being updated during training. The key idea of the updating procedure is \textit{making the magnitude of “gap” as small as possible}. It is a general plug-and-play learning paradigm for training framework using fairness surrogate functions like \eqref{original_opt_problem_after_mapping}, which is validated in the experiments.

Specifically, we consider different surrogates for two sensitive groups, i.e., 
\begin{equation} \label{balanced phi sensitive}
    \phi(d_\theta(\mathbf{x}_i))=\left\{\begin{matrix}
    \phi_1(d_\theta(\mathbf{x}_i)), & z_i=+1.\\ 
    \phi_2(d_\theta(\mathbf{x}_i)), & z_i=-1.
\end{matrix}\right.
\end{equation}

With \eqref{balanced phi sensitive}, we rewrite $\widetilde{DDP}_{\mathcal{S}}(\phi)$ as:
\begin{equation} \label{new_ddp_balanced based on sensitive}
    \widetilde{DDP}_{\mathcal{S}}(\phi) = \frac{\sum_{(\mathbf{x} ,y) \in \mathcal{N}_{1a} \cup \mathcal{N}_{0a}}\phi_1(d_\theta(\mathbf{x}))}{N_{1a}+N_{0a}}-\frac{\sum_{(\mathbf{x} ,y) \in \mathcal{N}_{1b} \cup \mathcal{N}_{0b}}\phi_2(d_\theta(\mathbf{x}))}{N_{1b}+N_{0b}},
\end{equation}


For simplicity, we assume
\begin{equation} \label{balance factor}
    \phi_2(x)=\lambda \phi_1(x), 
\end{equation}
where $\lambda \ge 0$ is the \textit{balance factor} to be updated to reduce the gap. Our objective is
\begin{equation} \label{fairness objective}
    \widehat{DDP}_{\mathcal{S}}-\widetilde{DDP}_{\mathcal{S}}(\phi)=0,
\end{equation}
which is equivalent to a surrogate-fairness gap of zero. Plug equations \eqref{DDP_hat_def}, \eqref{new_ddp_balanced based on sensitive} and \eqref{balance factor} into the equation \eqref{fairness objective}, we then solve for $\lambda$:
\begin{align} \label{result_balance_factor}
    \lambda = \frac{(N_{1b}+N_{0b})\sum_{(\mathbf{x} ,y) \in\mathcal{N}_{1a}\cup\mathcal{N}_{0a}}\phi_1(d_\theta(\mathbf{x}))-(N_{1a}N_{0b}-N_{0a}N_{1b})}{(N_{1a}+N_{0a})\sum_{(\mathbf{x} ,y) \in\mathcal{N}_{1b}\cup\mathcal{N}_{0b}}\phi_1(d_\theta(\mathbf{x}))}
\end{align}

In this way, we can first specify $\phi_1$ in \eqref{balanced phi sensitive}, initiate $\lambda$ as $\lambda_0$, and train an unconstrained classifier with the produced $\theta_0$ as the start point of the iteration. Then we iteratively solve \eqref{original_opt_problem_after_mapping} and compute \eqref{result_balance_factor} until convergence. In order to avoid oscillation and accelerate the convergence, we use exponential smoothing for $\lambda$:
\begin{equation} \label{exp_smoothing}
    \lambda_t=\left\{\begin{matrix}
    \lambda_0,  & t=0. \\
    \alpha \lambda^{\prime}_{t}+ (1-\alpha)\lambda_{t-1}, &t=1,2,\cdots,N.
\end{matrix}\right.
\end{equation}

Where $\lambda^{\prime}_{t}$ comes from \eqref{result_balance_factor} after $t$ iterations, $\lambda_t$ is the result of $\lambda^{\prime}_{t}$ after exponential smoothing and $0 \le \alpha \le 1$ is the smoothing factor. Notice that $\lambda \le 0$ is meaningless, so when this happens, we abandon this algorithm and set $\lambda_t$ to $1$, which recovers \eqref{original_opt_problem_after_mapping}. When the difference of $\lambda_t$ between two successive iterations is less than a termination threshold $\eta$, the algorithm is over. If we choose the smoothing factor $\alpha$ and the termination threshold $\eta$ properly, then the loop will terminate after a few runs. The algorithmic representation of the balanced surrogates can be found in Appendix \ref{pseudo_code_balance}.





%

\section{Experiments} \label{experiment}

\subsection{Experimental Setup}
\paragraph{Dataset.} We use three real-world datasets: Adult \citep{Adult}, Bank Marketing \citep{Bank} and COMPAS \citep{COMPAS}, which are commonly used in fair machine learning \citep{fairness_survey_1}. 

\begin{itemize}
    \item \textbf{Adult}. The Adult dataset contains 48842 instances and 14 attributes. The goal is predicting whether the income for a person is more than \$50,000 a year. We consider sex as the sensitive feature with values male and female. 
    \item \textbf{Bank}. The Bank Marketing dataset contains 41188 instances and 20 input features. The goal is predicting whether the client will subscribe a term deposit. We follow \citep{cov} and consider age as the binary sensitive attribute, which is discretized into the case whether the client’s age is between 25 and 60 years. 
    \item \textbf{COMPAS}. The COMPAS dataset was compiled to investigate racial bias in recidivism prediction. The goal is predicting whether a criminal defendant will be a recidivist in two years. We use only the subset of the data with sensitive attribute Caucasian or African-American.
\end{itemize}


\paragraph{Baseline.} In addition to an unconstrained logistic regression classifier (denoted as `Unconstrained'), we compare our general sigmoid surrogate (denoted as `General Sigmoid') with other four surrogate functions below, which have also appeared in Figure \ref{motiv:Surrogate_Functions}.
\begin{itemize}
    \item Linear surrogate (equivalent to CP) $\phi(x)=x$ \citep{cov} (denoted as `Linear').    \item Hinge-like surrogate $\phi(x)=\max{(x+1,0)}$~\citep{two_surrogate_www_2019} (denoted as `Hinge').
    \item Sigmoid surrogate $\phi(x)=\sigma(x)$ \citep{nips_sigmoid} (denoted as `Sigmoid').
    \item Log-sigmoid surrogate $\phi(x)=-\log{\sigma(-x)}$ \citep{nips_sigmoid} (denoted as `Log-Sigmoid').
\end{itemize}

\begin{figure*}[t] 
\centering 
\subfigure[Adult.]{
\includegraphics[width=0.31\textwidth]{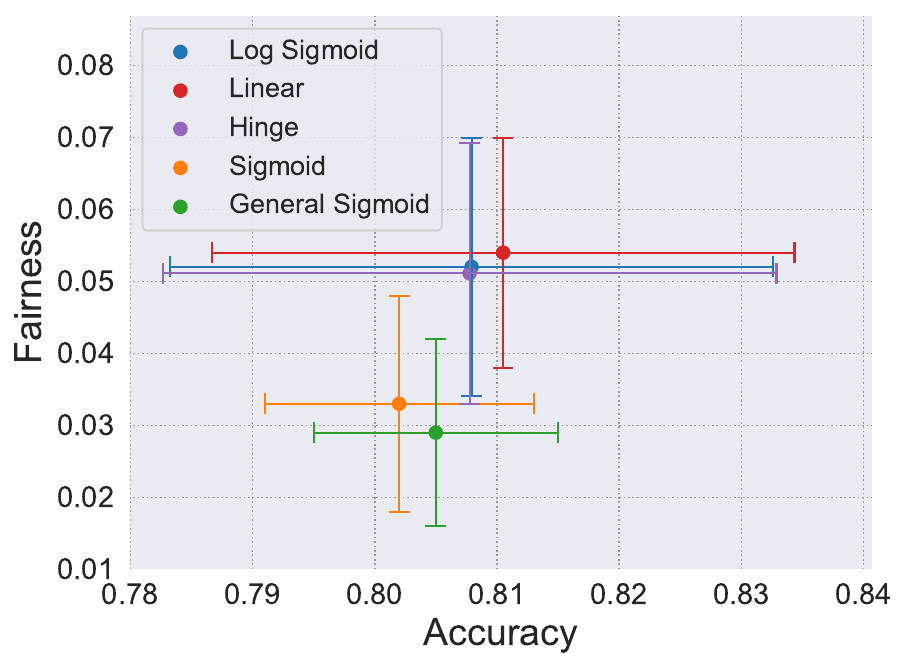}
\label{adult_main}
}
\subfigure[Bank.]{
\includegraphics[width=0.31\textwidth]{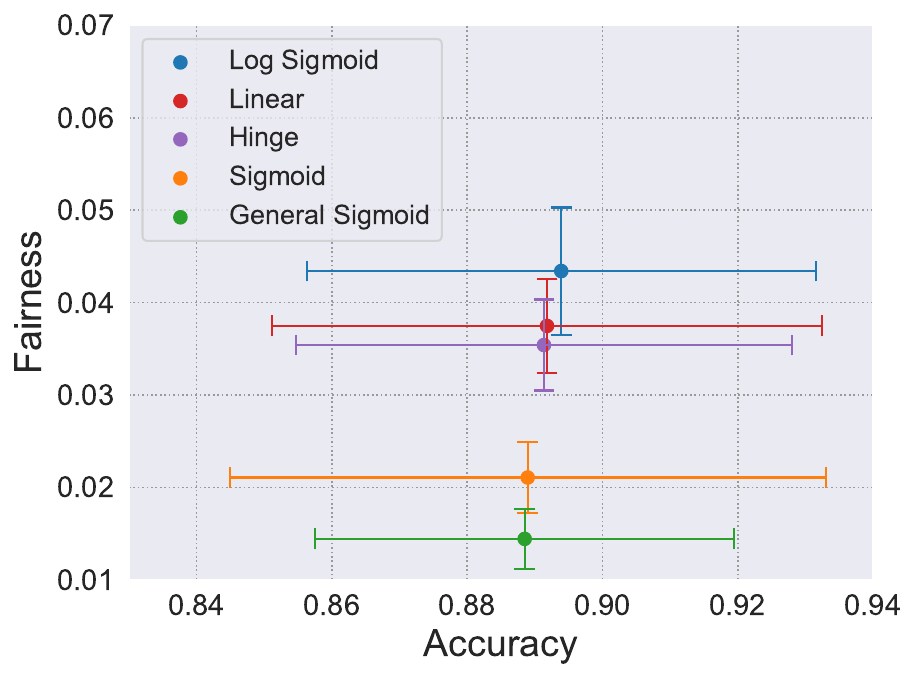}
\label{bank_main}
}
\subfigure[COMPAS.]{
\includegraphics[width=0.31\textwidth]{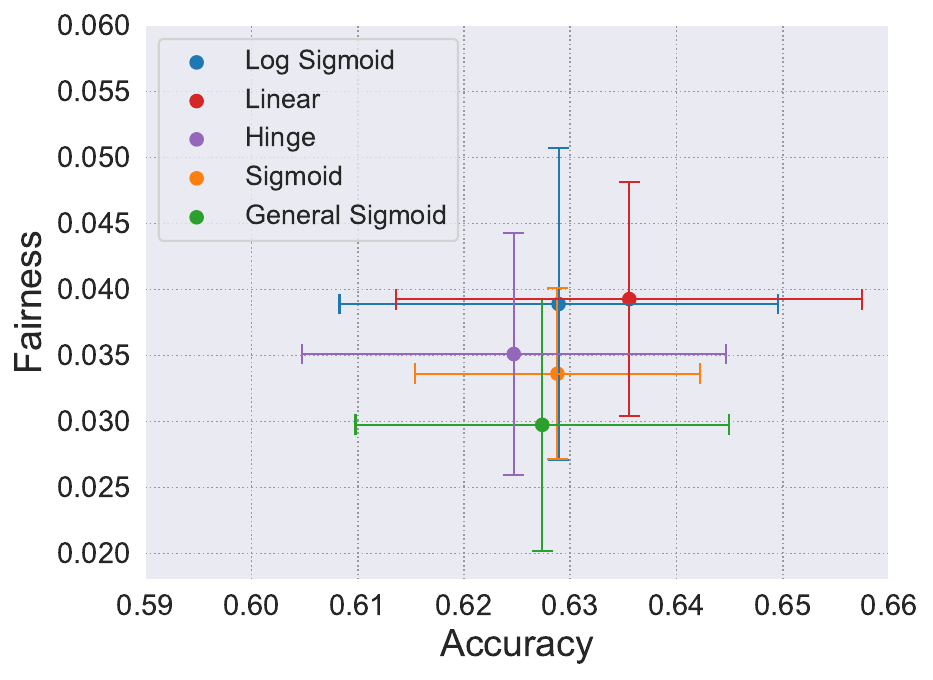}
\label{compas_main}
}
\caption{Results of different surrogate functions.}
\label{exp:main} 
\end{figure*}

\begin{figure*}[t]
\centering 
\subfigure[Adult.]{
\includegraphics[width=0.31\textwidth]{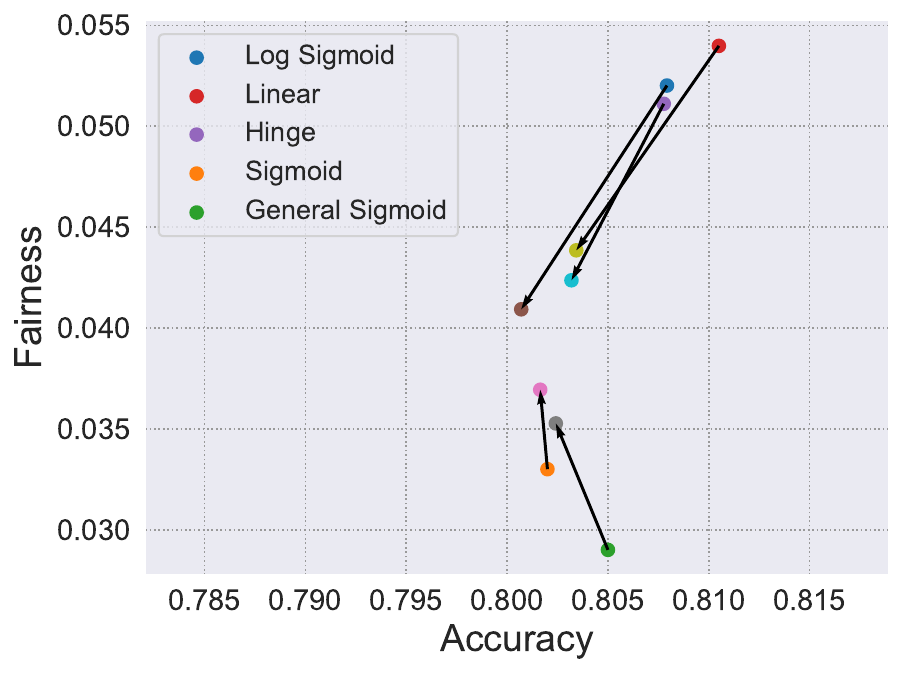}
\label{adult_balance}
}
\subfigure[Bank.]{
\includegraphics[width=0.31\textwidth]{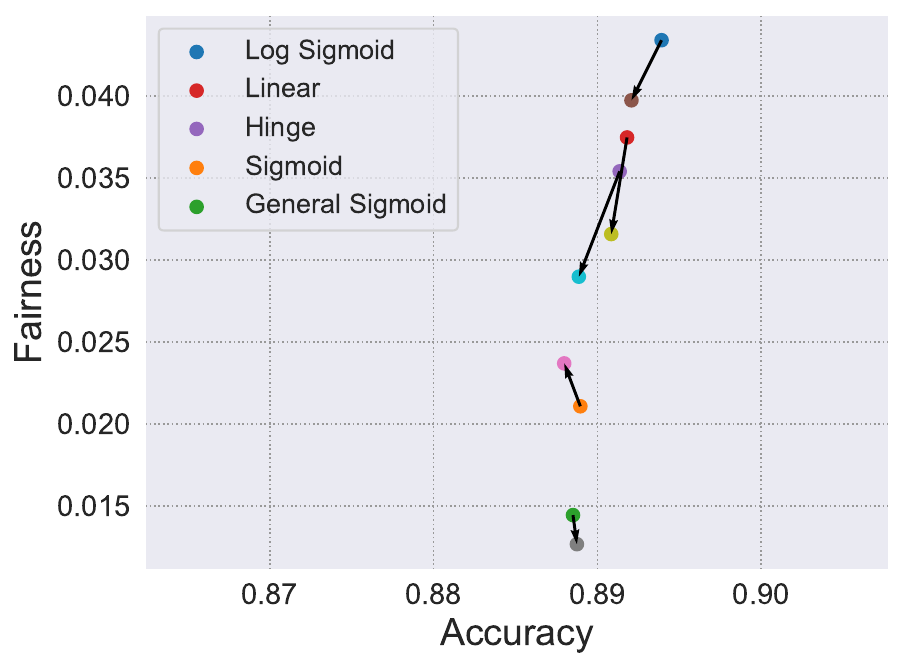}
\label{bank_balance}
}
\subfigure[COMPAS.]{
\includegraphics[width=0.31\textwidth]{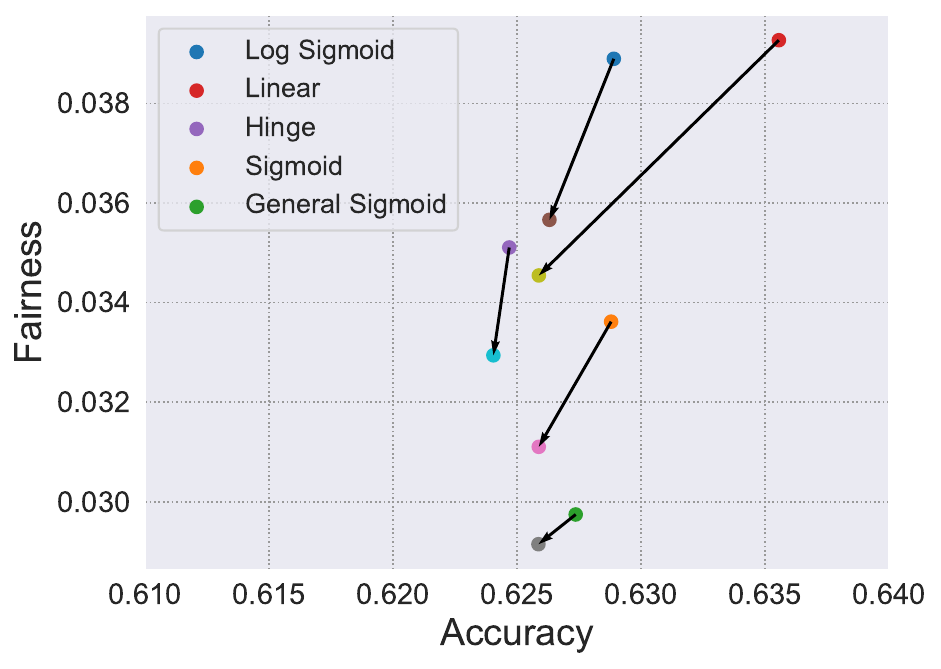}
\label{compas_balance}
}
\caption{Results of applying balanced surrogate to different surrogate functions. The arrow starts with the result of a surrogate and ends with the result of the same surrogate function using balanced surrogate method.}

\label{exp:balance} 
\end{figure*}

\subsection{Learning Fair Classifiers}



We conduct two main experiments. One is the comparison among general sigmoid surrogate and other surrogate functions on classification tasks. The other is validating the effect of balanced surrogates method by applying it to different surrogate functions. Following~\citet{nips_sigmoid}, a linear classifier is used as the base classifier. The dataset is randomly divided into training set (70\%), validation set (5\%) and test set (25\%). The parameter setting is discribed in Appendix \ref{sec:parameter}. We report two metrics on the test set: $\left|\widehat{DDP}_{\mathcal{S}}\right|$ (lower is better) and accuracy (higher is better), and standard deviation is shown for the metrics.







\subsection{Main Results} \label{result_analysis}
The results are in Figures \ref{exp:main}-\ref{exp:balance}. 
Refer to Appendix \ref{appendix:numerical_result} for specific numerical results of all three datasets.

\paragraph{General Sigmoid Surrogate.} In Figure \ref{exp:main}, on the one hand, we observe that the general sigmoid surrogate achieves better fairness than unbounded surrogate functions (Log sigmoid, Linear and Hinge), which indicates that the general sigmoid surrogate does effectively shorten the surrogate-fairness gap and therefore improve fairness. On the other hand, comparing to the sigmoid surrogate function, our proposed surrogate not only further reduces the gap, but it is also more flexible due to the parameter $w$. Moreover, it is intriguing to note that the variance of fairness and accuracy of general sigmoid surrogate is also comparatively smaller than other surrogate functions, demonstrating its superiority in terms of stability. It offers a simple solution to the long-standing high variance issue observed in existing fairness-aware algorithms~\citep{variance_fairness_2,variance_fairness_1}. We provide some theoretical analysis on variance to Appendix \ref{appendix:variance_analysis}. Exploring automated methods to search for a suitable parameter $w$ for improved fairness performance while reducing variance presents an intriguing avenue for future research.


\paragraph{Balanced Surrogates Method.} In Figure \ref{exp:balance}, we observe that balanced surrogates method succeeds in improving fairness of unbounded surrogate functions but sometimes slightly compromising fairness of bounded surrogate functions. For the reason of this phenomenon, fairness-aware algorithms aim to reduce $\left|\widehat{DDP}_{\mathcal{S}}\right|$.
Firstly, fairness regularization aims at lowering $\left|\widetilde{DDP}_{\mathcal{S}}(\phi)\right|$. Secondly, the key idea of balanced surrogates is to reduce the magnitude of “gap” $\left| \widehat{DDP}_{\mathcal{S}}-\widetilde{DDP}_{\mathcal{S}}(\phi) \right|$ thus indirectly lower $\left|\widehat{DDP}_{\mathcal{S}}\right|$ (because $\left|\widehat{DDP}_{\mathcal{S}}\right| \le \left|\widetilde{DDP}_{\mathcal{S}}(\phi)\right| + \left| \widehat{DDP}_{\mathcal{S}}-\widetilde{DDP}_{\mathcal{S}}(\phi) \right|$). 
However, an infinitesimal $\left|\widehat{DDP}_{\mathcal{S}}\right|$ is not always better. In Appendix \ref{risk_bound_section}, we show in Theorem \ref{risk_bound_ddp} that there is still a discrepancy between $\widehat{DDP}_{\mathcal{S}}$ and $DDP$, indicating that a small enough $\left|\widehat{DDP}_{\mathcal{S}}\right|$ is not equivalent to a small $|DDP|$. When the “gap” is large (such as the unbounded surrogate functions), balanced surrogates method can effectively reduce “gap” and achieves a fairer result. 
But when the “gap” is limited (such as sigmoid and general sigmoid), 
an infinitesimal magnitude of “gap” may sometimes undermine fairness instead. 
Interestingly, similar to the stability enhancement observed in general sigmoid surrogate, the numerical results in Appendix \ref{appendix:numerical_result} also show that our balanced surrogates method attains smaller variance compared to other surrogate functions. 
Incorporating mechanisms such as the balanced surrogate method into existing fairness-aware algorithms to enhance both fairness and stability is also a promising direction.

\begin{figure*}[t]
\centering
\subfigure[Adult.]{
\includegraphics[width=0.31\textwidth]{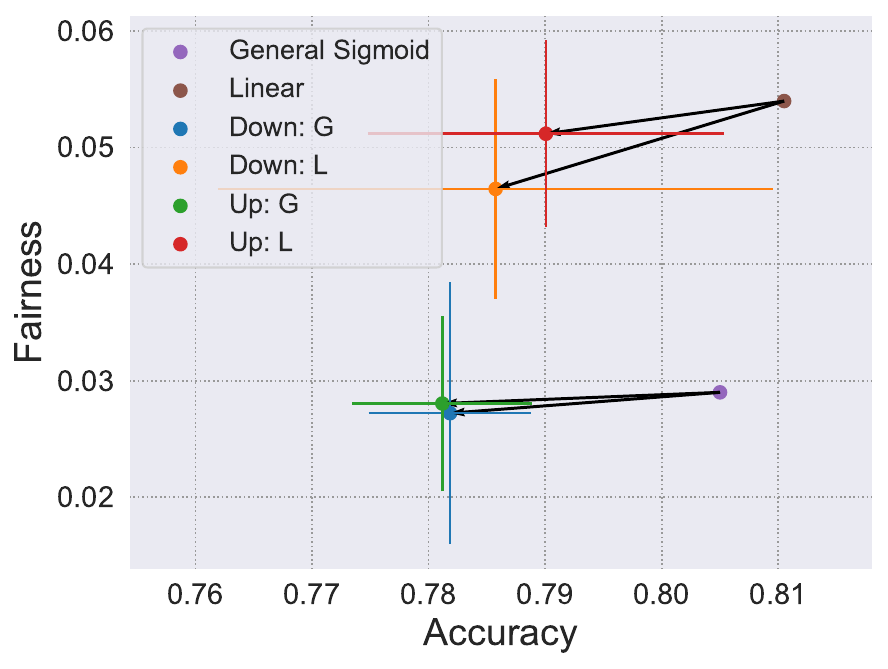}
}
\subfigure[Bank.]{
\includegraphics[width=0.31\textwidth]{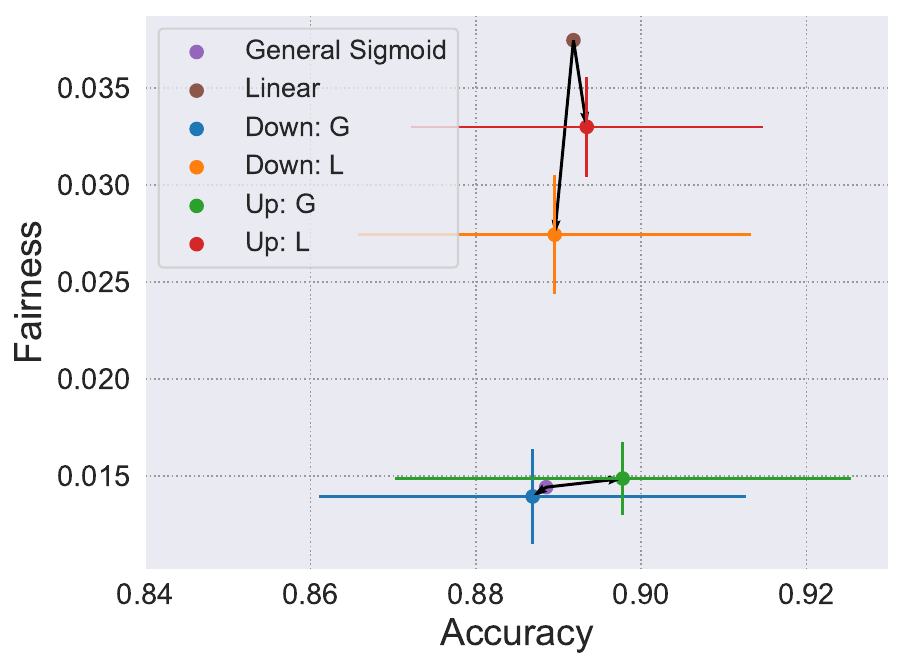}
}
\subfigure[COMPAS.]{
\includegraphics[width=0.31\textwidth]{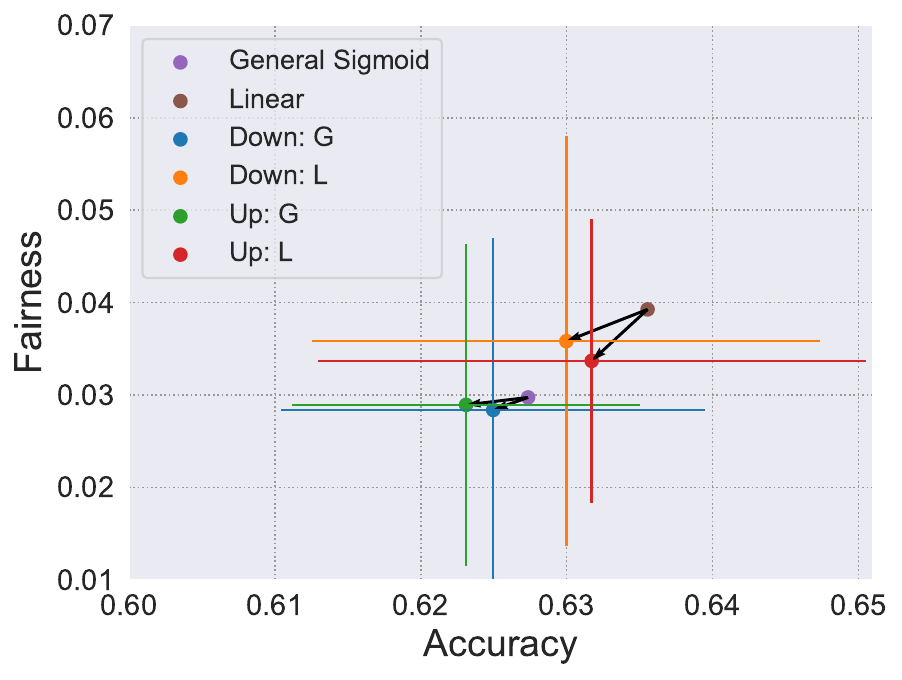}
}
\caption{The result of balanced dataset. ``G'' and ``L'' indicates general sigmoid surrogate and linear surrogate, respectively. ``Up'' and ``Down'' correspond to upsampling and downsampling, respectively.}
\label{balanced_dataset}
\end{figure*}


In Appendix~\ref{exp:other_fair}, we also compare our methods with other two in-processing methods: reduction~\citep{icml_2018_reduction_approach} and adaptive sensitive reweighting~\citep{fair_reweight_1}. The promising results also demonstrate the superiority of our methods.

\subsection{Experimental Verification for the Theoretical Insights} \label{exp_for_theorems}
\paragraph{Large Margin Points Issue.}
The boxplot with our general sigmoid surrogate is shown in Figure \ref{general_sigmoid_boxplot} in Appendix~\ref{appendix:boxplot:general}. Recall that with an unbounded surrogate function $\phi$,
the large margin points will influence the “gap”. Now they have a minor impact on the “gap” mentioned before because they are bounded ($G(D_\theta(\mathbf{x})) \le 1$). Furthermore, the two edges almost overlap, indicating that the variation of $D_\theta(\mathbf{x})$ is small, contributing to more stable results.
Overall, the general sigmoid surrogate successfully deals with the large margin points, mitigating both the surrogate-fairness gap and instability simultaneously.

\paragraph{Balanced Dataset.} 
From the perspective of sensitive attribute, the Adult dataset is an \textit{imbalanced} dataset: there are 32650 male instances and 16192 female instances. The ratio of the two groups is approximately 2:1. The Bank Marketing and COMPAS datasets also suffer from the imbalance issue. The ratio of the two groups are 39210:1978 (about 20:1) and 3175:2103 (about 3:2), respectively. Such imbalanced datasets lead to a loose bound in Theorem \ref{few_violated}. We randomly split the dataset into training set (70\%) and test set (30\%). According to the number of minority group, we conduct two experiments: Downsampling and Upsampling, which means randomly downsampling (upsampling) the majority (minority) group in the original training set and form the new training set to make two demographic groups more balanced.
Refer to Appendix \ref{appendix:up_down} for the details of our sampling schemes. 
The test set is partitioned in advance so that it is still imbalanced. 
We choose an unbounded surrogate function: Linear, and our bounded surrogate: general sigmoid. 

The results in Figure \ref{balanced_dataset} show that downsampling the majority group and upsampling the minority group contribute to a balanced dataset and a fairer result. However, in our experiments here, downsampling will lead to reduction of the training set, and upsampling will lead to replication of the training set, which may cause underfitting and overfitting problems, respectively. So the accuracy sometimes decreases. 
In conclusion, before fairness-aware training, we suggest using fair data augmentation strategies to obtain a balanced dataset, such as Fair Mixup \citep{fair_mixup}, and algorithms designed to address data imbalance, such as SMOTE \citep{SMOTE}.

\section{Conclusion}
In this paper, we research on surrogate functions in algorithmic fairness. 
We derive the surrogate-fairness gap to indicate the difference between fairness surrogate function and . With boxplots, we find that unbounded surrogates are especially faced with the large margin points issue, which further amplify the “gap” and instability. To address these challenges, we propose general sigmoid surrogate with theoretically validated fairness and stability guarantees to deal with large margin points. The theoretical analysis further provides insights to the community that dealing with the large margin points issue as well as obtaining a more balanced dataset contribute to a fairer and more stable classifier.
We further elaborate balanced surrogates method, which is an iterative algorithm to reduce the gap during training. It is also applicable to other fairness surrogate functions. Finally, our experiments using three real-world datasets not only validate the insights of our theorems, but also show that our methods get better fairness and stability performance.

\section{Broader Impact and Ethics Statement}

This study concentrates on better understanding the fairness surrogate functions in machine learning. Importantly, if someone claims the fairness guarantee of using unbounded fairness surrogate functions, it is worthy of suspicion and further investigation because of the surrogate-fairness gap issue discussed in this paper. Furthermore, the motivation of our general sigmoid surrogate and balanced surrogate methods are both centered on improving the fairness performance.

We acknowledge the sensitive nature of our study and guarantee adherence to all applicable legal and ethical standards. 
Our research is conducted within a safe and controlled setting to protect real-world systems' security. Only researchers who have received the appropriate clearance can access the most confidential parts of our experiments. 
Such measures are implemented to preserve the integrity of our research and to reduce any potential risks associated with the experiments.

\section*{Acknowledgement}
We sincerely appreciate the anonymous reviewers for their helpful suggestions and constructive comments. 
WY, ZCL and YL were supported by the National Natural Science Foundation of China (NO.62076234); the Beijing Natural Science Foundation (NO.4222029); the Intelligent Social Governance Interdisciplinary Platform, Major Innovation \& Planning Interdisciplinary Platform for the “Double First Class” Initiative, Renmin University of China; the Beijing Outstanding Young Scientist Program (NO.BJJWZYJH012019100020098); the Public Computing Cloud, Renmin University of China; the Fundamental Research Funds for the Central Universities, and the Research Funds of Renmin University of China (NO.2021030199); the Huawei-Renmin University joint program on Information Retrieval; the Unicom Innovation Ecological Cooperation Plan; the CCF Huawei Populus Grove Fund; and the National Key Research and Development Project (NO.2022YFB2703102). ZKZ and BH were supported by the NSFC General Program No. 62376235, Guangdong Basic and Applied Basic Research Foundation Nos. 2022A1515011652 and 2024A1515012399.

\bibliography{understanding}
\bibliographystyle{tmlr}

\appendix

\clearpage

\tableofcontents




\newpage

{\centering \section*{Appendix}}


In the appendix, due to the length limit of every line, $\sum_{(\mathbf{x} ,y) \in \mathcal{N}_{1a} \cup \mathcal{N}_{0a}}\mathbbm{1}_{d_\theta(\mathbf{x})>0}$ is denoted as $\sum_{\mathcal{N}_{1a},\mathcal{N}_{0a}}\mathbbm{1}_{d_\theta(\mathbf{x})>0}$ for better presentation because it prevents endless line breaks.

We summarize the frequently used notations in Table~\ref{tab:notations}:
\begin{table}[ht]
\centering
\caption{The most frequently used notations in this paper.}
\label{tab:notations}
\begin{tabular}{c|c}
    \toprule[1.5pt]
    Notations  & Meanings \\
    \midrule
    $\mathcal{S}$ & the training set \\ \midrule
    $N$ & the size of the training set \\ \midrule
    $\mathbf{x}$ & feature vector of a data point \\ \midrule
    $y$ & the corresponding label \\ \midrule
    $z$ & the corresponding sensitive attribute \\ \midrule
    $\theta$ & the parameters of the classifier \\ \midrule
    $d_\theta(\mathbf{x})$ & the signed distance between the feature vector and the decision boundary \\ \midrule
    $D_\theta(\mathbf{x})$ & the absolute value of $d_\theta(\mathbf{x})$ \\ \midrule
    $\phi(\cdot)$ & the surrogate function \\ \midrule
    $G(\cdot)$ & general sigmoid surrogate function \\ \midrule
    $\mathcal{N}_{1a}$ & the set of data points with positive prediction and positive sensitive attribute \\ \midrule
    $N_{1a}$ & the size of $\mathcal{N}_{1a}$ \\ \midrule 
    $TP_+$ & the number of data points with positive prediction and positive sensitive attribute \\ \midrule
    $DDP$ & the difference of demographic parity \\ \midrule
    $\widehat{DDP}_{\mathcal{S}}$ & an estimation of DDP \\ \midrule
    $\widetilde{DDP}_{\mathcal{S}}(\phi)$ & the constraint/regularization term with fairness surrogate function $\phi$ \\ \midrule
    $\widehat{Cov}_{\mathcal{S}}(\phi)$ & the covariance proxy \\ \midrule
    $U_{1a}$ & The number of points satisfying $\phi(D_\theta(\mathbf{x})) \in [0,1-\gamma]$ in $\mathcal{N}_{1a}$ \\ \midrule
    $\hat{P}(\cdot)$ & the predicted probability \\
    \bottomrule[1.5pt]
\end{tabular}
\end{table}

\section{Proof for the Main Paper} \label{appendix}

\subsection{Analysis about CP} \label{The_Step-by-step_Derivation}

In this subsection, we finish the proof of $\widetilde{DDP}_{\mathcal{S}}(\phi) \propto \widehat{Cov}_{\mathcal{S}}(\phi)$.



Now this is the proof for the claim that, when degenerating $\phi(x)=x$ (it is a linear function), we have $\widetilde{DDP}_{\mathcal{S}}(\phi) \propto \widehat{Cov}_{\mathcal{S}}(\phi)$. It is a step-by-step derivation of (\ref{foudation_of_derivation}).
\begin{align}
& \underbrace{\frac{1}{N} \sum_{i=1}^N(z_i-\overline{z})d_{\theta}(\mathbf{x}_i)}_{\widehat{Cov}_{\mathcal{S}}(\phi), \, \text{where} \, \phi(x)=x.} \notag \\
= & \frac{1}{N} \left[ (1-\overline{z})\sum_{\mathcal{N}_{1a},\mathcal{N}_{0a}}d_{\theta}(\mathbf{x}_i)+(-1-\overline{z})\sum_{\mathcal{N}_{1b},\mathcal{N}_{0b}}d_{\theta}(\mathbf{x}_i) \right] \notag \\ 
= & \frac{1}{N}\left[ \left( 1-\frac{N_{1a}+N_{0a}-N_{1b}-N_{0b}}{N} \right) \sum_{\mathcal{N}_{1a},\mathcal{N}_{0a}}d_{\theta}(\mathbf{x}_i)+\left( -1-\frac{N_{1a}+N_{0a}-N_{1b}-N_{0b}}{N} \right) \sum_{\mathcal{N}_{1b},\mathcal{N}_{0b}}d_{\theta}(\mathbf{x}_i) \right] \notag \\ 
= & \frac{2}{N^2} \left[ (N_{1b}+N_{0b})\sum_{\mathcal{N}_{1a},\mathcal{N}_{0a}}d_{\theta}(\mathbf{x}_i)-(N_{1a}+N_{0a})\sum_{\mathcal{N}_{1b},\mathcal{N}_{0b}}d_{\theta}(\mathbf{x}_i) \right] \notag \\ 
= & \underbrace{\frac{2(N_{1a}+N_{0a})(N_{1b}+N_{0b})}{N^2}}_{\text{constant} \le \frac{1}{2} < 1} \underbrace{\left( \frac{\sum_{\mathcal{N}_{1a},\mathcal{N}_{0a}}d_{\theta}(\mathbf{x}_i)}{N_{1a}+N_{0a}}-\frac{\sum_{\mathcal{N}_{1b},\mathcal{N}_{0b}}d_{\theta}(\mathbf{x}_i)}{N_{1b}+N_{0b}} \right)}_{\widetilde{DDP}_{\mathcal{S}}(\phi), \, \text{where} \, \phi(x)=x.}.
\end{align}
The proof is complete.

Furthermore, in fact, we have $\widetilde{DDP}_{\mathcal{S}}(\phi) \propto \widehat{Cov}_{\mathcal{S}}(\phi)$ holds for all $\phi$. But for the original CP \citep{cov}, $\phi(x)=x$ is the default. So the original CP is equivalent to linear surrogate because $\phi(x)=x$. Similar proof is also explained in previous work \citep{too_relaxed,nips_sigmoid}, where they call it “linear relaxations”. 

\subsection{Extension from Fairness Surrogate Functions to CP} \label{appendix:extension}

This subsection aims to restate the theoretical analysis about $\widetilde{DDP}_{\mathcal{S}}(\phi)$ in Proposition~\ref{DDP_Cov_theorem} and Theorem~\ref{theorem_ineq}-\ref{few_violated} from the perspective of  $\widehat{Cov}_{\mathcal{S}}(\phi)$.

From the above derivations, we have
$$\widehat{Cov}_{\mathcal{S}}(\phi) = \underbrace{\frac{2(N_{1a}+N_{0a})(N_{1b}+N_{0b})}{N^2}}_{\text{constant}} \  \widetilde{DDP}_{\mathcal{S}}(\phi).$$
Therefore, the theoretical analysis about $\widetilde{DDP}_{\mathcal{S}}(\phi)$ in Proposition~\ref{DDP_Cov_theorem} and Theorem~\ref{theorem_ineq}-\ref{few_violated} in the main paper can be naturally extended to $\widehat{Cov}_{\mathcal{S}}(\phi)$. Just by substituting the equation above into Proposition~\ref{DDP_Cov_theorem} and Theorem~\ref{theorem_ineq}-\ref{few_violated}, we restate them below:

\setcounter{proposition}{0}
\begin{proposition}[Restatement]
Define the magnitude of the signed distance by $D_\theta(\mathbf{x})$, i.e., $D_\theta(\mathbf{x})=|d_\theta(\mathbf{x})|$. Let $\phi(x)=x$ and $S_{1a}=\sum_{\mathcal{N}_{1a}}\phi(D_{\theta}(\mathbf{x}_i)), T_{1a}= \sum_{\mathcal{N}_{1a}}(\phi(D_{\theta}(\mathbf{x}_i))-1)$ and it is similar for $\mathcal{N}_{1b}, \mathcal{N}_{0a}, \mathcal{N}_{0b}$. It satisfies:
\begin{align}
    \widehat{DDP}_{\mathcal{S}} = \frac{N^2}{2(N_{1a}+N_{0a})(N_{1b}+N_{0b})} \widehat{Cov}_{\mathcal{S}}(\phi) -  \underbrace{\left( \frac{T_{1a}-S_{0a}}{N_{1a}+N_{0a}} - \frac{T_{1b}-S_{0b}}{N_{1b}+N_{0b}} \right)}_{\text{surrogate-fairness gap}}.
\end{align}
\end{proposition}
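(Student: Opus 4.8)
The plan is to prove the restated proposition by unwinding the definitions and combining two facts already established in the excerpt: the surrogate-fairness gap identity of Proposition~\ref{DDP_Cov_theorem}, and the proportionality $\widehat{Cov}_{\mathcal{S}}(\phi) = \frac{2(N_{1a}+N_{0a})(N_{1b}+N_{0b})}{N^2}\,\widetilde{DDP}_{\mathcal{S}}(\phi)$ derived in Appendix~\ref{The_Step-by-step_Derivation} for $\phi(x)=x$. The first step is to invert this proportionality to write $\widetilde{DDP}_{\mathcal{S}}(\phi) = \frac{N^2}{2(N_{1a}+N_{0a})(N_{1b}+N_{0b})}\,\widehat{Cov}_{\mathcal{S}}(\phi)$, which immediately produces the first term on the right-hand side of the claimed identity once we substitute into $\widehat{DDP}_{\mathcal{S}} = \widetilde{DDP}_{\mathcal{S}}(\phi) + (\text{surrogate-fairness gap})$.

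The second step is to rewrite the surrogate-fairness gap from Equation~\eqref{DDP_before} in the $S, T$ notation for the linear surrogate. Here I would observe that since $\phi(x)=x$ and every point in $\mathcal{N}_{1a}$ has $d_\theta(\mathbf{x}) > 0$, we have $\mathbbm{1}_{d_\theta(\mathbf{x})>0} - \phi(d_\theta(\mathbf{x})) = 1 - D_\theta(\mathbf{x}) = -(\phi(D_\theta(\mathbf{x})) - 1)$, so the $\mathcal{N}_{1a}$ contribution to the gap is $-T_{1a}/(N_{1a}+N_{0a})$, and similarly $-T_{1b}/(N_{1b}+N_{0b})$ for $\mathcal{N}_{1b}$. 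For points in $\mathcal{N}_{0a}$, we have $d_\theta(\mathbf{x}) \le 0$, so $\mathbbm{1}_{d_\theta(\mathbf{x})>0} - \phi(d_\theta(\mathbf{x})) = 0 - d_\theta(\mathbf{x}) = D_\theta(\mathbf{x}) = \phi(D_\theta(\mathbf{x}))$, giving the $\mathcal{N}_{0a}$ contribution $+S_{0a}/(N_{1a}+N_{0a})$, and likewise $+S_{0b}/(N_{1b}+N_{0b})$ for $\mathcal{N}_{0b}$. Collecting terms, the surrogate-fairness gap equals $\frac{-T_{1a}+S_{0a}}{N_{1a}+N_{0a}} - \frac{-T_{1b}+S_{0b}}{N_{1b}+N_{0b}} = -\left(\frac{T_{1a}-S_{0a}}{N_{1a}+N_{0a}} - \frac{T_{1b}-S_{0b}}{N_{1b}+N_{0b}}\right)$, which is exactly (minus) the bracketed term in the statement.

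The third and final step is to assemble: $\widehat{DDP}_{\mathcal{S}} = \widetilde{DDP}_{\mathcal{S}}(\phi) + (\text{gap}) = \frac{N^2}{2(N_{1a}+N_{0a})(N_{1b}+N_{0b})}\widehat{Cov}_{\mathcal{S}}(\phi) - \left(\frac{T_{1a}-S_{0a}}{N_{1a}+N_{0a}} - \frac{T_{1b}-S_{0b}}{N_{1b}+N_{0b}}\right)$, matching the claim. I do not anticipate a serious obstacle here — this is essentially bookkeeping. The one place requiring care is the sign tracking when converting between $\phi(d_\theta(\mathbf{x}))$ on signed distances and $\phi(D_\theta(\mathbf{x}))$ on magnitudes in the two prediction regimes, together with correctly reading off the original gap identity from Proposition~\ref{DDP_Cov_theorem}, which has $\widehat{DDP}_{\mathcal{S}} - \widetilde{DDP}_{\mathcal{S}}(\phi)$ on the left rather than $\widehat{DDP}_{\mathcal{S}}$ alone. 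Keeping those signs straight is the only subtlety.
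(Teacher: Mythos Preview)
Your proposal is correct and follows essentially the same approach as the paper: the paper's own justification for the restatement is simply to substitute the proportionality $\widehat{Cov}_{\mathcal{S}}(\phi)=\frac{2(N_{1a}+N_{0a})(N_{1b}+N_{0b})}{N^2}\widetilde{DDP}_{\mathcal{S}}(\phi)$ from Appendix~\ref{The_Step-by-step_Derivation} into Proposition~\ref{DDP_Cov_theorem}, and your write-up spells out exactly this substitution together with the sign bookkeeping needed to convert the gap into the $T_{1a},S_{0a},T_{1b},S_{0b}$ notation.
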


\setcounter{theorem}{0}
\begin{theorem}[Restatement]
We assume that $G(D_\theta(\mathbf{x})) \in [1-\gamma, 1] $, where 
$\gamma>0$. $\forall \epsilon > 0$, if $\left| \widehat{Cov}(G) \right| \le \epsilon$, then it holds:
$$\left| \widehat{DDP}_{\mathcal{S}} \right| \le \frac{N^2}{4(N_{1a}+N_{0a})(N_{1b}+N_{0b})}\epsilon+\gamma.$$
\end{theorem}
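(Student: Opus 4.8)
The plan is to prove Theorem \ref{theorem_ineq} directly from Proposition \ref{DDP_Cov_theorem} by controlling the surrogate-fairness gap using the boundedness assumption $G(D_\theta(\mathbf{x})) \in [1-\gamma, 1]$. First I would start from the identity in \eqref{DDP_before}, written with $\phi = G$, which expresses $\widehat{DDP}_{\mathcal{S}} = \widetilde{DDP}_{\mathcal{S}}(G) + \big(\widehat{DDP}_{\mathcal{S}} - \widetilde{DDP}_{\mathcal{S}}(G)\big)$. By the triangle inequality, $|\widehat{DDP}_{\mathcal{S}}| \le |\widetilde{DDP}_{\mathcal{S}}(G)| + |\widehat{DDP}_{\mathcal{S}} - \widetilde{DDP}_{\mathcal{S}}(G)|$, and the first term is at most $\epsilon$ by hypothesis. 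So the whole argument reduces to bounding the gap term by $\tfrac12\epsilon + \gamma - \epsilon$... which is not quite right, so the correct split must be more careful: I expect the $\tfrac12$ in front of $\epsilon$ actually comes from a sign/cancellation structure rather than a crude triangle inequality, so the real plan is to bound each of the two fractions in the gap separately and combine them with $\widetilde{DDP}_{\mathcal{S}}(G)$ in a way that exploits that $\mathbbm{1}_{d_\theta(\mathbf{x})>0} - G(d_\theta(\mathbf{x}))$ has a controlled sign on each of the four index sets.

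The key observation I would use is: for a point with $d_\theta(\mathbf{x}) > 0$ (i.e. in $\mathcal{N}_{1a}$ or $\mathcal{N}_{1b}$), we have $D_\theta(\mathbf{x}) = d_\theta(\mathbf{x})$, so $G(d_\theta(\mathbf{x})) = G(D_\theta(\mathbf{x})) \in [1-\gamma, 1]$, hence $\mathbbm{1}_{d_\theta(\mathbf{x})>0} - G(d_\theta(\mathbf{x})) = 1 - G(d_\theta(\mathbf{x})) \in [0, \gamma]$. For a point with $d_\theta(\mathbf{x}) \le 0$ (i.e. in $\mathcal{N}_{0a}$ or $\mathcal{N}_{0b}$), we have $D_\theta(\mathbf{x}) = -d_\theta(\mathbf{x})$, and by symmetry of the sigmoid, $G(d_\theta(\mathbf{x})) = \sigma(w d_\theta(\mathbf{x})) = 1 - \sigma(-w d_\theta(\mathbf{x})) = 1 - G(D_\theta(\mathbf{x})) \in [0, \gamma]$; meanwhile $\mathbbm{1}_{d_\theta(\mathbf{x})>0} = 0$, so $\mathbbm{1}_{d_\theta(\mathbf{x})>0} - G(d_\theta(\mathbf{x})) = -G(d_\theta(\mathbf{x})) \in [-\gamma, 0]$. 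Thus on every point the term $\mathbbm{1}_{d_\theta(\mathbf{x})>0} - G(d_\theta(\mathbf{x}))$ lies in $[-\gamma, \gamma]$ in absolute value, but more sharply it is in $[0,\gamma]$ on the positively-predicted points and in $[-\gamma, 0]$ on the negatively-predicted points. I would then plug $G(D_\theta(\mathbf{x})) = 1 - (1 - G(D_\theta(\mathbf{x})))$ into $\widetilde{DDP}_{\mathcal{S}}(G)$, noting that $\sum_{\mathcal{N}_{1a}\cup\mathcal{N}_{0a}} G(d_\theta(\mathbf{x})) = N_{1a} - \sum_{\mathcal{N}_{1a}}(1-G(d_\theta(\mathbf{x}))) + \sum_{\mathcal{N}_{0a}} G(d_\theta(\mathbf{x}))$, which rewrites $\widetilde{DDP}_{\mathcal{S}}(G)$ as $\widehat{DDP}_{\mathcal{S}}$ minus exactly the gap (consistent with Proposition \ref{DDP_Cov_theorem}). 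The crucial algebraic point is that the gap is $\Big(\tfrac{-A_{1a} - B_{0a}}{N_{1a}+N_{0a}}\Big) - \Big(\tfrac{-A_{1b} - B_{0b}}{N_{1b}+N_{0b}}\Big)$ where each of $A_{1a} = \sum_{\mathcal{N}_{1a}}(1 - G) \in [0, \gamma N_{1a}]$ and $B_{0a} = \sum_{\mathcal{N}_{0a}} G \in [0, \gamma N_{0a}]$, so $\tfrac{A_{1a}+B_{0a}}{N_{1a}+N_{0a}} \in [0, \gamma]$ and likewise for the $b$-group. Hence the gap is a difference of two quantities each in $[-\gamma, 0]$... actually each in $[0,\gamma]$ after fixing signs, so the gap itself lies in $[-\gamma, \gamma]$.

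Putting it together: $\widehat{DDP}_{\mathcal{S}} = \widetilde{DDP}_{\mathcal{S}}(G) + (\text{gap})$ with $|\widetilde{DDP}_{\mathcal{S}}(G)| \le \epsilon$ and $|\text{gap}| \le \gamma$ would only give $|\widehat{DDP}_{\mathcal{S}}| \le \epsilon + \gamma$, which is weaker than the claimed $\tfrac12\epsilon + \gamma$. So the genuine subtlety — and the step I expect to be the main obstacle — is recovering the factor $\tfrac12$. I anticipate this comes from relating $\widetilde{DDP}_{\mathcal{S}}(G)$ back to $\widehat{Cov}_{\mathcal{S}}(G)$ via the proportionality constant $\tfrac{2(N_{1a}+N_{0a})(N_{1b}+N_{0b})}{N^2} \le \tfrac12$ established in Appendix \ref{The_Step-by-step_Derivation}: indeed the restated theorem has coefficient $\tfrac{N^2}{4(N_{1a}+N_{0a})(N_{1b}+N_{0b})}$ on $\epsilon$, and under the normalization where the constraint is on $\widehat{Cov}$ this becomes $\tfrac12\epsilon$ in the worst (balanced) case. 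So my plan is: express everything through $\widehat{Cov}_{\mathcal{S}}(G)$, use $|\widehat{Cov}_{\mathcal{S}}(G)| \le \epsilon$, invoke $\widetilde{DDP}_{\mathcal{S}}(G) = \tfrac{N^2}{2(N_{1a}+N_{0a})(N_{1b}+N_{0b})}\widehat{Cov}_{\mathcal{S}}(G)$ together with the bound $\tfrac{N^2}{4(N_{1a}+N_{0a})(N_{1b}+N_{0b})} \ge 1$ (AM–GM, since $(N_{1a}+N_{0a})+(N_{1b}+N_{0b}) = N$), and finally add the $|\text{gap}| \le \gamma$ contribution. I would double-check the worst-case constant carefully, since the interplay between which fairness quantity carries the $\epsilon$-constraint and the group-size normalization is exactly where an off-by-factor-of-two error would hide; everything else is routine triangle-inequality bookkeeping on the four index sets.
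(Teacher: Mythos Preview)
Your pointwise analysis of the surrogate--fairness gap is correct: under $G(D_\theta(\mathbf{x}))\in[1-\gamma,1]$ the residual $\mathbbm{1}_{d_\theta(\mathbf{x})>0}-G(d_\theta(\mathbf{x}))$ lies in $[0,\gamma]$ on positively predicted points and in $[-\gamma,0]$ on negatively predicted ones, so each group average is in $[-\gamma,\gamma]$ and the gap in \eqref{DDP_before} is bounded by $\gamma$. You also correctly invoke the proportionality $\widetilde{DDP}_{\mathcal{S}}(G)=\tfrac{N^2}{2(N_{1a}+N_{0a})(N_{1b}+N_{0b})}\,\widehat{Cov}_{\mathcal{S}}(G)$ from Appendix~\ref{The_Step-by-step_Derivation}. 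But combining these via the plain triangle inequality gives only
\[
|\widehat{DDP}_{\mathcal{S}}|\le|\widetilde{DDP}_{\mathcal{S}}(G)|+|\text{gap}|\le\frac{N^2}{2(N_{1a}+N_{0a})(N_{1b}+N_{0b})}\,\epsilon+\gamma,
\]
which is twice the claimed coefficient on $\epsilon$. Your diagnosis of where the missing $\tfrac12$ hides is the actual gap in the proposal: the covariance--to--$\widetilde{DDP}$ conversion \emph{is} already the full factor $\tfrac{N^2}{2(\cdot)(\cdot)}$, so there is no extra $\tfrac12$ living there, and the AM--GM fact $\tfrac{N^2}{4(\cdot)(\cdot)}\ge 1$ you cite can only tell you the coefficient exceeds $1$ --- it cannot shrink it.

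In the paper's argument (Appendix~\ref{Proof_of_bounded}) the $\tfrac12$ arises from a different decomposition. The constraint $|\widetilde{DDP}_{\mathcal{S}}(\phi)|\le\delta$ is rewritten in terms of $\phi(D_\theta(\mathbf{x}))$ on all four index sets (with a sign flip on $\mathcal{N}_{0a},\mathcal{N}_{0b}$), and then one adds and subtracts $\tfrac{N_{1a}-N_{0a}}{N_{1a}+N_{0a}}-\tfrac{N_{1b}-N_{0b}}{N_{1b}+N_{0b}}$. The key identity is that this last quantity equals $2\widehat{DDP}_{\mathcal{S}}$, not $\widehat{DDP}_{\mathcal{S}}$; the triangle inequality at that level therefore produces $|2\widehat{DDP}_{\mathcal{S}}|\le\delta+|\text{residual}|$, the residual (built from $T_{1a}=\sum_{\mathcal{N}_{1a}}(\phi(D_\theta)-1)$, etc.) is bounded by $2\gamma$, and dividing through by $2$ yields $\tfrac12\delta+\gamma$. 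Substituting $\delta\le\tfrac{N^2}{2(N_{1a}+N_{0a})(N_{1b}+N_{0b})}\epsilon$ then gives the restated bound. So the $\tfrac12$ is purely algebraic, coming from $2\widehat{DDP}_{\mathcal{S}}=\tfrac{N_{1a}-N_{0a}}{N_{1a}+N_{0a}}-\tfrac{N_{1b}-N_{0b}}{N_{1b}+N_{0b}}$, and your plan would need this ``doubled'' representation of $\widehat{DDP}_{\mathcal{S}}$ to hit the stated constant.
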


\begin{theorem}[Restatement]
We assume that $k$ points satisfy $G(D_\theta(\mathbf{x})) \in [0,1-\gamma]$ and others satisfy $G(D_\theta(\mathbf{x})) \in [1-\gamma, 1]$, where $\gamma>0$. $\forall \epsilon > 0$, if $\left| \widehat{Cov}(G) \right| \le \epsilon$, then it holds:

\begin{align}
    \left| \widehat{DDP}_{\mathcal{S}} \right| \le \frac{N^2}{4(N_{1a}+N_{0a})(N_{1b}+N_{0b})} \epsilon+\gamma + \underbrace{\frac{1}{2} \left(\frac{1}{N_{1a}+N_{0a}}+\frac{1}{N_{1b}+N_{0b}}\right) k}_{\text{relaxation factor}} . \notag
\end{align}
\end{theorem}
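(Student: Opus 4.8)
The plan is to obtain this statement as an immediate corollary of its $\widetilde{DDP}$-version, Theorem~\ref{few_violated}, by invoking the exact proportionality between the covariance proxy and $\widetilde{DDP}_{\mathcal{S}}$. The step-by-step computation in Appendix~\ref{The_Step-by-step_Derivation} shows that for \emph{every} surrogate $\phi$ (not merely the linear one), replacing $d_\theta(\mathbf{x}_i)$ by $\phi(d_\theta(\mathbf{x}_i))$ throughout that chain of equalities yields
\[
  \widehat{Cov}_{\mathcal{S}}(\phi) \;=\; c_N\,\widetilde{DDP}_{\mathcal{S}}(\phi), \qquad c_N:=\frac{2(N_{1a}+N_{0a})(N_{1b}+N_{0b})}{N^2}\;\le\;\tfrac12 .
\]
Applying this with $\phi=G$, the hypothesis $\bigl|\widehat{Cov}_{\mathcal{S}}(G)\bigr|\le\epsilon$ is equivalent to
\[
  \bigl|\widetilde{DDP}_{\mathcal{S}}(G)\bigr| \;=\; \tfrac{1}{c_N}\bigl|\widehat{Cov}_{\mathcal{S}}(G)\bigr| \;\le\; \frac{N^2}{2(N_{1a}+N_{0a})(N_{1b}+N_{0b})}\,\epsilon \;=:\;\epsilon' .
\]

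Next I would apply Theorem~\ref{few_violated} with the threshold $\epsilon'$ in place of $\epsilon$; its remaining hypotheses—the partition into $k$ points with $G(D_\theta(\mathbf{x}))\in[0,1-\gamma]$ and the rest with $G(D_\theta(\mathbf{x}))\in[1-\gamma,1]$, and $\gamma>0$—are verbatim the ones assumed here. That theorem gives
\[
  \bigl|\widehat{DDP}_{\mathcal{S}}\bigr| \;\le\; \tfrac12\epsilon' + \gamma + \tfrac12\!\left(\tfrac{1}{N_{1a}+N_{0a}}+\tfrac{1}{N_{1b}+N_{0b}}\right)\!k .
\]
Substituting $\epsilon'=\dfrac{N^2}{2(N_{1a}+N_{0a})(N_{1b}+N_{0b})}\,\epsilon$ turns the first term into $\dfrac{N^2}{4(N_{1a}+N_{0a})(N_{1b}+N_{0b})}\,\epsilon$, whereas the $\gamma$ term and the relaxation factor do not involve the threshold and carry over unchanged. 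This is precisely the claimed inequality, which finishes the proof.

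I do not expect any real obstacle in this reduction: the single point worth checking is that the proportionality of Appendix~\ref{The_Step-by-step_Derivation} is indeed valid for $\phi=G$ and not only for $\phi(x)=x$, which holds because that derivation never uses the explicit form of $\phi$ (and both sensitive groups are nonempty, so $c_N>0$). All of the genuine work lies in Theorem~\ref{few_violated} itself (and, at $k=0$, Theorem~\ref{theorem_ineq}). Were one to reprove it from scratch, the crucial ingredient would be the sigmoid antisymmetry $\sigma(-t)=1-\sigma(t)$, which forces the uniform per-point identity $\bigl|\mathbbm{1}_{d_\theta(\mathbf{x})>0}-G(d_\theta(\mathbf{x}))\bigr| = 1-G(D_\theta(\mathbf{x}))$ irrespective of the sign of the prediction; one would then plug this into the gap identity of Proposition~\ref{DDP_Cov_theorem} and bound each summand by $\gamma$ on the ``good'' points and by $1$ on the $k$ ``bad'' ones. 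The delicate part of that alternative route is the bookkeeping of how the $k$ bad points are split between $\mathcal{N}_{1a}\cup\mathcal{N}_{0a}$ and $\mathcal{N}_{1b}\cup\mathcal{N}_{0b}$ so as to recover the symmetric relaxation factor $\tfrac12\bigl(\tfrac{1}{N_{1a}+N_{0a}}+\tfrac{1}{N_{1b}+N_{0b}}\bigr)k$, together with using $c_N\le\tfrac12$ to pin down the constant multiplying $\epsilon$.
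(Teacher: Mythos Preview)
Your proposal is correct and matches the paper's own approach essentially verbatim: the paper obtains this restatement by noting that $\widehat{Cov}_{\mathcal{S}}(\phi)=\frac{2(N_{1a}+N_{0a})(N_{1b}+N_{0b})}{N^2}\,\widetilde{DDP}_{\mathcal{S}}(\phi)$ for every $\phi$ and then ``just substituting'' into Theorem~\ref{few_violated}, exactly as you do. Your additional remarks about how one would redo Theorem~\ref{few_violated} from scratch are also in line with the paper's actual proof in Appendix~\ref{proof_of_few_violated}.
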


The restatement of them does not affect the analysis in the main paper since $\widetilde{DDP}_{\mathcal{S}}(\phi) \propto \widehat{Cov}_{\mathcal{S}}(\phi)$.


\subsection{Proof of Theorem \ref{theorem_ineq}} \label{Proof_of_bounded}

\paragraph{Proof sketch.} Firstly, absolute value inequality is used to derive the upper bound of $\left|\widehat{DDP}_{\mathcal{S}}\right|$ given the upper bound of $\left|\widetilde{DDP}_{\mathcal{S}}(\phi)\right|$. Secondly, the assumption $G(D_\theta(\mathbf{x})) \in [1-\gamma, 1]$ is used to derive a tighter bound of $\left|\widehat{DDP}_{\mathcal{S}}\right|$ to complete the proof.

\begin{proof}
If $\phi$ passes through the origin and increases monotonically, then according to the sign of $d_{\theta}(\mathbf{x})$, we have
\begin{align} \label{eq:cov_change}
\widetilde{DDP}_{\mathcal{S}}(\phi) & = \frac{\sum_{\mathcal{N}_{1a},\mathcal{N}_{0a}}\phi(d_{\theta}(\mathbf{x}_i))}{N_{1a}+N_{0a}}-\frac{\sum_{\mathcal{N}_{1b},\mathcal{N}_{0b}}\phi(d_{\theta}(\mathbf{x}_i))}{N_{1b}+N_{0b}} \notag \\ & = \frac{\sum_{\mathcal{N}_{1a}}\phi(D_{\theta}(\mathbf{x}_i))-\sum_{\mathcal{N}_{0a}}\phi(D_{\theta}(\mathbf{x}_i))}{N_{1a}+N_{0a}}-\frac{\sum_{\mathcal{N}_{1b}}\phi(D_{\theta}(\mathbf{x}_i))-\sum_{\mathcal{N}_{0b}}\phi(D_{\theta}(\mathbf{x}_i))}{N_{1b}+N_{0b}}.
\end{align}


We define $T_{1a}=\sum_{\mathcal{N}_{1a}}\phi(D_{\theta}(\mathbf{x}_i))-N_{1a}=\sum_{\mathcal{N}_{1a}}(\phi(D_{\theta}(\mathbf{x}_i))-1)$. It is similar for $T_{1b},T_{0a},T_{0b}$. We now analyze the term in \eqref{eq:cov_change} by limiting it with a parameter $\delta>0$:
\begin{align} 
& \left|\widetilde{DDP}_{\mathcal{S}}(\phi)\right| = \left|\frac{\sum_{\mathcal{N}_{1a}}\phi(D_{\theta}(\mathbf{x}_i))-\sum_{\mathcal{N}_{0a}}\phi(D_{\theta}(\mathbf{x}_i))}{N_{1a}+N_{0a}}-\frac{\sum_{\mathcal{N}_{1b}}\phi(D_{\theta}(\mathbf{x}_i))-\sum_{\mathcal{N}_{0b}}\phi(D_{\theta}(\mathbf{x}_i))}{N_{1b}+N_{0b}}\right| \le \delta \label{eq:theorem1_proof_eq0_0}
\\ \Rightarrow &
\Biggl| \frac{N_{1a}-N_{0a}}{N_{1a}+N_{0a}}-\frac{N_{1b}-N_{0b}}{N_{1b}+N_{0b}}+\frac{N_{0a}-N_{1a}+\sum_{\mathcal{N}_{1a}}\phi(D_{\theta}(\mathbf{x}_i))-\sum_{\mathcal{N}_{0a}}\phi(D_{\theta}(\mathbf{x}_i))}{N_{1a}+N_{0a}} + \notag \\ & \frac{N_{1b}-N_{0b}-\sum_{\mathcal{N}_{1b}}\phi(D_{\theta}(\mathbf{x}_i))+\sum_{\mathcal{N}_{0b}}\phi(D_{\theta}(\mathbf{x}_i))}{N_{1b}+N_{0b}} \Biggr| \le \delta \notag
\\ \Rightarrow & 
\left|\frac{N_{1a}-N_{0a}}{N_{1a}+N_{0a}}-\frac{N_{1b}-N_{0b}}{N_{1b}+N_{0b}}\right| \notag \\ & \le \delta + \Biggl|\frac{N_{0a}-N_{1a}+\sum_{\mathcal{N}_{1a}}\phi(D_{\theta}(\mathbf{x}_i))-\sum_{\mathcal{N}_{0a}}\phi(D_{\theta}(\mathbf{x}_i))}{N_{1a}+N_{0a}} + \notag \\ & \frac{N_{1b}-N_{0b}-\sum_{\mathcal{N}_{1b}}\phi(D_{\theta}(\mathbf{x}_i))+\sum_{\mathcal{N}_{0b}}\phi(D_{\theta}(\mathbf{x}_i))}{N_{1b}+N_{0b}}\Biggr| \notag 
\\ \Rightarrow &
\left|\frac{N_{1a}-N_{0a}}{N_{1a}+N_{0a}}-\frac{N_{1b}-N_{0b}}{N_{1b}+N_{0b}}\right| \notag \\ & \le \delta +  \Biggl| \frac{(\sum_{\mathcal{N}_{1a}}\phi(D_{\theta}(\mathbf{x}_i))-N_{1a})-(\sum_{\mathcal{N}_{0a}}\phi(D_{\theta}(\mathbf{x}_i))-N_{0a})}{N_{1a}+N_{0a}} + \notag \\ & \frac{(\sum_{\mathcal{N}_{0b}}\phi(D_{\theta}(\mathbf{x}_i))-N_{0b})-(\sum_{\mathcal{N}_{1b}}\phi(D_{\theta}(\mathbf{x}_i))-N_{1b})}{N_{1b}+N_{0b}}\Biggr| \notag 
\\ \Rightarrow &
\left|\frac{N_{1a}-\mathcal{N}_{0a}}{N_{1a}+N_{0a}}-\frac{N_{1b}-N_{0b}}{N_{1b}+N_{0b}}\right| \le \delta + \left|\frac{T_{1a}-T_{0a}}{N_{1a}+N_{0a}}+\frac{T_{0b}-T_{1b}}{N_{1b}+N_{0b}}\right| \notag 
\\ \Rightarrow &
\left|\frac{N_{1a}}{N_{1a}+N_{0a}}-\left(1-\frac{N_{1a}}{N_{1a}+N_{0a}}\right)-\frac{N_{1b}}{N_{1b}+N_{0b}}+\left(1-\frac{N_{1b}}{N_{1b}+N_{0b}}\right)\right| \notag \\ & \le \delta + \left|\frac{T_{1a}-T_{0a}}{N_{1a}+N_{0a}}+\frac{T_{0b}-T_{1b}}{N_{1b}+N_{0b}}\right| \notag 
\\ \Rightarrow & 
\left|\widehat{DDP}_{\mathcal{S}}\right| = \left|\frac{N_{1a}}{N_{1a}+N_{0a}}-\frac{N_{1b}}{N_{1b}+N_{0b}}\right| \le \frac{1}{2}(\delta + \left|\frac{T_{1a}-T_{0a}}{N_{1a}+N_{0a}}+\frac{T_{0b}-T_{1b}}{N_{1b}+N_{0b}}\right|). \label{eq:theorem1_proof_eq0} 
\end{align} 

The process of mathematical derivation for \eqref{eq:theorem1_proof_eq0} mainly includes basic algebraic operations, equivalent substitution and absolute value inequalities. The inequality \eqref{eq:theorem1_proof_eq0} will be used later to prove \eqref{eq:theorem1_proof_eq1}.

Now we begin to use the assumption $G(D_\theta(\mathbf{x})) \in [1-\gamma, 1]$. A relaxed version of such assumption is $G(D_\theta(\mathbf{x})) \in [1-\gamma, 1+\gamma]$. If the theorem holds with the relaxed version, then it also holds for the original assumption. We replace the above $\phi$ with general sigmoid $G$.

With $G(D_\theta(\mathbf{x})) \in [1-\gamma, 1+\gamma]$, $T_{1a} \in [-N_{1a}\gamma, N_{1a}\gamma]$ and $T_{0a} \in [-N_{0a}\gamma,N_{0a}\gamma]$ hold. In other words, $\frac{T_{1a}-T_{0a}}{N_{1a}+N_{0a}} \in [-\gamma,\gamma]$ and $\frac{T_{0b}-T_{1b}}{N_{1b}+N_{0b}} \in [-\gamma,\gamma]$, which means that:
$$\left|\frac{T_{1a}-T_{0a}}{N_{1a}+N_{0a}}+\frac{T_{0b}-T_{1b}}{N_{1b}+N_{0b}}\right| \in \left[0,2\gamma \right].$$
So with \eqref{eq:theorem1_proof_eq0}, we have
\begin{align} \label{eq:theorem1_proof_eq1}
    \left|\widehat{DDP}_{\mathcal{S}}\right| = \left|\frac{N_{1a}}{N_{1a}+N_{0a}}-\frac{N_{1b}}{N_{1b}+N_{0b}}\right| \le \frac{1}{2}\left(\delta + \left|\frac{T_{1a}-T_{0a}}{N_{1a}+N_{0a}}+\frac{T_{0b}-T_{1b}}{N_{1b}+N_{0b}}\right|\right) \le \frac{1}{2}\delta + \gamma.
\end{align}
Now we set $\delta=\epsilon$, combine \eqref{eq:theorem1_proof_eq1} with \eqref{eq:theorem1_proof_eq0_0}, and obtain the conclusion: $\forall \epsilon>0$, if $\left|\widetilde{DDP}_{\mathcal{S}}(G)\right| \le \epsilon$, then we have
$$ \left|\widehat{DDP}_{\mathcal{S}}\right| \le \frac{1}{2}\epsilon + \gamma.$$
The proof is complete.
\end{proof}




\subsection{Proof of Theorem \ref{few_violated}} \label{proof_of_few_violated}

\paragraph{Proof sketch.} The difference between Theorem~\ref{theorem_ineq} and Theorem~\ref{few_violated} mainly lies in the assumption of $D_\theta(\mathbf{x})$. Therefore, Equation~\eqref{eq:theorem1_proof_eq0} can be also used here because it is independent with the assumption. Consequently, the task of the proof is providing a tighter bound than Equation~\eqref{eq:theorem1_proof_eq0}. The process of derivation mainly includes basic algebraic operations.

\begin{proof}

The number of points satisfying $\phi(D_\theta(\mathbf{x})) \in [0,1-\gamma]$ in the four groups are denoted as $U_{1a}, U_{1b}, U_{0a}, U_{0b}$, respectively. So there holds $\phi(D_\theta(\mathbf{x}))-1 \in [-1,-\gamma]$ for them and $\phi(D_\theta(\mathbf{x}))-1 \in [-\gamma,0]$ for others. Note that
$$U_{1a}+U_{1b}+U_{0a}+U_{0b}=k.$$

We have
$$T_{1a} = \sum_{\mathcal{N}_{1a}}(\phi(D_\theta(\mathbf{x})-1)) \in [-U_{1a}-(N_{1a}-U_{1a})\gamma,-U_{1a}\gamma],$$
and it is similar for $T_{1b},T_{0a},T_{0b}$. 
So 
$$\frac{T_{1a}-T_{0a}}{N_{1a}+N_{0a}} \in \left[\frac{-N_{1a}\gamma-U_{1a}+(U_{1a}+U_{0a})\gamma}{N_{1a}+N_{0a}},\frac{N_{0a} \gamma+U_{0a}-(U_{1a}+U_{0a})\gamma}{N_{1a}+N_{0a}}\right].$$
For the lower bound:
\begin{align}
    \frac{-N_{1a}\gamma-U_{1a}+(U_{1a}+U_{0a})\gamma}{N_{1a}+N_{0a}} = &  \frac{-N_{1a}\gamma}{N_{1a}+N_{0a}}+\frac{(U_{1a}+U_{0a})\gamma-U_{1a}}{N_{1a}+N_{0a}} 
    \notag \\ \ge & -\gamma+\frac{(U_{1a}+U_{0a})\gamma-U_{1a}}{N_{1a}+N_{0a}} 
    \notag \\ \ge & -\gamma-\frac{U_{1a}}{N_{1a}+N_{0a}}
    \notag \\ \ge & -\gamma-\frac{k}{N_{1a}+N_{0a}}.
\end{align}
For the upper bound:
\begin{align}
    \frac{N_{0a}\gamma+U_{0a}-(U_{1a}+U_{0a})\gamma}{N_{1a}+N_{0a}} = & 
    \frac{N_{0a}\gamma}{N_{1a}+N_{0a}}+\frac{U_{0a}-(U_{1a}+U_{0a})\gamma}{N_{1a}+N_{0a}}
    \notag \\ \le & \gamma + \frac{U_{0a}-(U_{1a}+U_{0a})\gamma}{N_{1a}+N_{0a}}
    \notag \\ \le & \gamma + \frac{U_{0a}}{N_{1a}+N_{0a}}
    \notag \\ \le & \gamma + \frac{k}{N_{1a}+N_{0a}}.
\end{align}
Therefore, we have
$$\frac{T_{1a}-T_{0a}}{N_{1a}+N_{0a}} \in \left[-\gamma-\frac{k}{N_{1a}+N_{0a}},\gamma + \frac{k}{N_{1a}+N_{0a}}\right],$$
and
$$\frac{T_{0b}-T_{1b}}{N_{1b}+N_{0b}} \in \left[-\gamma-\frac{k}{N_{1b}+N_{0b}},\gamma + \frac{k}{N_{1b}+N_{0b}}\right].
$$
So
\begin{align} \label{eq:proof_theorem2}
    \left|\frac{T_{1a}-T_{0a}}{N_{1a}+N_{0a}}+\frac{T_{0b}-T_{1b}}{N_{1b}+N_{0b}}\right| \le 2\gamma+ \left(\frac{1}{N_{1a}+N_{0a}}+\frac{1}{N_{1b}+N_{0b}}\right)k.
\end{align}
The last step is similar to Appendix \ref{Proof_of_bounded}, but a relaxation term is added. Combining \eqref{eq:theorem1_proof_eq0} and  \eqref{eq:proof_theorem2} together, we have
$$ \left|\widehat{DDP}_{\mathcal{S}}\right| \le \frac{1}{2}\epsilon + \gamma + \frac{1}{2} \left(\frac{1}{N_{1a}+N_{0a}}+\frac{1}{N_{1b}+N_{0b}}\right) k.$$
The proof is complete.

\end{proof}

\subsection{Risk Bound} \label{risk_bound_section}

We show the risk bound of $DDP$, and then provide some discussion about the surrogate function in optimization.

\begin{theorem} \label{risk_bound_ddp}
For a given constant $\delta > 0$, there holds:
$$P\left( \left|\widehat{DDP}_{\mathcal{S}}-DDP \right| \le  t \right) \ge  1-\delta,$$
where $t=\sqrt{\frac{1}{2}\ln{\frac{2}{\delta}}} \cdot \sqrt{\frac{N}{(N-1)^2}}$.
\end{theorem}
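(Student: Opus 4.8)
The plan is to view $\widehat{DDP}_{\mathcal{S}}$ as a pair of empirical averages of bounded Bernoulli-type quantities that estimate the two conditional acceptance probabilities appearing in $DDP$, and then apply a concentration inequality to each group separately. First I would write $\widehat{DDP}_{\mathcal{S}} = \widehat{P}_a - \widehat{P}_b$, where $\widehat{P}_a = N_{1a}/(N_{1a}+N_{0a})$ is the empirical frequency of positive prediction within the $z=+1$ group and $\widehat{P}_b$ is the analogous quantity for the $z=-1$ group, and similarly $DDP = P_a - P_b$ with $P_a = P(d_\theta(\mathbf{x})>0 \mid z=+1)$ etc. Each $\widehat{P}_a$ is an average of $N_a := N_{1a}+N_{0a}$ i.i.d.\ indicator variables $\mathbbm{1}_{d_\theta(\mathbf{x})>0}$ drawn from the conditional distribution given $z=+1$, each bounded in $[0,1]$, so Hoeffding's inequality gives $P(|\widehat{P}_a - P_a| > s) \le 2\exp(-2 N_a s^2)$, and likewise for group $b$.

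Next I would combine the two one-sided controls. By the triangle inequality, $|\widehat{DDP}_{\mathcal{S}} - DDP| \le |\widehat{P}_a - P_a| + |\widehat{P}_b - P_b|$, so it suffices to bound each term by $t/2$. Setting each failure probability to $\delta/2$ and union-bounding, I would get that with probability at least $1-\delta$, $|\widehat{P}_a - P_a| \le \sqrt{\frac{1}{2 N_a}\ln\frac{4}{\delta}}$ and similarly for $b$ — though getting the stated constant requires being a bit more careful, since the claimed bound $t = \sqrt{\tfrac12 \ln\tfrac{2}{\delta}}\cdot\sqrt{\tfrac{N}{(N-1)^2}}$ has only a single $\ln\frac{2}{\delta}$ factor and a factor involving $N$ rather than the group sizes $N_a, N_b$. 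This suggests the intended argument does not split into two independent applications of Hoeffding but instead treats $\widehat{DDP}_{\mathcal{S}}$ directly as a single function of the full sample of size $N$. Concretely, I would try to express $\widehat{DDP}_{\mathcal{S}}$ (or a quantity differing from it by a negligible amount) as a function of the $N$ i.i.d.\ draws $(\mathbf{x}_i, z_i)$ whose bounded-differences coefficient, when one coordinate is resampled, is of order $1/(N-1)$ — the $N/(N-1)^2$ shape is exactly what falls out of McDiarmid's inequality when each coordinate changes the statistic by roughly $1/(N-1)$ and one sums $c_i^2$ over $N$ coordinates. So the real plan is: verify that changing one sample point $(\mathbf{x}_i,z_i)$ alters $\widehat{DDP}_{\mathcal{S}}$ by at most something like $\tfrac{1}{N-1}$ (since removing/replacing a point shifts one of the two ratios, whose denominator is at least of order $N-1$ in the relevant regime), then apply McDiarmid to get $P(|\widehat{DDP}_{\mathcal{S}} - \mathbb{E}\widehat{DDP}_{\mathcal{S}}| > t) \le 2\exp(-2t^2/\sum_i c_i^2)$, and finally argue $\mathbb{E}\widehat{DDP}_{\mathcal{S}}$ is close to (or equal to) $DDP$.

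The main obstacle I anticipate is pinning down the bounded-differences coefficients precisely enough to land the exact constant $\sqrt{N/(N-1)^2}$: the group sizes $N_a$ and $N_b$ are themselves random (they depend on the $z_i$), so $\widehat{DDP}_{\mathcal{S}}$ is a ratio statistic rather than a clean average, and one must bound how much both the numerator and the denominator of each ratio move when a single coordinate is perturbed — including the case where perturbing $z_i$ moves a point from one group to the other. A secondary subtlety is the gap between $\mathbb{E}\widehat{DDP}_{\mathcal{S}}$ and $DDP$: because of the random denominators, $\mathbb{E}[\widehat{P}_a]\neq P_a$ exactly, so either the theorem is implicitly conditioning on the group sizes (treating $N_a, N_b$ as fixed, in which case the two-group Hoeffding route is cleanest and the $N/(N-1)^2$ must come from a worst-case bound over the split), or one needs an additional small term absorbed into the statement. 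I would resolve this by stating the argument conditionally on $N_a,N_b$ if the paper's setup permits, and otherwise by the McDiarmid route with an explicit check that the per-coordinate perturbation is at most $1/(N-1)$ in absolute value, so that $\sum_i c_i^2 \le N/(N-1)^2$ and McDiarmid yields exactly the claimed $t$.
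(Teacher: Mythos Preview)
Your proposal is considerably more careful than what the paper actually does. The paper's proof is a two-liner: it first computes $\mathbb{E}[\widehat{DDP}_{\mathcal{S}}]=DDP$ by pushing the expectation through the two sums (implicitly treating the group sizes $N_{1a}+N_{0a}$ and $N_{1b}+N_{0b}$ as deterministic), and then invokes Hoeffding's inequality in one shot, citing only that the indicator $\mathbbm{1}_{d_\theta(\mathbf{x})>0}\in[0,1]$, to obtain
\[
P\big(|\widehat{DDP}_{\mathcal{S}}-DDP|\le t\big)\ge 1-2\exp(-2t^2N),
\]
and then solves for $t=\sqrt{\tfrac{1}{2}\ln\tfrac{2}{\delta}}\cdot\sqrt{1/N}$. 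Note that the proof actually lands on $\sqrt{1/N}$, not the $\sqrt{N/(N-1)^2}$ printed in the statement; since $\sqrt{N/(N-1)^2}\ge\sqrt{1/N}$, the stated theorem follows a fortiori, and the $N/(N-1)^2$ factor appears to be an artifact (it matches the constant in a separate risk bound for the covariance proxy in the same appendix).

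So the differences are: (i) you go through McDiarmid with a per-coordinate sensitivity of order $1/(N-1)$ precisely to manufacture the $N/(N-1)^2$ shape, whereas the paper never touches McDiarmid and does not in fact derive that constant; (ii) you worry, correctly, about the random denominators and the bias of $\widehat{P}_a$, whereas the paper silently conditions on the $z_i$ so that $N_a,N_b$ are fixed and $\mathbb{E}[\widehat{P}_a]=P_a$ exactly. Your route is more rigorous and would work; what it buys is an honest treatment of the ratio-statistic issue. What the paper's route buys is brevity, at the cost of leaving the conditioning implicit and not justifying why a single Hoeffding application with sample size $N$ is valid for a difference of two group-wise averages. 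If your goal is to match the paper, drop McDiarmid, condition on the sensitive attributes, and apply Hoeffding directly; the constant you will get is $\sqrt{1/N}$, which implies the stated bound.
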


\begin{proof}
We know that
\begin{align}
    \mathbb{E}\left(\widehat{DDP}_{\mathcal{S}}\right) & =  \mathbb{E}\left(\frac{\sum_{\mathcal{N}_{1a},\mathcal{N}_{0a}}\mathbbm{1}_{d_\theta(\mathbf{x})>0}}{N_{1a}+N_{0a}}-\frac{\sum_{\mathcal{N}_{1b},\mathcal{N}_{0b}}\mathbbm{1}_{d_\theta(\mathbf{x})>0}}{N_{1b}+N_{0b}}\right)
    \notag \\ & = \frac{\sum_{\mathcal{N}_{1a},\mathcal{N}_{0a}}\mathbb{E}(\mathbbm{1}_{d_\theta(\mathbf{x})>0})}{N_{1a}+N_{0a}}-\frac{\sum_{\mathcal{N}_{1b},\mathcal{N}_{0b}}\mathbb{E}(\mathbbm{1}_{d_\theta(\mathbf{x})>0})}{N_{1b}+N_{0b}}
    \notag \\ & = \frac{\sum_{\mathcal{N}_{1a},\mathcal{N}_{0a}}\mathbb{E}(\mathbbm{1}_{d_\theta(\mathbf{x})>0|z=+1})}{N_{1a}+N_{0a}}-\frac{\sum_{\mathcal{N}_{1b},\mathcal{N}_{0b}}\mathbb{E}(\mathbbm{1}_{d_\theta(\mathbf{x})>0|z=-1})}{N_{1b}+N_{0b}}
    \notag \\ & = \mathbb{E}(\mathbbm{1}_{d_\theta(\mathbf{x})>0|z=+1})-\mathbb{E}(\mathbbm{1}_{d_\theta(\mathbf{x})>0|z=-1})
    \notag \\ & = P(d_\theta(\mathbf{x})>0 | z=+1)-P(d_\theta(\mathbf{x})>0 | z=-1)
    \notag \\ & = DDP \notag .
\end{align}

Since the indicator function satisfies $\mathbbm{1}_{[\cdot]} \in [0,1]$, so according to the Hoeffding inequality, 
$$P(\left|\widehat{DDP}_{\mathcal{S}}-DDP \right| \le  t) \ge  1-2\exp{(-2t^2N)}.$$
We set $\delta=2\exp{(-2t^2N)}$ so that $t=\sqrt{\frac{1}{2}\ln{\frac{2}{\delta}}} \cdot \sqrt{\frac{1}{N}}=O(\frac{1}{\sqrt{N}})$. Substitute them into Theorem \ref{risk_bound_ddp} above, we can prove the result.
\end{proof}

\paragraph{Fairness guarantee.}
From the above theorem, one can see that, with a probability more than $1-\delta$, the discrepancy between the $\widehat{DDP}_{\mathcal{S}}$ and $DDP$ can be small enough with the order $\mathcal{O}(\frac{1}{\sqrt{N}})$. Also, Proposition \ref{DDP_Cov_theorem} and Theorem \ref{risk_bound_ddp} together tell us that with a big dataset (N is large), if $\widetilde{DDP}_{\mathcal{S}}(\phi)$ and the gap are both small enough, then we can state with a high probability that the classifier satisfies the fairness constraint. 

Recall that $\widetilde{DDP}_{\mathcal{S}}(\phi)$ is used for optimization, $DDP$ is the real fairness criterion, and $\widehat{DDP}_{\mathcal{S}}$ is an estimator of $DDP$. Therefore, it is crucial to theoretically build a bound between the practice ($\widetilde{DDP}_{\mathcal{S}}(\phi)$) and the objective ($DDP$). To address this issue, note that:
$$\left| \widetilde{DDP}_{\mathcal{S}}(\phi)-DDP \right| \le \underbrace{\left| \widetilde{DDP}_{\mathcal{S}}(\phi)-\widehat{DDP}_{\mathcal{S}} \right|}_{\text{surrogate-fairness gap}} + \underbrace{\left| \widehat{DDP}_{\mathcal{S}}-DDP \right|}_{\text{risk bound of DDP}}.$$

Therefore, if we have an very large dataset (a small risk bound of $DDP$) and an appropriate surrogate function with fairness guarantee (a bounded surrogate-fairness gap), then there is theoretical guarantee for $\left| \widetilde{DDP}_{\mathcal{S}}(\phi)-DDP \right|$.

\paragraph{Is infinitesimal $\widehat{DDP}_{\mathcal{S}}$ better?}
However, although the discrepancy between the $\widehat{DDP}_{\mathcal{S}}$ and $DDP$ can be small enough, the discrepancy still exists, which means that a classifier with $\widehat{DDP}_{\mathcal{S}} \approx 0$ may not satisfy $DDP \approx 0$. The principle is similar for ERM (Empirical Risk Minimization): a small enough empirical loss may lead to overfitting so that an infinitesimal loss is not always better. Therefore, an infinitesimal $\widehat{DDP}_{\mathcal{S}}$ is also not always better.

\paragraph{The surrogate loss function in machine learning.}

\begin{figure*}[t]
\centering
\includegraphics[width=0.8\textwidth]{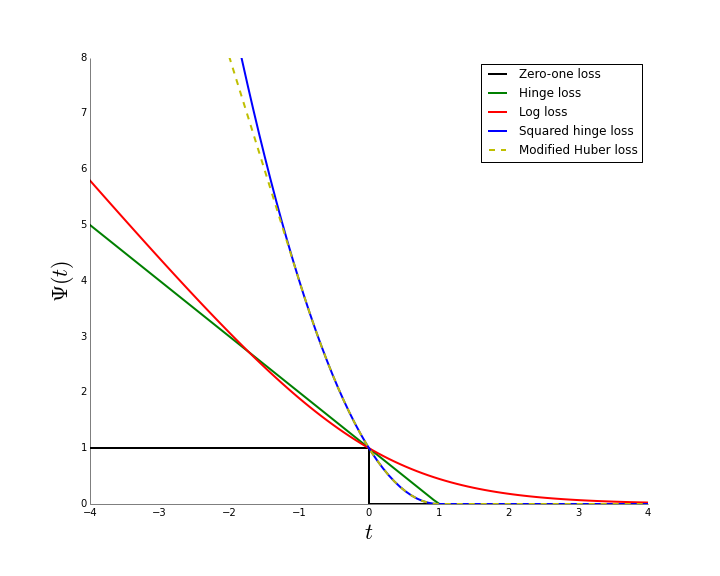}
\caption{The surrogate loss functions $\Psi$ in machine learning. The figure is adapted from 
\href{https://fa.bianp.net/blog/2014/surrogate-loss-functions-in-machine-learning/}{this link}.}
\label{figure:loss_surrogate}
\end{figure*}

In machine learning, a surrogate loss function is an auxiliary function used during optimization, which is easier to optimize than the original objective function. Some examples of surrogate loss functions are in Figure~\ref{figure:loss_surrogate}. Surrogate loss functions are particularly useful when dealing with complex or non-differentiable objective functions. The idea is to approximate the difficult parts of the objective with a surrogate that has more tractable properties (like smoothness or convexity) for optimization algorithms. Here are some examples:

\begin{itemize}
    \item Surrogate loss functions are often smooth and convex, even if the original objective is not. This allows for the use of gradient-based optimization techniques, which can speed up the learning process.
    \item Use in various algorithms: Surrogate loss functions are used in various machine learning algorithms, including support vector machines (where the hinge loss function acts as a surrogate for the 0-1 loss function), boosting methods (which build a surrogate to focus on examples that are hard to predict).
    \item Loss approximation: In classification, surrogate loss functions are used to approximate the original loss function. For example, logistic loss and hinge loss are surrogates for the 0-1 loss function in logistic regression and support vector machines, respectively.
\end{itemize}

\paragraph{Surrogate loss functions and fairness surrogate functions.}
The foundation of fairness surrogate function is similar to the surrogate loss function: both of them aim to approximate the 0-1 function. When minimizing the loss function, there are two kinds of insights to choose surrogate functions:

\begin{itemize}
    \item \textbf{Upper bound of 0-1 function}: Taking surrogate loss functions as an example. As suggested in Figure~\ref{figure:loss_surrogate}, all of these surrogate loss functions are upper bounds of 0-1 function. Therefore, when the loss function is minimized to a small value, there is generalization guarantee for the classifier~\citep{mohri2018foundations}. Unfortunately, as suggested in Figure~\ref{motiv:Surrogate_Functions}, the fairness surrogate functions mentioned in existing work are not upper bounds of 0-1 function.
    \item \textbf{Estimator of 0-1 function}: As shown in Figure~\ref{motiv:Surrogate_Functions}, we aim to better estimate the 0-1 function for fairness. Our empirical results show that our general sigmoid estimation performs fairer results. 
\end{itemize}

Overall, we believe that the fairness community can be inspired by machine learning and optimization. How to design better fairness surrogate functions is intriguing. Insights from surrogate loss functions may be promising in the fairness community for future work.

\subsection{Variance Analysis} \label{appendix:variance_analysis}




\begin{theorem} \label{theorem:variance_upper_bound}
Let $N_a = N_{1a}+N_{0a}, N_b = N_{1b}+N_{0b}, P_a = P(d_\theta(\mathbf{x})>0 | z=+1), P_b = P(d_\theta(\mathbf{x})>0 | z=-1)$. We assume that each dataset consisting of $N$ points is independently drawn from a dataset distribution. The variance is computed over the dataset distribution. Then there holds:
\begin{align}
    Var\left(\widehat{DDP}_{\mathcal{S}}\right) \le \frac{1}{4}\left( \frac{1}{N_a} + \frac{1}{N_b} \right).
\end{align}
\end{theorem}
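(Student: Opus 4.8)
The plan is to express $\widehat{DDP}_{\mathcal{S}}$ as the difference of two independent quantities and bound the variance of each separately. First I would write $\widehat{DDP}_{\mathcal{S}} = \hat{P}_a - \hat{P}_b$, where $\hat{P}_a = \frac{N_{1a}}{N_{1a}+N_{0a}} = \frac{1}{N_a}\sum_{(\mathbf{x},y)\in\mathcal{N}_{1a}\cup\mathcal{N}_{0a}}\mathbbm{1}_{d_\theta(\mathbf{x})>0}$ is the empirical acceptance rate on the group $z=+1$, and similarly for $\hat{P}_b$ on the group $z=-1$. Since the two sensitive groups are disjoint and the $N$ points are drawn independently, $\hat{P}_a$ and $\hat{P}_b$ are independent random variables (conditionally on the group sizes, which I would treat as fixed, matching the statement's notation $N_a, N_b$). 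Hence $Var\left(\widehat{DDP}_{\mathcal{S}}\right) = Var(\hat{P}_a) + Var(\hat{P}_b)$.

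Next I would bound each term. Conditioned on the group size $N_a$, the quantity $N_a \hat{P}_a$ is a sum of $N_a$ i.i.d. Bernoulli$(P_a)$ indicators, so $Var(\hat{P}_a) = \frac{P_a(1-P_a)}{N_a}$. Using the elementary bound $P_a(1-P_a)\le \tfrac14$ for any $P_a\in[0,1]$, we get $Var(\hat{P}_a)\le \frac{1}{4N_a}$, and likewise $Var(\hat{P}_b)\le \frac{1}{4N_b}$. Adding these yields
\begin{align}
    Var\left(\widehat{DDP}_{\mathcal{S}}\right) \le \frac{1}{4N_a} + \frac{1}{4N_b} = \frac{1}{4}\left(\frac{1}{N_a}+\frac{1}{N_b}\right),
\end{align}
which is the claim.

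The main obstacle, and the only point requiring care, is the treatment of the group sizes $N_a, N_b$: strictly speaking these are themselves random under the dataset distribution, so one should either (i) condition on $(N_a, N_b)$, derive the bound conditionally — noting the bound $\frac14(1/N_a + 1/N_b)$ holds for every realization — and then invoke the law of total variance together with the fact that the conditional \emph{mean} $P_a - P_b$ does not depend on the sizes (so the between-groups variance term vanishes), or (ii) simply adopt the fixed-design convention implicit in the statement's notation. I would go with route (i) for rigor: write $Var(\widehat{DDP}_{\mathcal{S}}) = \mathbb{E}\left[Var(\widehat{DDP}_{\mathcal{S}}\mid N_a,N_b)\right] + Var\left(\mathbb{E}[\widehat{DDP}_{\mathcal{S}}\mid N_a,N_b]\right)$, observe the second term is $Var(P_a - P_b) = 0$, and bound the first term by $\mathbb{E}\left[\frac14(1/N_a+1/N_b)\right]$; since the per-realization bound is uniform, this still gives the stated inequality once $N_a, N_b$ are interpreted as the (possibly expected, or fixed) sizes appearing in the theorem. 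Everything else is routine.
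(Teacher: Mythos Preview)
Your proposal is correct and follows essentially the same route as the paper: both arrive at the exact expression $Var(\widehat{DDP}_{\mathcal{S}}) = P_a(1-P_a)/N_a + P_b(1-P_b)/N_b$ via independence of the two sensitive groups and then bound each Bernoulli variance by $1/4$. Your execution is more direct (invoking additivity of variance for independent summands immediately, whereas the paper expands $\mathbb{E}X^2 - (\mathbb{E}X)^2$ term by term), and your remark on handling random $N_a,N_b$ through the law of total variance addresses a subtlety the paper simply treats as a fixed-design convention.
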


\begin{proof}
Using the equation $Var(x)=\mathbb{E}(x^2)-(\mathbb{E}x)^2$, we have
\begin{align}
    Var\left(\widehat{DDP}_{\mathcal{S}}\right) = & Var\left(\frac{\sum_{\mathcal{N}_{1a},\mathcal{N}_{0a}}\mathbbm{1}_{d_\theta(\mathbf{x})>0}}{N_{1a}+N_{0a}}-\frac{\sum_{\mathcal{N}_{1b},\mathcal{N}_{0b}}\mathbbm{1}_{d_\theta(\mathbf{x})>0}}{N_{1b}+N_{0b}}\right) 
    \notag \\ = & \underbrace{\mathbb{E}\left(\frac{\sum_{\mathcal{N}_{1a},\mathcal{N}_{0a}}\mathbbm{1}_{d_\theta(\mathbf{x})>0}}{N_{1a}+N_{0a}}-\frac{\sum_{\mathcal{N}_{1b},\mathcal{N}_{0b}}\mathbbm{1}_{d_\theta(\mathbf{x})>0}}{N_{1b}+N_{0b}}\right)^2}_{T} - \left[\underbrace{\mathbb{E}(\widehat{DDP}_{\mathcal{S}})}_{=DDP}\right]^2  \label{variance_ddp_all}
\end{align}
where
$$T=\mathbb{E}\left\{ \underbrace{\left[ \frac{\sum_{\mathcal{N}_{1a},\mathcal{N}_{0a}}\mathbbm{1}_{d_\theta(\mathbf{x})>0}}{N_{1a}+N_{0a}} \right]^2}_{T_1} + \underbrace{\left[ \frac{\sum_{\mathcal{N}_{1b},\mathcal{N}_{0b}}\mathbbm{1}_{d_\theta(\mathbf{x})>0}}{N_{1b}+N_{0b}} \right]^2}_{T_2} -  \underbrace{\frac{2(\sum_{\mathcal{N}_{1a},\mathcal{N}_{0a}}\mathbbm{1}_{d_\theta(\mathbf{x})>0})(\sum_{\mathcal{N}_{1b},\mathcal{N}_{0b}}\mathbbm{1}_{d_\theta(\mathbf{x})>0})}{(N_{1a}+N_{0a})(N_{1b}+N_{0b})}}_{T_3} \right\}$$

With the linearity of expectation, we deal with the three terms in $T$ respectively.
First of all, 
\begin{align} \label{variance_ddp_0}
    \mathbb{E}(T_1) = \frac{1}{(N_{1a}+N_{0a})^2}\mathbb{E}\left[ \sum_{\mathcal{N}_{1a},\mathcal{N}_{0a}}\mathbbm{1}_{d_\theta(\mathbf{x})>0} \right]^2 
\end{align}
and
\begin{align}
    & \mathbb{E} \left[ \sum_{\mathcal{N}_{1a},\mathcal{N}_{0a}}\mathbbm{1}_{d_\theta(\mathbf{x})>0} \right]^2 \notag \\ & = Var\left(\sum_{\mathcal{N}_{1a},\mathcal{N}_{0a}}\mathbbm{1}_{d_\theta(\mathbf{x})>0}\right) + \left(\mathbb{E}\sum_{\mathcal{N}_{1a},\mathcal{N}_{0a}}\mathbbm{1}_{d_\theta(\mathbf{x})>0}\right)^2
    \notag \\ & = \sum_{\mathcal{N}_{1a},\mathcal{N}_{0a}} Var\left( \mathbbm{1}_{d_\theta(\mathbf{x})>0} \right) + \left(\sum_{\mathcal{N}_{1a},\mathcal{N}_{0a}}\mathbb{E} \left(\mathbbm{1}_{d_\theta(\mathbf{x})>0}\right)\right)^2 
    \notag \\ & = \sum_{\mathcal{N}_{1a},\mathcal{N}_{0a}} Var\left( \mathbbm{1}_{d_\theta(\mathbf{x})>0|z=+1} \right) + \left(\sum_{\mathcal{N}_{1a},\mathcal{N}_{0a}}\mathbb{E} \left(\mathbbm{1}_{d_\theta(\mathbf{x})>0|z=+1}\right)\right)^2 
    \notag \\ & = (N_{1a}+N_{0a}) Var\left( \mathbbm{1}_{d_\theta(\mathbf{x})>0|z=+1} \right) + (N_{1a}+N_{0a})^2 \left[P(d_\theta(\mathbf{x})>0 | z=+1)\right]^2 \label{variance_ddp_1}
\end{align}
Also,
\begin{align}
    Var\left( \mathbbm{1}_{d_\theta(\mathbf{x})>0|z=+1} \right) & = \mathbb{E}(\mathbbm{1}_{d_\theta(\mathbf{x})>0|z=+1})^2 - (\mathbb{E}(\mathbbm{1}_{d_\theta(\mathbf{x})>0|z=+1}))^2 \notag \\ & = \mathbb{E}(\mathbbm{1}_{d_\theta(\mathbf{x})>0|z=+1}) - (P(d_\theta(\mathbf{x})>0 | z=+1))^2 \notag \\ & = P(d_\theta(\mathbf{x})>0 | z=+1) - (P(d_\theta(\mathbf{x})>0 | z=+1))^2 \label{variance_ddp_2}
\end{align}

Combining \eqref{variance_ddp_0}, \eqref{variance_ddp_1} and \eqref{variance_ddp_2}, we have
\begin{align} \label{variance_ddp_t1}
    \mathbb{E}(T_1) = \left( 1-\frac{1}{N_{1a}+N_{0a}} \right)(P(d_\theta(\mathbf{x})>0 | z=+1))^2 + \frac{1}{N_{1a}+N_{0a}} P(d_\theta(\mathbf{x})>0 | z=+1).
\end{align}

Similarly, we have
\begin{align} \label{variance_ddp_t2}
    \mathbb{E}(T_2) = \left( 1-\frac{1}{N_{1b}+N_{0b}} \right)(P(d_\theta(\mathbf{x})>0 | z=-1))^2 + \frac{1}{N_{1b}+N_{0b}} P(d_\theta(\mathbf{x})>0 | z=-1).
\end{align}

Finally, because of the fact that
\begin{align} 
& \mathbb{E}\left(\sum_{\mathcal{N}_{1a},\mathcal{N}_{0a}}\mathbbm{1}_{d_\theta(\mathbf{x})>0} \quad \cdot \sum_{\mathcal{N}_{1b},\mathcal{N}_{0b}}\mathbbm{1}_{d_\theta(\mathbf{x})>0}\right) 
\notag \\ = & \mathbb{E}\left( \sum_{\mathcal{N}_{1a},\mathcal{N}_{0a}}\mathbbm{1}_{d_\theta(\mathbf{x})>0} \right) \cdot \mathbb{E}\left( \sum_{\mathcal{N}_{1b},\mathcal{N}_{0b}}\mathbbm{1}_{d_\theta(\mathbf{x})>0} \right) + Cov\left(\sum_{\mathcal{N}_{1a},\mathcal{N}_{0a}}\mathbbm{1}_{d_\theta(\mathbf{x})>0}\quad , \quad\sum_{\mathcal{N}_{1b},\mathcal{N}_{0b}}\mathbbm{1}_{d_\theta(\mathbf{x})>0}\right) 
\notag \\ = & (N_{1a}+N_{0a})P(d_\theta(\mathbf{x})>0 | z=+1) \cdot (N_{1b}+N_{0b})P(d_\theta(\mathbf{x})>0 | z=-1)  + \notag \\ & \sum_{\mathcal{N}_{1a},\mathcal{N}_{0a}} \sum_{\mathcal{N}_{1b},\mathcal{N}_{0b}} \underbrace{Cov\left(\mathbbm{1}_{d_\theta(\mathbf{x})>0|z=+1}, \mathbbm{1}_{d_\theta(\mathbf{x})>0|z=-1}\right)}_{0} 
\notag \\ = & (N_{1a}+N_{0a})P(d_\theta(\mathbf{x})>0 | z=+1) \cdot (N_{1b}+N_{0b})P(d_\theta(\mathbf{x})>0 | z=-1), \notag
\end{align}
we have
\begin{align} \label{variance_ddp_t3}
    \mathbb{E}(T_3) = & \mathbb{E}\left[\frac{2(\sum_{\mathcal{N}_{1a},\mathcal{N}_{0a}}\mathbbm{1}_{d_\theta(\mathbf{x})>0})(\sum_{\mathcal{N}_{1b},\mathcal{N}_{0b}}\mathbbm{1}_{d_\theta(\mathbf{x})>0})}{(N_{1a}+N_{0a})(N_{1b}+N_{0b})}\right] 
    \notag \\ = & \frac{2(N_{1a}+N_{0a})P(d_\theta(\mathbf{x})>0 | z=+1) \cdot (N_{1b}+N_{0b})P(d_\theta(\mathbf{x})>0 | z=-1)}{(N_{1a}+N_{0a})(N_{1b}+N_{0b})}
    \notag \\ = & 2P_aP_b.
\end{align}


Overall, combining \eqref{variance_ddp_t1},\eqref{variance_ddp_t2},\eqref{variance_ddp_t3} with \eqref{variance_ddp_all}, we have:
\begin{align}
    Var\left(\widehat{DDP}_{\mathcal{S}}\right) & = \mathbb{E}(T_1) + \mathbb{E}(T_2) - \mathbb{E}(T_3) - (DDP)^2 \notag \\ & = \underbrace{\left(1-\frac{1}{N_a}\right)P_a^2 + \frac{1}{N_a}P_a}_{\mathbb{E}(T_1)} + \underbrace{\left(1-\frac{1}{N_b}\right)P_b^2 + \frac{1}{N_b}P_b}_{\mathbb{E}(T_2)} - \underbrace{2P_aP_b}_{\mathbb{E}(T_3)} - \underbrace{(P_a-P_b)^2}_{DDP^2} \notag \\ & = 
    -\frac{1}{N_a}P_a^2 - \frac{1}{N_b}P_b^2 + \frac{1}{N_a}P_a + \frac{1}{N_b}P_b\notag \\ & = \frac{1}{4}\left( \frac{1}{N_a} + \frac{1}{N_b} \right) - \frac{1}{N_a}(P_a-\frac{1}{2})^2 - \frac{1}{N_b}(P_b-\frac{1}{2})^2 \tag{$P_a+P_b=1$} \\ & = \frac{1}{4}\left( \frac{1}{N_a} + \frac{1}{N_b} \right) - \left( \frac{1}{N_a}+\frac{1}{N_b} \right)(P_a-\frac{1}{2})^2
    \\ & < 1 \notag
\end{align}
For the maximum variance, $P_a$ takes the value $\frac{1}{2}$,
and the maximum variance is
\begin{align}
    \max{\left[ Var\left(\widehat{DDP}_{\mathcal{S}}\right) \right]} = \frac{1}{4}\left( \frac{1}{N_a} + \frac{1}{N_b} \right).
\end{align}

When $P_a=0$ or $P_a=1$, there holds $Var\left(\widehat{DDP}_{\mathcal{S}}\right)=0$. Furthermore, the result also provides insights into the advantages of a \textit{balanced dataset} for stability.
\end{proof}

We place the computation of $Var\left(\widehat{DDP}_{\mathcal{S}}\right)$ for the convenience of researchers interested in exploration. For example, designing alternative surrogate functions and algorithms based on these expressions to reduce variance could be an intriguing avenue for research.

\section{Further Experimental Details and Results}

\subsection{The Balanced Surrogates Algorithm} \label{pseudo_code_balance}
The process of balanced surrogates is shown in Algorithm \ref{balance_surrogate_algorithm}. Computations associated with $\lambda_t$ cost $\mathcal{O}(N)$. Consider solving \eqref{original_opt_problem_after_mapping} gives rise to $\mathcal{O}(M)$ computations. So Algorithm \ref{balance_surrogate_algorithm} costs $\mathcal{O}(t(M+N))$.

\begin{algorithm} 
  \caption{Balanced Surrogates}
  \label{balance_surrogate_algorithm}
\begin{algorithmic}[1]
  \STATE {\bfseries Input:} Training set $\{( \mathbf{x}_i ,y_i) \}_{i=1}^N$.
  \STATE {\bfseries Initialization:} Balance factor $\lambda$, surrogate function $\phi_1$, termination threshold $\eta$, maximum iterations $\tau$ and other parameters.
  \STATE Train an unconstrained classifier and obtain the model parameter $\theta_0$.
  \STATE Time step $t=0$.
  \REPEAT
  \STATE $t=t+1$.
  \STATE Solve the optimization problem \eqref{original_opt_problem_after_mapping} (start at $\theta_0$).
  \STATE Compute $d_\theta(\mathbf{x}_i), i=1,\cdots,N$ and $N_{1a}, N_{1b}, N_{0a}, N_{0b}$.
  \STATE Obtain $\lambda^{\prime}_t$ via \eqref{result_balance_factor}.
  \STATE Obtain $\lambda_t$ by exponential smoothing via \eqref{exp_smoothing}.
  \IF{$\lambda_t \le 0$}
  \STATE Set $\lambda_t = 1$.
  \ENDIF
  \UNTIL{$\left| \lambda_t-\lambda_{t-1} \right| \le \eta$ {\bfseries or} $t=\tau$ {\bfseries or} $\lambda_t \le 0$}
  \STATE {\bfseries Output:} $\lambda_t$, $d_\theta(\mathbf{x}_i), i=1,\cdots,N$ and $N_{1a}, N_{1b}, N_{0a}, N_{0b}$.
\end{algorithmic}
\end{algorithm}

\paragraph{Discussion of motivation of this algorithm.}

One may question the motivation of balanced surrogate that the learning algorithm will try to find a balance between the empirical loss and the fairness regularization, and it may suffice to use the upper bound of $\widehat{DDP}_{\mathcal{S}}$ as the fairness regularization. In order to clarify it, we provide more discussion of motivation of the balanced surrogate algorithm.

We believe that previous algorithms with fairness regularization aim to find a balance between the empirical loss and the fairness regularization. However, in this paper, we want to uncover the issue that when we use fairness surrogate functions as the constraint/regularization, there will be a surrogate-fairness gap between the constraint/regularization and the true fairness definition. Therefore, we emphasize that some widely used fairness surrogate function, such as the unbounded surrogates analyzed in this paper, may not serve as an appropriate regularization because of the surrogate-fairness gap.

Thus, focusing on fairness surrogate functions, we provide the balanced surrogate algorithm to reduce the gap, thus trying to help fairness surrogate functions to achieve a better fairness performance. In the experiments, the balanced surrogate method consistently helps the unbounded surrogates (log-sigmoid, linear, and hinge surrogates) to be fairer, while it benefits the bounded surrogates (sigmoid and the general sigmoid surrogate) in most cases. It suggests that reducing the gap is useful for unbounded surrogates. In this case, it may not suffice to use the unbounded upper bound of $\widehat{DDP}_{\mathcal{S}}$. However, when we use bounded surrogates, making the gap infinitesimal may not always be beneficial to fairness. In this case, such phenomenon may be in line with your concern. Overall, how to strike a balance between the gap and the surrogate function for better fairness is intriguing, but it may be outside the scope of this work.





\subsection{Details in Figure \ref{boxplot}} \label{details_boxplot}

The datasets are mentioned in Section \ref{experiment}. We randomly split each dataset into train (70\%) and test (30\%) set. Then for each dataset, we train two logistic regression classifiers, one is the unconstrained classifier and the other is the constrained optimization problem \citep{cov} with 0 covariance threshold.

\subsection{The Parameters} \label{sec:parameter}

\paragraph{General sigmoid surrogate.}
\begin{itemize}
    \item The regularization coefficient $\lambda$ in \eqref{original_opt_problem_after_mapping} lies in the grid $[0.1, 0.2, \cdots, 5]$. 
    \item The parameter $w$ in general sigmoid is chosen from $\{ 1, 2, 4, 8, 16 \}$ according to the fairness performance on the validation set.
\end{itemize}

\paragraph{Balanced surrogate.}
\begin{itemize}
    \item The smoothing factor $\alpha=0.9$.
    \item The termination threshold $\eta=0.01$.
\end{itemize}

\paragraph{Discussion of hyper-parameters in our algorithms.}
Indeed, our approaches do require some additional hyper-parameter tuning:
\begin{itemize}
    \item For $w$ in general sigmoid, as shown in our experimental settings in the appendix, it is selected from \{1, 2, 4, 8, 16\} according to the fairness performance on the validation set. Thus, it requires selecting an approximate range for it. 
    \item For the regularization coefficient $\gamma$, it is conventional in machine learning to adjust it. Our experiments also require selecting an approximate range for it.
    \item The smoothing factor $\alpha$, and termination threshold $\eta$ are hyper-parameters, but we fix them and never change them throughout our experiments.
    \item For $\gamma$ in the balanced surrogate approach, it does not require any hyper-parameter tuning because it is automatically specified in the algorithm.
\end{itemize}

Therefore, in our experiments, we only consider tuning two hyper-parameters: $w$ and $\gamma$. And hyper-parameter optimization tools (such as \texttt{hyperopt}
\footnote{\url{https://github.com/hyperopt/hyperopt}}
) may be useful to aid the cost of hyper-parameter tuning.

\subsection{Upsampling and Downsampling} \label{appendix:up_down}

For Adult and COMPAS, the imbalance issue is not too severe (about 2:1 and 3:2, respectively), so we balance between different demographic groups by randomly selecting samples and form the new training dataset such that samples in each group are of \textit{the same number}. However, for Bank Marketing, the ratio of two groups approaches 20:1, so the sampling scheme for Adult and COMPAS is not suitable because it will cause 19 copies of the minority groups (overfitting). As a result, we only incorporate 1 extra copy of the minority group into the training set when upsampling.

\subsection{Numerical Results of Figure \ref{exp:main}-\ref{exp:balance}} \label{appendix:numerical_result}
Full results are in Table \ref{numerical_result}. Standard deviation is shown for the metrics.


\begin{table*}[t]
\centering
\caption{Experimental results.}
\vskip 0.15in
\begin{tabular}{cccc}
\hline
Datasets                        & Methods         & Accuracy          & $\left| \widehat{DDP}_{\mathcal{S}} \right|$                   \\ \hline
\multirow{11}{*}{Adult}          & Unconstrained   & 0.838 $\pm$ 0.009 & 0.182 $\pm$ 0.018                     \\
                                & Linear          & 0.810 $\pm$ 0.031 & 0.060 $\pm$ 0.020                     \\
                                & B-Linear        & \textbf{0.815} $\pm$ \textbf{0.017} & 0.053 $\pm$ 0.010   \\
                                & Log-Sigmoid     & 0.808 $\pm$ 0.018 & 0.052 $\pm$ 0.025                     \\
                                & B-Log-Sigmoid     & 0.801 $\pm$ 0.018 & 0.041 $\pm$ 0.020                     \\
                                & Hinge     & 0.808 $\pm$ 0.018 & 0.051 $\pm$ 0.025                    \\
                                & B-Hinge     & 0.803 $\pm$ 0.016 & 0.042 $\pm$ 0.019                     \\
                                & Sigmoid       & 0.802 $\pm$ 0.015 & 0.033 $\pm$ 0.011                     \\
                                & B-Sigmoid       & 0.802 $\pm$ 0.012 & 0.037 $\pm$ 0.012                     \\
                                & General Sigmoid & 0.805 $\pm$ 0.013 & \textbf{0.029} $\pm$ \textbf{0.010} \\ 
                                & B-General Sigmoid & 0.803 $\pm$ 0.007 & 0.037 $\pm$ 0.016 \\ \hline
\multirow{11}{*}{Bank Marketing}  & Unconstrained   & 0.910 $\pm$ 0.004 & 0.192 $\pm$ 0.003                     \\
                                & Linear          & 0.892 $\pm$ 0.005 & 0.037 $\pm$ 0.041                     \\
                                & B-Linear        & 0.891 $\pm$ 0.005 & 0.032 $\pm$ 0.039   \\
                                & Log-Sigmoid     & \textbf{0.894} $\pm$ \textbf{0.007} & 0.043 $\pm$ 0.038                     \\
                                & B-Log-Sigmoid     & 0.892 $\pm$ 0.005 & 0.040 $\pm$ 0.037                     \\
                                & Hinge     & 0.891 $\pm$ 0.005 & 0.035 $\pm$ 0.037                    \\
                                & B-Hinge     & 0.889 $\pm$ 0.005 & 0.029 $\pm$ 0.046                        \\
                                & Sigmoid       & 0.889 $\pm$ 0.004 & 0.021 $\pm$ 0.044                     \\
                                & B-Sigmoid       & 0.888 $\pm$ 0.004 & 0.024 $\pm$ 0.042                     \\
                                & General Sigmoid & 0.889 $\pm$ 0.003 & 0.014 $\pm$ 0.031 \\ 
                                & B-General Sigmoid & 0.889 $\pm$ 0.002 & \textbf{0.013} $\pm$ \textbf{0.027} \\ \hline
\multirow{11}{*}{COMPAS}        & Unconstrained   & 0.666 $\pm$ 0.020 & 0.281 $\pm$ 0.035                    \\
                                & Linear          & \textbf{0.636} $\pm$ \textbf{0.009} & 0.039 $\pm$ 0.022                     \\
                                & B-Linear        & 0.626 $\pm$ 0.009 & 0.035 $\pm$ 0.013   \\
                                & Log-Sigmoid     & 0.629 $\pm$ 0.012 & 0.039 $\pm$ 0.021                     \\
                                & B-Log-Sigmoid     & 0.626 $\pm$ 0.010 & 0.036 $\pm$ 0.021                    \\
                                & Hinge     & 0.625 $\pm$ 0.009 & 0.033 $\pm$ 0.021                    \\
                                & B-Hinge     & 0.624 $\pm$ 0.008 & 0.033 $\pm$ 0.021                     \\
                                & Sigmoid       & 0.629 $\pm$ 0.006 & 0.034 $\pm$ 0.013                     \\
                                & B-Sigmoid       & 0.626 $\pm$ 0.009 & 0.031 $\pm$ 0.017                     \\
                                & General Sigmoid & 0.627 $\pm$ 0.010 & 0.030 $\pm$ 0.018 \\ 
                                & B-General Sigmoid & 0.626 $\pm$ 0.009 & \textbf{0.029} $\pm$ \textbf{0.016} \\ \hline
\end{tabular}
\label{numerical_result}
\end{table*}

\subsection{Comparing with Other Fairness-aware Algorithms} \label{exp:other_fair}

In addition to CP~\citep{cov} (denoted as `Linear'), we compare with two other popular in-processing fairness-aware methods: reduction approach~\citep{icml_2018_reduction_approach} (denoted as `reduction') and adaptive sensitive reweighting~\citep{fair_reweight_1} (denoted as `ASR'), which are not based on fairness surrogate functions. The results are shown in Figure~\ref{exp:others}. 
\begin{itemize}
    \item In the Bank dataset, our general sigmoid surrogate and balanced surrogate approaches consistently achieve much better fairness performance than `reduction' and `ASR' while maintaining comparable accuracy. 
    \item In the COMPAS dataset, our approaches always achieve better fairness performance than others while maintaining comparable accuracy. 
    \item In the Adult dataset, our methods achieve better fairness than `ASR' while maintaining comparable accuracy.
\end{itemize}

Overall, it suggests that our methods are very competitive among these in-processing methods. We leave the comparative study between our methods and other in-processing methods for future work. We hope that our supplementary experiment helps people better understand the advantages of our approach.


\begin{figure*}[h] 
\centering 
\subfigure[Adult.]{
\includegraphics[width=0.31\textwidth]{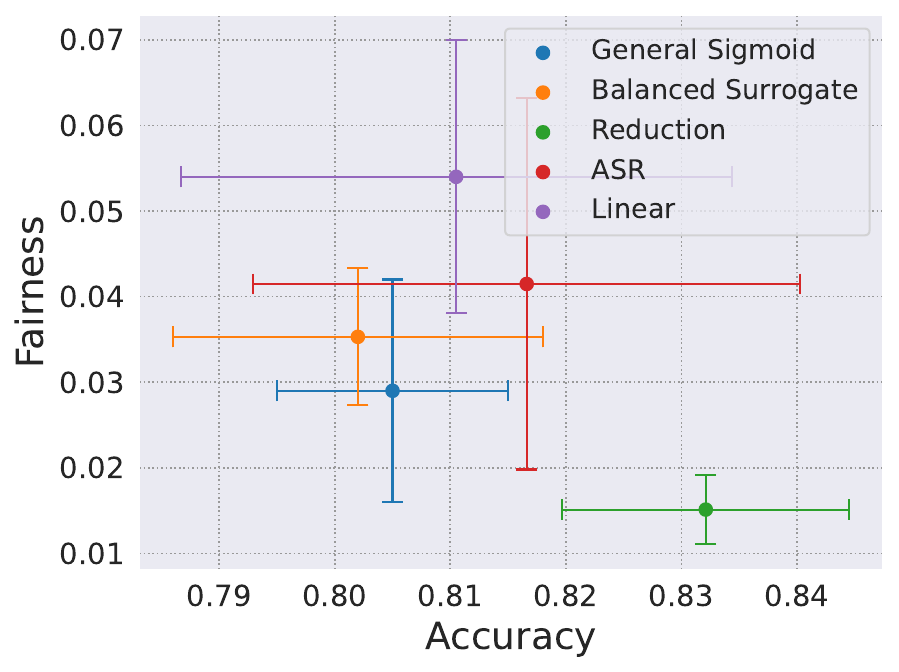}
\label{adult_main}
}
\subfigure[Bank.]{
\includegraphics[width=0.31\textwidth]{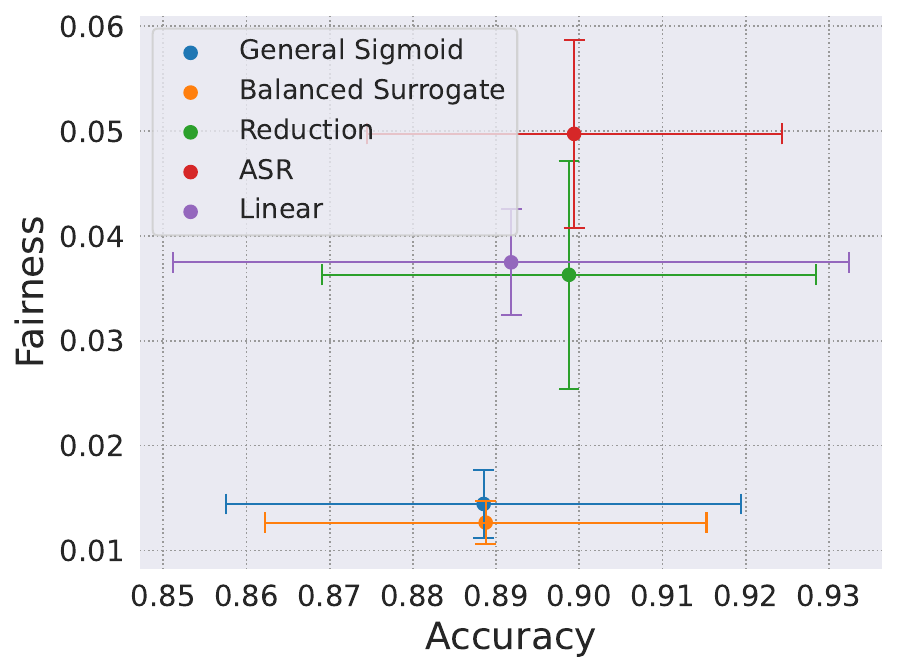}
\label{bank_main}
}
\subfigure[COMPAS.]{
\includegraphics[width=0.31\textwidth]{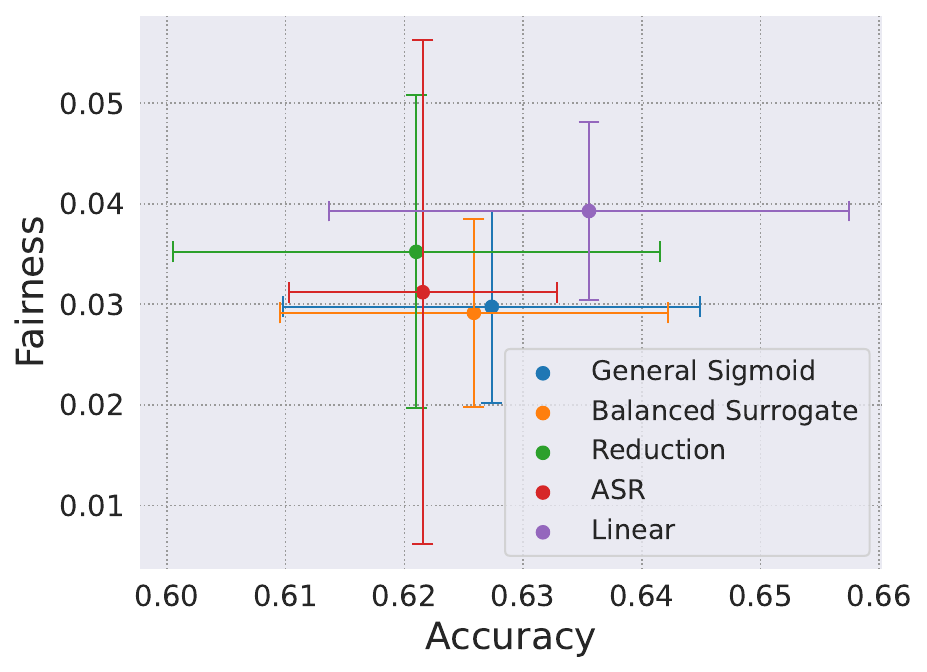}
\label{compas_main}
}
\caption{Results of other in-processing methods.}
\label{exp:others} 
\end{figure*}

\subsection{Boxplots with General Sigmoid Surrogate} \label{appendix:boxplot:general}

Boxplots for three datasets with general sigmoid surrogate are shown in Figure~\ref{general_sigmoid_boxplot}.

\begin{figure*}[t]
    \centering
    \subfigure[Adult.]{
    \includegraphics[width=0.31\textwidth]{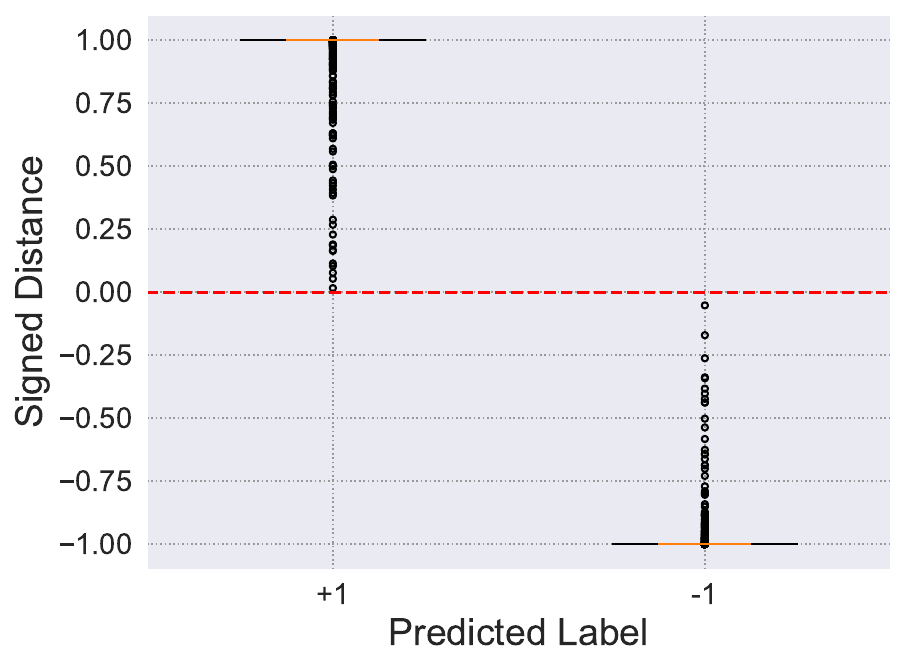}
    \label{boxplot_2:adult}
    }
    \subfigure[Bank.]{
    \includegraphics[width=0.31\textwidth]{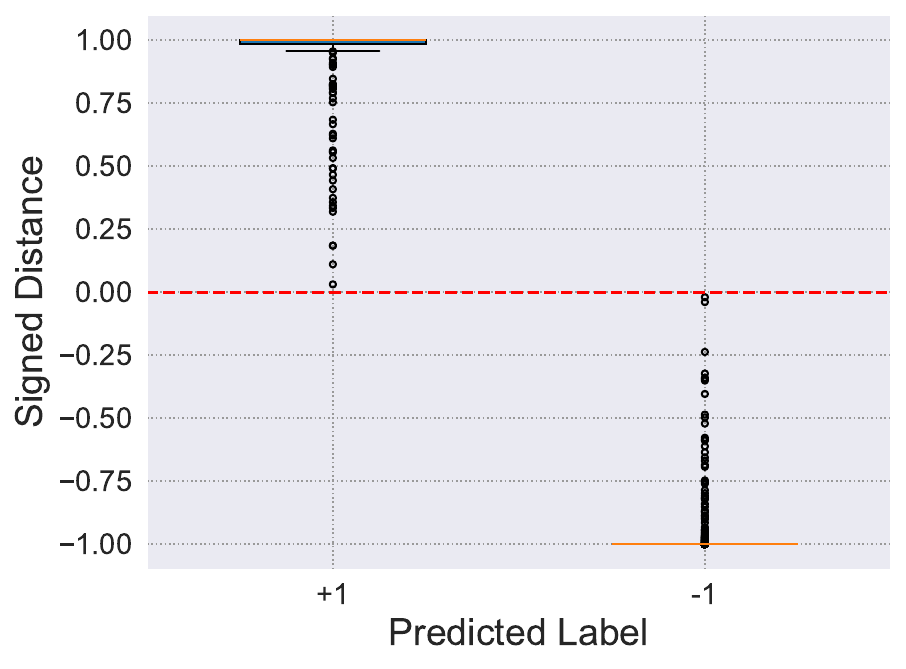}    \label{boxplot_2:bank}
    }
    \subfigure[COMPAS.]{
    \includegraphics[width=0.31\textwidth]{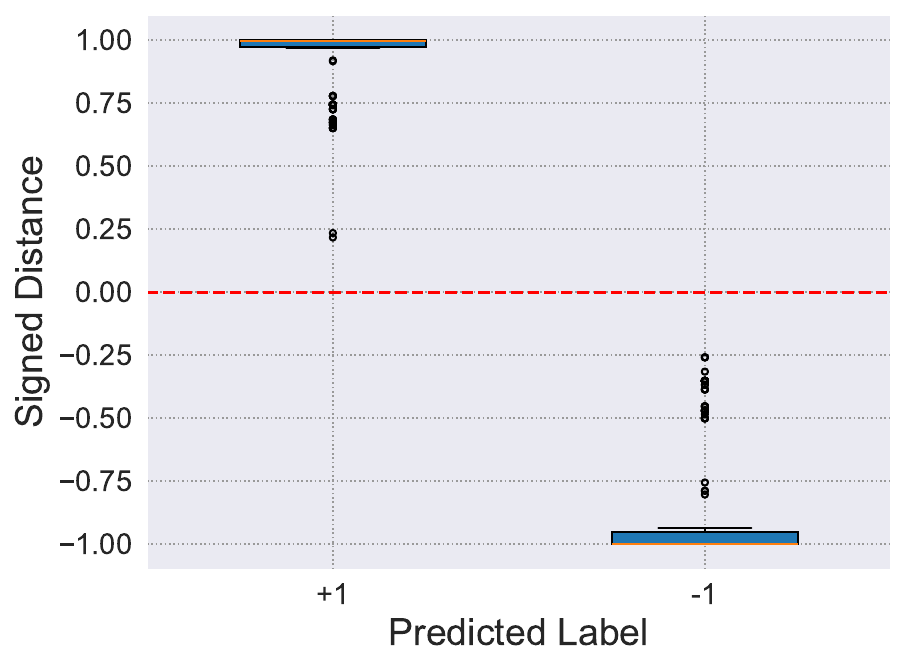}
    \label{boxplot_2:compas}
    }
    \caption{Boxplots for three datasets with general sigmoid surrogate. $+1$ and $-1$ represent the predicted label. The red dashed line means $d_\theta(\mathbf{x})=0$. The orange line in the box is the median. The median line and the edges of the boxplot almost overlap with the box itself.
    }
    \label{general_sigmoid_boxplot}
\end{figure*}

\section{Other Related Work} 

\subsection{Fairness-aware Algorithms} \label{appendix: fair_algs}
Our paper focuses on fairness surrogate functions from the in-processing methods. In addition to the aforementioned fairness constraints \citep{cov_2,cov} and fairness regularization \citep{nips_sigmoid}, there are also various kinds of fairness-aware in-processing methods in the community. For example, fair adversarial learning~\citep{fair_adv_1, fair_adv_2}, fair reweighing~\citep{fair_reweight_1,fair_reweight_2,fair_reweight_3, fair_reweight_4}, fair sample selection~\citep{fair_select_2,fair_select_1}, and hyper-parameter optimization~\citep{fair_tune_1, fair_tune_2}. 
In addition to DP, there are also fairness-aware algorithms for other fairness definitions, such as sufficiency~\citep{pmlr-v162-shui22a}.
And there are many other application scenarios, such as image classification~\citep{Tang_2023_CVPR}, and medical image analysis~\citep{mehta2024evaluating,shui2023mitigating}.
To gain a comprehensive understanding of fairness-aware algorithms in machine learning, thorough surveys are conducted in~\citet{fairness_survey_1, fairness_survey_2, fairness_survey_3,fairness_survey_4,fairness_survey_5}, providing a broader view of these algorithms.

\subsection{Fairness Surrogate Functions} \label{appendix:No_theoretical_guarantee}
For CP, even when the empirical covariance equals 0, there is no guarantee whether demographic parity is already satisfied \citep{cov_mix,nips_sigmoid,two_surrogate_www_2019,too_relaxed}. These papers give different counterexamples to explain it. \citet{cov_mix} provides a simple example explaining that covariance constraints may perform unfavorably in the presence of large margin points. \citet{nips_sigmoid} generates a synthetic dataset and observes that the decision boundary shifts dramatically when an extreme large margin point is added. \citet{two_surrogate_www_2019} provides two examples, respectively, to show that satisfying $\widehat{DDP}_{\mathcal{S}}$ and $\widehat{Cov}_{\mathcal{S}}(\phi)$ constraints 
are not necessarily related. \citet{too_relaxed} draws heatmaps 
to illustrate that some popular surrogates do not manage to capture the violation of fairness reasonably. \citet{new_2023} also conducts extensive results to show that optimizing the loss function given the fairness constraint or regularization for unfairness can surprisingly yield unfair solutions for Adult~\citep{Adult} and COMPAS~\citep{COMPAS} datasets. 
Instead of presenting the counterexamples in previous work, we systematically point out the large margin points issue for covariance proxy as well as other unbounded surrogate functions with theoretical consideration. 



\section{Extension to Other Fairness Definitions} \label{Extension to other fairness}

To begin with, in Section~\ref{other_fairness_defs}, we review the considered fairness definitions: disparate mistreatment, and balance for positive (negative) class. After that, in Section~\ref{appendix:Deeper Understanding of the Covariance Proxy}, we provide a deeper understanding of CP for disparate mistreatment. The proof of the statements is in Section~\ref{Proof_for_Disparate_Mistreatment}. Finally, in Section~\ref{relation_to_balance_for_pos_neg_class}, we provide a deeper understanding of CP for balance for positive (negative) class. The corresponding proof is in Section~\ref{Balance_for_positive (negative)_Class}.

Since the only difference between the linear surrogate and the CP is a dateset-specific constant, the results in this paper about CP is also applicable for linear surrogate. As a result, by just replacing the linear surrogate function with other surrogate functions, we can obtain the same results for other surrogate functions. 

\subsection{Fairness Definitions}
\label{other_fairness_defs}
\subsubsection{Disparate Mistreatment}
It is a set of definitions based on both predicted and actual outcomes. Disparate mistreatment can be avoided if misclassification rate for different sensitive groups of people are the same. It can be specified w.r.t. a variety of misclassification metrics and each one corresponds to a kind of fairness definition. Note that it comes out in \citep{cov_2}, and it includes some famous fairness notions, such as equality of opportunity. We list some examples used in this work below:

\textit{Overall accuracy equality}:

To satisfy the definition, Overall Misclassification Rate (OMR) should be the same for protected and unprotected group, i.e., 
$$P(\hat{y} \ne y | z=-1) = P(\hat{y} \ne y | z=1).$$

\textit{False positive error rate balance}:

To satisfy the definition, False Positive Rate (FPR) should be the same for protected and unprotected group, i.e.,
$$P(\hat{y} \ne y | z=-1,y=-1) = P(\hat{y} \ne y | z=+1,y=-1).$$

\textit{False negative error rate balance}:

To satisfy the definition, False Negative Rate (FNR) should be the same for protected and unprotected group, i.e.,
$$P(\hat{y} \ne y | z=-1,y=+1) = P(\hat{y} \ne y | z=+1,y=+1).$$

\subsubsection{Balance for Positive (Negative) Class}
These two definitions consider the actual outcome as well as the predicted probability $\hat{P}(d_\theta(\mathbf{x})>0 | z=+1)$. Balance for positive class states that average predicted probability should be the same for those from positive class in protected and unprotected groups, i.e.,
$$\hat{P}(d_\theta(\mathbf{x})>0 | z=+1) = \hat{P}(d_\theta(\mathbf{x})>0 | z=-1).$$
Replace $d_\theta(\mathbf{x})>0$ with $d_\theta(\mathbf{x})<0$ above and we can get the definition of balance for negative class.

\subsection{Deeper Understanding of the Covariance Proxy for Disparate Mistreatment}
\label{appendix:Deeper Understanding of the Covariance Proxy}


In this part, we deal with disparate mistreatment. We denote $S$ as a certain set depending on each definition of disparate mistreatment. Notice that we have to consider misclassified points. In other words, the true label $y$ should be considered. So $g_\theta(y,\mathbf{x})$ is used to replace $d_\theta(\mathbf{x})$. First of all, we prove that, with the assumption $D_\theta(\mathbf{x})=1$, if $\frac{1}{N}\sum_{S}(z_i-\overline{z})g_\theta(y,\mathbf{x}_i)=0$, then it perfectly satisfies the corresponding fairness definition. In the first three rows in Table \ref{table_extensions_to_fair_defs}, the specific forms of $g_\theta(y,\mathbf{x})$ and $S$ are given for each kind of definition of disparate mistreatment. It is somewhat similar to \citep{too_relaxed} because they also discover that the covariance proxy proposed by \citep{cov_2} is equivalent to convex-concave surrogates. Secondly, we extend Theorem \ref{theorem_ineq} to these three fairness definitions. Our general sigmoid surrogate can also be used for disparate mistreatment and we leave it for future work.

Note that no matter what the definition of disparate mistreatment is, $S$ defaults to the whole training set in \citep{cov_2}. But we contend that it does not match their expected fairness definitions. The authors correct it in \citep{cov_mix}, so our proof can also serve as an explanation of the correction. Our proof is in Appendix \ref{Proof_for_Disparate_Mistreatment}.

\begin{table*}[t]
\centering
\caption{The corresponding $g_\theta(y,\mathbf{x})$ for each fairness definition and the set $S$ used for summation. For the first three lines, \citep{cov_2} only gives the form of $g_\theta(y,\mathbf{x})$ and $S$ defaults to the whole training set. We supplement it by providing $S$ for each definition. For the remaining two lines, we design the specific forms of $g_\theta(y,\mathbf{x})$ and $S$ for these two definitions. It is an extension work for the covariance proxy in algorithmic fairness.}
\vskip 0.15in
\begin{tabular}{|c|c|c|}
\hline
\textbf{Fairness Definitions} & $g_\theta(y,\mathbf{x})$ & $S$   \\ \hline
Overall accuracy equality                      & $\min(0,yd_\theta(x))$  &          $\{ (x_i,y_i)|i=1,2...N \}$       \\ \hline

False positive error rate balance              & $\min(0,\frac{1-y}{2}yd_\theta(x))$                                        & $\{ (x_i,y_i)|y_i=-1,i=1,2...,N \}$ \\ \hline
False negative error rate balance              & $\min(0,\frac{1+y}{2}yd_\theta(x))$                                        & $\{ (x_i,y_i)|y_i=+1,i=1,2...,N \}$ \\ \hline
Balance for positive class                     & $\frac{1+y}{2}d_\theta(\mathbf{x})$                                                & $\{ (x_i,y_i)|y_i=+1,i=1,2...,N \}$ \\ \hline
Balance for negative class                     & $\frac{1-y}{2}d_\theta(\mathbf{x})$                                                & $\{ (x_i,y_i)|y_i=-1,i=1,2...,N \}$  \\ \hline
\end{tabular}
\label{table_extensions_to_fair_defs}
\end{table*}

\subsection{Proof for Appendix~\ref{appendix:Deeper Understanding of the Covariance Proxy}} \label{Proof_for_Disparate_Mistreatment}
The proof process below is similar to that between the proxy and disparate impact. We provide the proof for the three fairness definitions of disparate mistreatment one by one. Before the proof, for the convenience of the notation, we divide the training set into 8 groups as follows:



\newpage

\begin{multicols}{2}
\begin{table}[H]
\centering
\caption{$z=+1$}
\vskip 0.15in
\begin{tabular}{|c|c|c|}
\hline
\diagbox{Actual}{Predicted} & +1 & -1 \\ \hline
+1                              & $TP_+$ & $TN_+$ \\ \hline
-1                             & $FP_+$ & $FN_+$ \\ \hline
\end{tabular}
\end{table}

\begin{table}[H]
\centering
\caption{$z=-1$}
\vskip 0.15in
\begin{tabular}{|c|c|c|}
\hline
\diagbox{Actual}{Predicted} & +1 & -1 \\ \hline
+1                              & $TP_-$ & $TN_-$ \\ \hline
-1                             & $FP_-$ & $FN_-$ \\ \hline
\end{tabular}
\end{table}
\end{multicols}

For looking up different misclassification metrics conveniently, we provide the confusion matrix below:

\begin{table}[H] 
\centering
\caption{Confusion matrix}
\vskip 0.15in
\begin{tabular}{|c|c|c|}
\hline
\diagbox{Actual}{Predicted} & +1 & -1 \\ \hline
 +1                              & \begin{tabular}[c]{@{}c@{}} True Positive(TP) \\ $PPV=\frac{TP}{TP+FP}$\\ $TPR=\frac{TP}{TP+FN}$\end{tabular} & \multicolumn{1}{l|}{\begin{tabular}[c]{@{}l@{}}False Negative(FN)\\ $FOR=\frac{FN}{TN+FN}$\\ $FNR=\frac{FN}{TP+FN}$\end{tabular}} \\ \hline
 -1                             & \begin{tabular}[c]{@{}c@{}}False Positive(FP)\\ $FDR=\frac{FP}{TP+FP}$\\ $FPR=\frac{FP}{FP+TN}$\end{tabular} & \multicolumn{1}{l|}{\begin{tabular}[c]{@{}l@{}}True Negative(TN)\\ $NPV=\frac{TN}{TN+FN}$\\ $TNR=\frac{TN}{TN+FP}$\end{tabular}} \\ \hline
\end{tabular}
\end{table}

\subsubsection{Overall Accuracy Equality} 
\begin{theorem} \label{appendix: theorem_begin}
If a classifier satisfies $\frac{1}{N} \sum_{i=1}^N(z_i-\overline{z})g_\theta(y,\mathbf{x}_i)=0$, where $g_\theta(y,\mathbf{x})=min(0,yd_\theta(x))$, then there holds
$$\overline{D_\theta(\mathbf{x})}_{y \ne \hat{y}|z=+1}=\overline{D_\theta(\mathbf{x})}_{y \ne \hat{y}|z=-1}$$
Equal OMR in protected and unprotected group is equivalent to:
$$\overline{sign(D_\theta(\mathbf{x}))}_{y \ne \hat{y}|z=+1}=\overline{sign(D_\theta(\mathbf{x}))}_{y \ne \hat{y}|z=-1}$$
\end{theorem}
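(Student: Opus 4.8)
The plan is to specialize the DP-style argument behind Proposition~\ref{DDP_Cov_theorem} and Theorem~\ref{theorem_ineq} to the choices $S=\mathcal{S}$ (the whole training set) and $g_\theta(y,\mathbf{x})=\min(0,\,y\,d_\theta(\mathbf{x}))$, recalling that in this section labels and sensitive attributes both take values in $\{-1,+1\}$. Throughout I use the convention that, for a function $f$ and a predicate $C$, $\overline{f(\mathbf{x})}_{C\mid z=s}$ denotes the empirical average $\frac{1}{N_s}\sum_{i:\,z_i=s}f(\mathbf{x}_i)\,\mathbbm{1}_{C(i)}$, where $N_s$ is the size of the group with $z=s$ (abbreviated $N_+$, $N_-$) and $\hat y$ is the predicted label. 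First I would simplify $g_\theta$: a short case check on the signs of $y$ and $d_\theta(\mathbf{x})$ shows $g_\theta(y,\mathbf{x})=0$ on correctly classified points and $g_\theta(y,\mathbf{x})=-D_\theta(\mathbf{x})$ on misclassified ones, i.e., $g_\theta(y,\mathbf{x})=-D_\theta(\mathbf{x})\,\mathbbm{1}_{y\ne\hat y}$. Next I would record the centring coefficients: since $\overline z=(N_+-N_-)/N$, one has $1-\overline z=2N_-/N$ and $-1-\overline z=-2N_+/N$. Splitting $\frac{1}{N}\sum_i(z_i-\overline z)g_\theta(y_i,\mathbf{x}_i)$ by whether $z_i=+1$ or $z_i=-1$, substituting these coefficients and the formula for $g_\theta$, and factoring, the expression collapses to $\frac{2N_+N_-}{N^2}\big(\overline{D_\theta(\mathbf{x})}_{y\ne\hat y\mid z=-1}-\overline{D_\theta(\mathbf{x})}_{y\ne\hat y\mid z=+1}\big)$; setting it to zero and using that both groups are nonempty gives the first identity.

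For the equivalence claim I would note that on every misclassified point $\sign(D_\theta(\mathbf{x}))=1$ (assuming, as is generic, that no training point lies exactly on the decision boundary, where $D_\theta(\mathbf{x})=0$), so $\overline{\sign(D_\theta(\mathbf{x}))}_{y\ne\hat y\mid z=s}=\frac{1}{N_s}\#\{i:\,z_i=s,\ y_i\ne\hat y_i\}$, which is exactly the empirical overall misclassification rate $\widehat P(\hat y\ne y\mid z=s)$. Equality of these two rates is by definition equal OMR, yielding the equivalence.

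I do not anticipate a real obstacle; the only care needed is bookkeeping — normalizing by $N_s$ rather than by the misclassified count $M_s$, so that vanishing covariance produces a clean equality, and tracking that $g_\theta$ is zero on correctly classified points so only misclassified terms survive the sums. The conceptual point worth flagging, and the reason the theorem is stated as two separate displays, is that vanishing covariance forces equality of the \emph{margin-weighted} misclassification quantities $\overline{D_\theta(\mathbf{x})}_{y\ne\hat y\mid z=s}$, whereas equal OMR is equality of the \emph{unweighted} quantities $\overline{\sign(D_\theta(\mathbf{x}))}_{y\ne\hat y\mid z=s}$; the two coincide only under the normalization $D_\theta(\mathbf{x})=1$, so the surrogate-fairness gap identified for DP reappears here for disparate mistreatment, and the subsequent analogue of Theorem~\ref{theorem_ineq} is what quantifies it.
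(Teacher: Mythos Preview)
Your proposal is correct and follows essentially the same route as the paper: both first reduce $g_\theta(y,\mathbf{x})=\min(0,y\,d_\theta(\mathbf{x}))$ to $-D_\theta(\mathbf{x})\mathbbm{1}_{y\ne\hat y}$, split the covariance sum by sensitive group, use $1-\overline z=2N_-/N$ and $-1-\overline z=-2N_+/N$ to factor it as $\frac{2N_+N_-}{N^2}\big(\overline{D_\theta(\mathbf{x})}_{y\ne\hat y\mid z=-1}-\overline{D_\theta(\mathbf{x})}_{y\ne\hat y\mid z=+1}\big)$, and then observe that replacing $D_\theta$ by $\sign(D_\theta)$ recovers the empirical OMR in each group. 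Your explicit statement of the averaging convention and the boundary-point caveat are helpful clarifications, but the argument is otherwise identical to the paper's.
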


\begin{proof}
\begin{align}
g_\theta(y,\mathbf{x}) = & min(0,yd_\theta(x)) \notag \\ =  & 
\begin{cases}
	0 & y=\hat{y} \ (correctly \ classified) \notag \\
	yd_\theta(x) & y \neq \hat{y} \ (misclassified)
\end{cases}
 \notag \\
= & \begin{cases}
	0 & y=\hat{y} \ (correctly \ classified) \notag \\
	-D_\theta(\mathbf{x}) & y \neq \hat{y} \ (misclassified)
\end{cases}
\end{align}

\begin{align}
& \frac{1}{N} \sum_{i=1}^N(z_i-\overline{z})g_\theta(y,\mathbf{x}_i)\notag \\
= & -\frac{1}{N} \sum_{TN_+ , FP_+, TN_- ,FP_- }(z_i-\overline{z})D_{\theta}(x_i)\notag \\
= & -\frac{1}{N} \left (\sum_{TN_+ , FP_+, TN_- ,FP_-}z_iD_{\theta}(x_i) - \overline{z} \sum_{TN_+ , FP_+, TN_- ,FP_-}D_{\theta}(x_i)\right )\notag \\
= &  -\frac{1}{N} \Biggl[ \sum_{TN_+ , FP_+}D_{\theta}(x_i)-\sum_{TN_- ,FP_-}D_{\theta}(x_i)
\notag \\ & -\frac{TP_+ +  TN_+ + FP_+ + FN_+ - TP_- - TN_- - FP_- -FN_- }{N}\sum_{TN_+ , FP_+, TN_- ,FP_-}D_{\theta}(x_i) \Biggr] \notag \\
= & \frac{2}{N^2}\left [(TP_+ +  TN_+ + FP_+ + FN_+)\sum_{TN_- ,FP_-}D_{\theta}(x_i)-(TP_- + TN_- + FP_- + FN_- )\sum_{TN_+ , FP_+}D_{\theta}(x_i)\right ]\notag \\
= & \frac{2(TP_+ +  TN_+ + FP_+ + FN_+)(TP_- + TN_- + FP_- + FN_-)}{N^2} \notag  \\
 &\cdot \left (\frac{\sum_{TN_- ,FP_-}D_{\theta}(x_i)}{TP_- + TN_- + FP_- + FN_-}-\frac{\sum_{TN_+ , FP_+}D_{\theta}(x_i)}{TP_+ +  TN_+ + FP_+ + FN_+}\right )\notag \\
= & \frac{2(TP_+ +  TN_+ + FP_+ + FN_+)(TP_- + TN_- + FP_- + FN_-)}{N^2}(\overline{D_\theta(\mathbf{x})}_{y \ne \hat{y}|z=-1}-\overline{D_\theta(\mathbf{x})}_{y \ne \hat{y}|z=+1})
\end{align}

We set the formula above equal to zero so there holds
$$\overline{D_\theta(\mathbf{x})}_{y \ne \hat{y}|z=+1}=\overline{D_\theta(\mathbf{x})}_{y \ne \hat{y}|z=-1}$$

Equal OMR in protected and unprotected group means $P(\hat{y}\ne y|z=+1)=P(\hat{y}\ne y|z=-1)$, i.e.,  $\frac{TN_- + FP_-}{TP_- + TN_- + FP_- + FN_-}=\frac{TN_+ + FP_+}{TP_+ +  TN_+ + FP_+ + FN_+}$. It is equivalent to
$$\overline{sign(D_\theta(\mathbf{x}))}_{y \ne \hat{y}|z=+1}=\overline{sign(D_\theta(\mathbf{x}))}_{y \ne \hat{y}|z=-1}$$
So we complete the proof.
\end{proof}

\begin{corollary}
We assume that $D_\theta(\mathbf{x})=1$. If a classifier satisfies $\frac{1}{N} \sum_{i=1}^N(z_i-\overline{z})g_\theta(y,\mathbf{x}_i)=0$, where $g_\theta(y,\mathbf{x})=min(0,yd_\theta(x))$, then it perfectly meets with overall accuracy equality.
\end{corollary}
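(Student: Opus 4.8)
The plan is to obtain the corollary directly from Theorem~\ref{appendix: theorem_begin}, which already does the heavy lifting. That theorem establishes two facts: first, the covariance-type condition $\frac{1}{N}\sum_{i=1}^N(z_i-\overline{z})g_\theta(y,\mathbf{x}_i)=0$ with $g_\theta(y,\mathbf{x})=\min(0,yd_\theta(\mathbf{x}))$ forces $\overline{D_\theta(\mathbf{x})}_{y\ne\hat y|z=+1}=\overline{D_\theta(\mathbf{x})}_{y\ne\hat y|z=-1}$; second, overall accuracy equality (equal OMR between the two sensitive groups) is equivalent to $\overline{\mathrm{sign}(D_\theta(\mathbf{x}))}_{y\ne\hat y|z=+1}=\overline{\mathrm{sign}(D_\theta(\mathbf{x}))}_{y\ne\hat y|z=-1}$. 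So all that remains is to show that, under the extra hypothesis $D_\theta(\mathbf{x})=1$, these two identities collapse onto one another.

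First I would note that when $D_\theta(\mathbf{x})=1$ for every point, we have $\mathrm{sign}(D_\theta(\mathbf{x}))=1=D_\theta(\mathbf{x})$ pointwise, hence $\overline{\mathrm{sign}(D_\theta(\mathbf{x}))}_{y\ne\hat y|z}=\overline{D_\theta(\mathbf{x})}_{y\ne\hat y|z}$ for each $z\in\{-1,+1\}$ (both averages reduce to the per-group misclassification frequency, since each misclassified point contributes $1$ and each correctly classified point contributes $0$ to the numerator). Then, applying the first conclusion of Theorem~\ref{appendix: theorem_begin} under the covariance hypothesis gives $\overline{D_\theta(\mathbf{x})}_{y\ne\hat y|z=+1}=\overline{D_\theta(\mathbf{x})}_{y\ne\hat y|z=-1}$, which by the pointwise identity above is exactly $\overline{\mathrm{sign}(D_\theta(\mathbf{x}))}_{y\ne\hat y|z=+1}=\overline{\mathrm{sign}(D_\theta(\mathbf{x}))}_{y\ne\hat y|z=-1}$; invoking the second (equivalence) conclusion of Theorem~\ref{appendix: theorem_begin} then yields overall accuracy equality, which completes the proof.

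There is essentially no obstacle here; the only point requiring care is the bookkeeping in the averaging notation $\overline{(\cdot)}_{y\ne\hat y|z}$. It must be read as in the proof of Theorem~\ref{appendix: theorem_begin}, namely as the average over \emph{all} points in the sensitive group (denominator $TP_\pm+TN_\pm+FP_\pm+FN_\pm$) of the quantity evaluated only on misclassified points, so that the specialization $D_\theta\equiv 1$ genuinely turns $\overline{D_\theta(\mathbf{x})}_{y\ne\hat y|z}$ into the OMR of that group. Once this reading is fixed, the corollary is an immediate specialization of the theorem and requires no further computation.
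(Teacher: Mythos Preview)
Your proposal is correct and follows essentially the same approach as the paper: the paper's proof simply remarks that one substitutes $D_\theta(\mathbf{x})$ with $\sign(D_\theta(\mathbf{x}))$ in Theorem~\ref{appendix: theorem_begin}, which is exactly the specialization you carry out (with more explicit bookkeeping about the averaging notation).
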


\begin{proof}
The method is similar to that for disparate impact. We just substitute $D_\theta(\mathbf{x})$ with $sign(D_\theta(\mathbf{x}))$ and understand that this corollary is true.
\end{proof}

\begin{theorem}
We assume that $D_\theta(\mathbf{x}) \in [1-\gamma, 1+\gamma]$. $\forall \epsilon > 0$, if the proxy satisfies: 
$$\left|\frac{1}{N}\sum_{i=1}^N(z_i-\overline{z})g_\theta(y,\mathbf{x}_i) \right| < \epsilon$$
where $g_\theta(y,\mathbf{x})=min(0,yd_\theta(x))$, then the violation of overall accuracy equality satisfies:
\begin{multline*}
    \left| P(\hat{y}\ne y|z=+1)-P(\hat{y}\ne y|z=-1) \right| < \frac{N^2}{2(TP_+ +  TN_+ + FP_+ + FN_+)(TP_- + TN_- + FP_- + FN_-)}\epsilon + \\ \left(\frac{TN_- + FP_-}{TP_- + TN_- + FP_- + FN_-}+\frac{TN_+ + FP_+}{TP_+ +  TN_+ + FP_+ + FN_+} \right)\gamma \notag
\end{multline*}

\end{theorem}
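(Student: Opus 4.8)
The plan is to mirror the proof of Theorem~\ref{theorem_ineq} (Appendix~\ref{Proof_of_bounded}), reusing the algebraic identity already obtained in the proof of the preceding theorem for overall accuracy equality, and then converting it into the desired bound via the triangle inequality and the two-sided estimate $D_\theta(\mathbf{x})\in[1-\gamma,1+\gamma]$. Throughout, write $n_+ = TP_+ + TN_+ + FP_+ + FN_+$ and $n_- = TP_- + TN_- + FP_- + FN_-$ for the two group sizes, and let $r_+ = (TN_+ + FP_+)/n_+$ and $r_- = (TN_- + FP_-)/n_-$ denote the two group misclassification rates, so that the quantity to be bounded is exactly $|r_+ - r_-|$.

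First I would recall, from the chain of equalities in the proof of the previous theorem, that with $g_\theta(y,\mathbf{x}) = \min(0, y\, d_\theta(\mathbf{x}))$ one has the exact identity
$$\frac{1}{N}\sum_{i=1}^N (z_i - \overline z)\, g_\theta(y, \mathbf{x}_i) = \frac{2\, n_+ n_-}{N^2}\left(\overline{D_\theta(\mathbf{x})}_{y\ne\hat y\,|\,z=-1} - \overline{D_\theta(\mathbf{x})}_{y\ne\hat y\,|\,z=+1}\right),$$
where $\overline{D_\theta(\mathbf{x})}_{y\ne\hat y\,|\,z=+1} = \tfrac{1}{n_+}\sum_{TN_+, FP_+} D_\theta(\mathbf{x}_i)$ and similarly for $z=-1$ (note that $g_\theta$ vanishes on correctly classified points, so the sum over the whole training set collapses to a sum over the misclassified groups). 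Taking absolute values and invoking the hypothesis $\left|\frac{1}{N}\sum_i (z_i-\overline z) g_\theta(y,\mathbf{x}_i)\right| < \epsilon$ yields $\left|\overline{D_\theta(\mathbf{x})}_{y\ne\hat y\,|\,z=-1} - \overline{D_\theta(\mathbf{x})}_{y\ne\hat y\,|\,z=+1}\right| < \frac{N^2}{2 n_+ n_-}\,\epsilon$.

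Next I would estimate each averaged magnitude against the corresponding misclassification rate: since there are $TN_+ + FP_+$ misclassified points in group $z=+1$, each contributing a value $D_\theta(\mathbf{x}_i)\in[1-\gamma,1+\gamma]$, we get $r_+(1-\gamma) \le \overline{D_\theta(\mathbf{x})}_{y\ne\hat y\,|\,z=+1} \le r_+(1+\gamma)$, hence $\left|\overline{D_\theta(\mathbf{x})}_{y\ne\hat y\,|\,z=+1} - r_+\right| \le r_+\gamma$, and symmetrically $\left|\overline{D_\theta(\mathbf{x})}_{y\ne\hat y\,|\,z=-1} - r_-\right| \le r_-\gamma$. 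Applying the triangle inequality,
$$|r_+ - r_-| \le \left|r_+ - \overline{D_\theta}_{y\ne\hat y\,|\,z=+1}\right| + \left|\overline{D_\theta}_{y\ne\hat y\,|\,z=+1} - \overline{D_\theta}_{y\ne\hat y\,|\,z=-1}\right| + \left|\overline{D_\theta}_{y\ne\hat y\,|\,z=-1} - r_-\right|,$$
and substituting the three bounds (strict for the middle term, $\le$ for the outer two) gives $|r_+ - r_-| < \frac{N^2}{2 n_+ n_-}\epsilon + (r_+ + r_-)\gamma$, which is precisely the asserted inequality once $n_\pm$ and $r_\pm$ are expanded.

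I do not expect a genuine obstacle here: the argument is a line-by-line analogue of the proof of Theorem~\ref{theorem_ineq}, with the interval assumption placed on $D_\theta(\mathbf{x})$ directly (no relaxation step needed) and with ``misclassified'' replacing ``predicted positive''. The only place demanding care is the bookkeeping: one must verify that the summation set $S$ attached to the proxy (here the full training set, on which $g_\theta$ is supported only over $\{TN_+, FP_+, TN_-, FP_-\}$) matches the index set emerging after the algebraic simplification, so that the constant $\frac{2 n_+ n_-}{N^2}$ and the coefficients $r_\pm$ of $\gamma$ come out correctly --- this is exactly the subtlety the paper flags about $S$ defaulting incorrectly to the whole training set in the original covariance-proxy formulation of \citep{cov_2}, later corrected in \citep{cov_mix}.
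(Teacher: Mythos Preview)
Your proposal is correct and follows essentially the same route as the paper: reuse the algebraic identity from the preceding theorem to convert the proxy bound into $\bigl|\overline{D_\theta}_{y\ne\hat y\,|\,z=-1}-\overline{D_\theta}_{y\ne\hat y\,|\,z=+1}\bigr|<\tfrac{N^2}{2n_+n_-}\epsilon$, then use $D_\theta(\mathbf{x})\in[1-\gamma,1+\gamma]$ together with the triangle inequality to replace each group average by the corresponding misclassification rate at cost $r_\pm\gamma$. The only cosmetic difference is that the paper groups the $\gamma$-terms as $\bigl|\sum(D_\theta-1)/n_- - \sum(D_\theta-1)/n_+\bigr|\le (r_-+r_+)\gamma$ in one step, whereas you split it into two via a three-term triangle inequality; the resulting bound is identical.
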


\begin{proof}
It is connected to the theorem above.
\begin{align}
& \left|\frac{1}{N}\sum_{i=1}^N(z_i-\overline{z})g_\theta(y,\mathbf{x}_i) \right| < \epsilon 
 \notag \\ \Rightarrow & \Biggl| \frac{2(TP_+ +  TN_+ + FP_+ + FN_+)(TP_- + TN_- + FP_- + FN_-)}{N^2} \notag \\
 & \cdot \left (\frac{\sum_{TN_- ,FP_-}D_{\theta}(x_i)}{TP_- + TN_- + FP_- + FN_-}-\frac{\sum_{TN_+ , FP_+}D_{\theta}(x_i)}{TP_+ +  TN_+ + FP_+ + FN_+}\right)\Biggr|< \epsilon 
 \notag \\ \Rightarrow & 
\left| \frac{\sum_{TN_- ,FP_-}D_{\theta}(x_i)}{TP_- + TN_- + FP_- + FN_-}-\frac{\sum_{TN_+ , FP_+}D_{\theta}(x_i)}{TP_+ +  TN_+ + FP_+ + FN_+} \right| \notag \\
& < \frac{N^2}{2(TP_+ +  TN_+ + FP_+ + FN_+)(TP_- + TN_- + FP_- + FN_-)}\epsilon 
 \notag \\ \Rightarrow & 
\left| \frac{TN_- + FP_-}{TP_- + TN_- + FP_- + FN_-}-\frac{TN_+ + FP_+}{TP_+ +  TN_+ + FP_+ + FN_+} \right| \notag \\
& < \frac{N^2}{2(TP_+ +  TN_+ + FP_+ + FN_+)(TP_- + TN_- + FP_- + FN_-)}\epsilon 
 \notag \\ & +  \left| \frac{\sum_{TN_- ,FP_-}(D_{\theta}(x_i)-1)}{TP_- + TN_- + FP_- + FN_-}-\frac{\sum_{TN_+ , FP_+}(D_{\theta}(x_i)-1)}{TP_+ +  TN_+ + FP_+ + FN_+} \right| 
\end{align}
If $D_\theta(\mathbf{x}) \in [1-\gamma, 1+\gamma]$, then $D_\theta(\mathbf{x})-1 \in [-\gamma, \gamma]$, which means that:

\begin{align}
& \left| \frac{\sum_{TN_- ,FP_-}(D_{\theta}(x_i)-1)}{TP_- + TN_- + FP_- + FN_-}-\frac{\sum_{TN_+ , FP_+}(D_{\theta}(x_i)-1)}{TP_+ +  TN_+ + FP_+ + FN_+} \right| \notag \\ 
& < \left(\frac{TN_- + FP_-}{TP_- + TN_- + FP_- + FN_-}+\frac{TN_+ + FP_+}{TP_+ +  TN_+ + FP_+ + FN_+}\right)\gamma \notag
\end{align}
Notice that:
$$\left| P(\hat{y}\ne y|z=+1)-P(\hat{y}\ne y|z=-1) \right|=\left| \frac{TN_- + FP_-}{TP_- + TN_- + FP_- + FN_-}-\frac{TN_+ + FP_+}{TP_+ +  TN_+ + FP_+ + FN_+} \right|$$
So by combining these formulas together and we can complete the proof.
\end{proof}

\subsubsection{False Positive Error Rate Balance}
\begin{theorem}
$S$ denotes the set of points which satisfy $y=-1$ in the whole training set and $g_\theta(y,\mathbf{x})=min(0,\frac{1-y}{2}yd_\theta(x))$. If a classifier satisfies $\frac{1}{N} \sum \limits_{S}(z_i-\overline{z})g_\theta(y,\mathbf{x}_i)=0$, then there holds:
$$\overline{D_\theta(\mathbf{x})}_{y \ne \hat{y}|z=+1,y=-1}=\overline{D_\theta(\mathbf{x})}_{y \ne \hat{y}|z=-1,y=-1}$$
Equal FPR in protected and unprotected group is equivalent to:
$$\overline{sign(D_\theta(\mathbf{x}))}_{y \ne \hat{y}|z=+1,y=-1}=\overline{sign(D_\theta(\mathbf{x}))}_{y \ne \hat{y}|z=-1,y=-1}$$
\end{theorem}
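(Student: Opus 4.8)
The plan is to follow the same three-move template used for the overall accuracy equality theorem, changing only the explicit form of $g_\theta$ and the bookkeeping of which data points contribute to the sum. \textbf{Step 1 (simplify $g_\theta$ on $S$).} On $S=\{(\mathbf{x}_i,y_i):y_i=-1\}$ the masking factor $\frac{1-y}{2}$ equals $1$, so $g_\theta(y,\mathbf{x})=\min(0,-d_\theta(\mathbf{x}))$. This is $0$ whenever $d_\theta(\mathbf{x})\le 0$ (a point of $S$ predicted negative, hence correctly classified) and equals $-d_\theta(\mathbf{x})=-D_\theta(\mathbf{x})$ whenever $d_\theta(\mathbf{x})>0$ (a point of $S$ predicted positive, hence a false positive). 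Therefore within $S$ only the false-positive points contribute, and splitting them by sensitive attribute gives $\frac1N\sum_{S}(z_i-\overline z)\,g_\theta(y,\mathbf{x}_i)=-\frac1N\sum_{\text{false positives}}(z_i-\overline z)\,D_\theta(\mathbf{x}_i)$.

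\textbf{Step 2 (expand and collect).} Using $z_i\in\{+1,-1\}$, the last display equals $-\frac1N\left[(1-\overline z)\sum_{z=+1,\ \mathrm{FP}}D_\theta(\mathbf{x}_i)-(1+\overline z)\sum_{z=-1,\ \mathrm{FP}}D_\theta(\mathbf{x}_i)\right]$. Substituting the value of $\overline z$ and collecting the two sums — exactly the manipulation performed in the overall-accuracy-equality proof — turns this into a strictly positive, dataset-dependent constant multiplied by $\left(\overline{D_\theta(\mathbf{x})}_{y\ne\hat y\mid z=-1,y=-1}-\overline{D_\theta(\mathbf{x})}_{y\ne\hat y\mid z=+1,y=-1}\right)$, where each conditional average is the sum of $D_\theta$ over that group's false positives divided by the number of its $y=-1$ points. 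Setting the proxy to zero and dividing out the positive constant yields $\overline{D_\theta(\mathbf{x})}_{y\ne\hat y\mid z=+1,y=-1}=\overline{D_\theta(\mathbf{x})}_{y\ne\hat y\mid z=-1,y=-1}$, the first claimed identity.

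\textbf{Step 3 (equivalence with equal FPR).} The false positive rate in group $z=c$ is $P(\hat y\ne y\mid z=c,y=-1)$, namely the number of false positives in that group divided by the number of its $y=-1$ points. Since $\sign(D_\theta(\mathbf{x}))=1$ at every false positive (there $d_\theta(\mathbf{x})>0$, so $D_\theta(\mathbf{x})>0$) and $\sign(D_\theta(\mathbf{x}))=0$ at the correctly classified $y=-1$ points, this rate is exactly $\overline{\sign(D_\theta(\mathbf{x}))}_{y\ne\hat y\mid z=c,y=-1}$ under the averaging convention of Step 2. Hence false positive error rate balance, $P(\hat y\ne y\mid z=+1,y=-1)=P(\hat y\ne y\mid z=-1,y=-1)$, is equivalent to $\overline{\sign(D_\theta(\mathbf{x}))}_{y\ne\hat y\mid z=+1,y=-1}=\overline{\sign(D_\theta(\mathbf{x}))}_{y\ne\hat y\mid z=-1,y=-1}$, which is the second claimed identity; this is pure definition-unpacking, parallel to the overall accuracy equality case.

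The algebra in Step 2 is routine, so the only genuine care needed is in fixing the normalization conventions consistently: the denominator of $\overline{D_\theta(\mathbf{x})}_{y\ne\hat y\mid z=c,y=-1}$ must be the size of the $y=-1$ subgroup of $z=c$, and correspondingly $\overline z$ in the proxy should be the mean of $z$ over $S$ (not over all of $\mathcal{S}$), so that the constant in Step 2 is positive and the $\sign$-version reduces to the ordinary false positive rate. With that choice each step is a transcription of the overall-accuracy-equality proof, with the false-positive points playing the role that all misclassified points played there, so no new idea is required.
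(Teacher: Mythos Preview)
Your proposal is correct and follows essentially the same route as the paper: simplify $g_\theta$ on $S$ to $-D_\theta$ on the false positives, expand $(z_i-\overline z)$, and factor out a nonzero constant times the difference of group-wise averages with denominator equal to the size of each group's $y=-1$ subset. Your explicit remark that $\overline z$ must be the mean over $S$ (not all of $\mathcal S$) is exactly the convention the paper uses implicitly in its computation.

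One small slip in Step~3: it is not true that $\sign(D_\theta(\mathbf x))=0$ at correctly classified $y=-1$ points, since $D_\theta(\mathbf x)=|d_\theta(\mathbf x)|>0$ there as well. This does not affect the argument, because under your Step~2 convention the numerator only sums over the misclassified points $y\ne\hat y$; the correctly classified points never enter the sum, so their $\sign(D_\theta)$ value is irrelevant. The reduction to $\frac{FP_+}{FP_++FN_+}=\frac{FP_-}{FP_-+FN_-}$ goes through simply because $\sign(D_\theta)=1$ on every false positive.
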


\begin{proof}
\begin{align}
g_\theta(y,\mathbf{x}) = & min(0,\frac{1-y}{2}yd_\theta(x)) \notag \\ =  & 
\begin{cases}
    yd_\theta(x) & y=-1,\hat{y}=+1 \ (false \ positive) \notag \\
	0 & others
\end{cases}
 \notag \\
= & \begin{cases}
	-D_\theta(\mathbf{x}) & y=-1,\hat{y}=+1 \ (false \ positive) \notag \\
	0 & others
\end{cases}
\end{align}

\begin{align}
& \frac{1}{N} \sum \limits_{S}(z_i-\overline{z})g_\theta(y,\mathbf{x}_i)
 \notag \\ = & -\frac{1}{N} \sum_{FP_+, FP_-}(z_i-\overline{z})D_\theta(x_i)
 \notag \\ = & -\frac{1}{N}\left [\sum_{FP_+, FP_-}z_i D_\theta(x_i)-\overline{z}\sum_{FP_+, FP_-}D_\theta(x_i)\right ]
 \notag \\ = & -\frac{1}{N}\left [\sum_{FP_+}D_{\theta}(x_i)-\sum_{FP_-}D_{\theta}(x_i)-\frac{FP_+ + FN_+-G-H}{FP_+ + FN_++FP_- + FN_-}\sum_{FP_+, FP_-}D_{\theta}(x_i)\right ]
 \notag \\ = & -\frac{2(FP_+ + FN_+)(FP_- + FN_-)}{N(FP_+ + FN_++FP_- + FN_-)}\left (\frac{\sum_{FP_+}D_{\theta}(x_i)}{FP_+ + FN_+}-\frac{\sum_{FP_-}D_{\theta}(x_i)}{FP_- + FN_-}\right )
 \notag \\ = & -\frac{2(FP_+ + FN_+)(FP_- + FN_-)}{N(FP_+ + FN_++FP_- + FN_-)}\left (\overline{D_\theta(\mathbf{x})}_{y \ne \hat{y}|z=+1,y=-1}-\overline{D_\theta(\mathbf{x})}_{y \ne \hat{y}|z=-1,y=-1}\right )
\end{align}

Set this equal to zero and the formula above becomes:
$$\overline{D_\theta(\mathbf{x})}_{y \ne \hat{y}|z=+1,y=-1}=\overline{D_\theta(\mathbf{x})}_{y \ne \hat{y}|z=-1,y=-1}$$
Equal FPR in protected and unprotected group satisfies $\frac{FP_+}{FP_+ + FN_+}=\frac{FP_-}{FP_- + FN_-}$, which means 
$$\overline{sign(D_\theta(\mathbf{x}))}_{y \ne \hat{y}|z=+1,y=-1}=\overline{sign(D_\theta(\mathbf{x}))}_{y \ne \hat{y}|z=-1,y=-1}$$
So we complete the proof.
\end{proof}

\begin{corollary}
$S$ denotes the set of points which satisfy $y=-1$ in the whole training set and $g_\theta(y,\mathbf{x})=min(0,\frac{1-y}{2}yd_\theta(x))$. We assume that $D_\theta(\mathbf{x})=1$. If a classifier satisfies $\frac{1}{N} \sum \limits_{S}(z_i-\overline{z})g_\theta(y,\mathbf{x}_i)=0$, then it meets with false positive error rate balance.
\end{corollary}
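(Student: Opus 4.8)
The plan is to deduce this corollary directly from the Theorem immediately preceding it, specializing its mean-margin identity to the regime $D_\theta(\mathbf{x})=1$. That Theorem shows — under the single hypothesis $\frac{1}{N}\sum_{S}(z_i-\overline{z})g_\theta(y,\mathbf{x}_i)=0$ with $S=\{(\mathbf{x}_i,y_i):y_i=-1\}$ and $g_\theta(y,\mathbf{x})=\min(0,\frac{1-y}{2}yd_\theta(\mathbf{x}))$ — the identity $\overline{D_\theta(\mathbf{x})}_{y\ne\hat{y}\mid z=+1,y=-1}=\overline{D_\theta(\mathbf{x})}_{y\ne\hat{y}\mid z=-1,y=-1}$, and it also records that false positive error rate balance is exactly the condition $\overline{\sign(D_\theta(\mathbf{x}))}_{y\ne\hat{y}\mid z=+1,y=-1}=\overline{\sign(D_\theta(\mathbf{x}))}_{y\ne\hat{y}\mid z=-1,y=-1}$, i.e.\ $\frac{FP_+}{FP_++FN_+}=\frac{FP_-}{FP_-+FN_-}$.

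First I would observe that the hypotheses of the corollary are precisely those of the preceding Theorem together with the extra standing assumption $D_\theta(\mathbf{x})=1$, so the mean-margin identity is available verbatim. Next I would note that since $D_\theta(\mathbf{x})=|d_\theta(\mathbf{x})|$ is nonnegative by definition, the assumption $D_\theta(\mathbf{x})=1>0$ forces $\sign(D_\theta(\mathbf{x}))=1=D_\theta(\mathbf{x})$ at every point. Consequently, on any subpopulation the average of $D_\theta(\mathbf{x})$ equals the average of $\sign(D_\theta(\mathbf{x}))$; in particular the group-restricted sums $\sum_{FP_+}D_\theta(\mathbf{x}_i)$ and $\sum_{FP_-}D_\theta(\mathbf{x}_i)$ collapse to the counts $FP_+$ and $FP_-$. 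Substituting $\sign(D_\theta(\mathbf{x}))$ for $D_\theta(\mathbf{x})$ throughout the mean-margin identity then turns it into $\overline{\sign(D_\theta(\mathbf{x}))}_{y\ne\hat{y}\mid z=+1,y=-1}=\overline{\sign(D_\theta(\mathbf{x}))}_{y\ne\hat{y}\mid z=-1,y=-1}$, which is exactly false positive error rate balance. This is the same replacement argument used for the overall-accuracy-equality corollary just above.

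The only step requiring care — and the closest thing to an obstacle, though it is routine — is the bookkeeping that identifies the averaged magnitude $\overline{D_\theta(\mathbf{x})}_{y\ne\hat{y}\mid z=\pm1,y=-1}$ with the misclassification rate $\frac{FP_\pm}{FP_\pm+FN_\pm}$ once $D_\theta(\mathbf{x})=1$ is imposed, so that the conclusion of the preceding Theorem genuinely becomes the defining equality of the fairness notion. Once this identification is spelled out, the corollary is immediate and there is no substantive difficulty.
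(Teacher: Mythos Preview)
Your proposal is correct and follows exactly the paper's own approach: the paper's proof is the one-line remark ``We just replace $D_\theta(\mathbf{x})$ with $\sign(D_\theta(\mathbf{x}))$ to prove the result,'' which is precisely the substitution you describe after invoking the preceding Theorem. Your version simply makes the bookkeeping behind that replacement explicit.
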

\begin{proof}
We just replace $D_\theta(\mathbf{x})$ with $sign(D_\theta(\mathbf{x}))$ to prove the result.
\end{proof}

\begin{corollary}
We assume that $D_\theta(\mathbf{x})=1$. If a classifier satisfies $\frac{1}{N} \sum_{i=1}^N(z_i-\overline{z})g_\theta(y,\mathbf{x}_i)=0$, where $g_\theta(y,\mathbf{x})=min(0,\frac{1-y}{2}yd_\theta(x))$, then it \textbf{does not} meet with false positive error rate balance.
\end{corollary}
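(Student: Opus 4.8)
The statement is a negative result, so the plan is to exhibit a counterexample: one dataset-and-classifier configuration in which the whole-set proxy $\frac1N\sum_{i=1}^N(z_i-\overline z)g_\theta(y_i,\mathbf{x}_i)$ equals $0$ under the assumption $D_\theta(\mathbf{x})=1$, yet the two sensitive groups have different false positive rates. This is exactly what distinguishes the present corollary from the previous one, where summing $g_\theta$ only over $S=\{(\mathbf{x}_i,y_i):y_i=-1\}$ was shown to force false positive error rate balance.

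First I would evaluate $g_\theta$ under the two standing assumptions, as in the earlier proofs of this subsection. Since $\frac{1-y}{2}=0$ when $y=+1$, we get $g_\theta(+1,\mathbf{x})=0$ for every positively-labelled point; and for $y=-1$, $g_\theta(-1,\mathbf{x})=\min(0,-d_\theta(\mathbf{x}))$ equals $-D_\theta(\mathbf{x})=-1$ on false positives ($\hat y=+1$) and $0$ on correctly-classified negatives. So $g_\theta$ is supported exactly on $FP_+\cup FP_-$ and equals $-1$ there. Writing $n_+=TP_++TN_++FP_++FN_+$ and $n_-=TP_-+TN_-+FP_-+FN_-$ for the group sizes, with $\overline z=(n_+-n_-)/(n_++n_-)$, a short simplification (collecting the $z=+1$ and $z=-1$ contributions, then clearing the fraction) shows that the whole-set proxy equals $0$ if and only if $FP_+/n_+ = FP_-/n_-$, i.e.\ equality of the false-positive count normalised by the \emph{total} group size. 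I would then contrast this with the false positive error rate balance condition $FP_+/(FP_++FN_+)=FP_-/(FP_-+FN_-)$, which normalises instead by the number of negatives $FP_\pm+FN_\pm$; the two conditions coincide only if the groups additionally have equal negative base rates.

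To finish, I would supply explicit integers realising the gap: for $z=+1$ take $TP_+=2,\ TN_+=1,\ FP_+=1,\ FN_+=0$ (so $n_+=4$, one negative, FPR $=1$) and for $z=-1$ take $TP_-=2,\ TN_-=2,\ FP_-=2,\ FN_-=2$ (so $n_-=8$, four negatives, FPR $=\tfrac12$). Then $\overline z=-\tfrac13=(FP_+-FP_-)/(FP_++FP_-)$, hence the whole-set proxy is exactly $0$, while the two false positive rates are $1\neq\tfrac12$; a point configuration with every $D_\theta(\mathbf{x})=1$ and these predicted/actual labels is clearly realisable. The main difficulty is not depth but bookkeeping: the proxy simplification must be carried out with the correct whole-set value of $\overline z$ --- this is precisely where the argument diverges from the $S$-restricted proof --- and the counterexample must be genuinely non-degenerate, with all cell counts non-negative integers, the proxy \emph{exactly} zero, and the two groups carrying different negative base rates so that equal $FP/n$ does not force equal $FP/(FP+FN)$.
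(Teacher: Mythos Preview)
Your approach is essentially the same as the paper's: you reduce the whole-set proxy condition (with $D_\theta(\mathbf{x})=1$) to $FP_+/n_+=FP_-/n_-$ and observe that this differs from the FPR-balance condition $FP_+/(FP_++FN_+)=FP_-/(FP_-+FN_-)$. The paper stops at ``the two equations do not match,'' whereas you go one step further and exhibit an explicit integer configuration with equal $FP/n$ but unequal FPR; this actually makes your argument more complete, since merely noting that two identities are not formally equivalent does not by itself produce a witness where one holds and the other fails.
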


\begin{proof}

\begin{align}
& \frac{1}{N} \sum_{i=1}^N(z_i-\overline{z})g_\theta(y,\mathbf{x}_i) \notag \\
= & -\frac{1}{N} \sum_{FP_+, FP_-}(z_i-\overline{z})D_{\theta}(x_i) \notag \\
= & -\frac{1}{N} \sum_{FP_+, FP_-}(z_i-\overline{z}) \notag \\
= & -\frac{1}{N} \left (\sum_{FP_+, FP_-}z_i - \sum_{FP_+, FP_-}\overline{z}\right ) \notag \\
= & -\frac{1}{N} \left [(FP+ - FP_-)-(FP+ + FP_-)\frac{TP_+ +  TN_+ + FP_+ + FN_+-E-F-G-H}{N}\right ] \notag \\
= & -\frac{2}{N^2} \left [C(TP_- + TN_- + FP_- + FN_-)-G(TP_+ +  TN_+ + FP_+ + FN_+)\right ]
\end{align}

Set this equal to zero and the formula above becomes:
$$\frac{FP_+}{TP_+ +  TN_+ + FP_+ + FN_+}=\frac{FP_-}{TP_- + TN_- + FP_- + FN_-}$$

However, equal FPR in protected and unprotected group satisfies the formula:
$$\frac{FP_+}{FP_+ + FN_+}=\frac{FP_-}{FP_- + FN_-}$$

So the two equations do not match. The proof is complete.
\end{proof}

\begin{theorem}
We assume that $D_\theta(\mathbf{x}) \in [1-\gamma, 1+\gamma]$. $\forall \epsilon > 0$, if the proxy satisfies: 
$$\left| \frac{1}{N} \sum \limits_{S}(z_i-\overline{z})g_\theta(y,\mathbf{x}_i) \right| < \epsilon$$
where $g_\theta(y,\mathbf{x})=min(0,\frac{1-y}{2}yd_\theta(x))$, then the violation of false positive error rate balance satisfies:
\begin{align}
 \left| P(\hat{y}\ne y|z=+1,y=-1)-P(\hat{y}\ne y|z=-1,y=-1) \right|< & \frac{N(FP_+ + FN_++FP_- + FN_-)}{2(FP_+ + FN_+)(FP_- + FN_-)}\epsilon  \notag \\
& +  \left(\frac{FP_+}{FP_+ + FN_+}+\frac{FP_-}{FP_- + FN_-}\right)\gamma
\end{align}
\end{theorem}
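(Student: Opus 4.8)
The plan is to follow the same two-stage template used in the proof of Theorem~\ref{theorem_ineq} and in the Overall Accuracy Equality theorem that precedes this one: first turn the proxy constraint into a bound on the difference of \emph{average signed-distance magnitudes} over the two false-positive sets, and then trade those magnitudes for the frequencies (i.e.\ the FPRs) at the cost of a $\gamma$-sized gap term controlled by the assumption $D_\theta(\mathbf{x})\in[1-\gamma,1+\gamma]$.

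First I would invoke the identity already derived for this choice of $g_\theta$ and $S$ in the proof of the previous theorem, namely
\[
\frac{1}{N} \sum_{S}(z_i-\overline{z})g_\theta(y,\mathbf{x}_i) = -\frac{2(FP_+ + FN_+)(FP_- + FN_-)}{N(FP_+ + FN_++FP_- + FN_-)}\left(\frac{\sum_{FP_+}D_{\theta}(\mathbf{x}_i)}{FP_+ + FN_+}-\frac{\sum_{FP_-}D_{\theta}(\mathbf{x}_i)}{FP_- + FN_-}\right).
\]
Dividing through by the positive dataset-dependent constant, the hypothesis $\left|\frac{1}{N}\sum_S(z_i-\overline{z})g_\theta(y,\mathbf{x}_i)\right|<\epsilon$ immediately yields $\left|\frac{\sum_{FP_+}D_{\theta}(\mathbf{x}_i)}{FP_+ + FN_+}-\frac{\sum_{FP_-}D_{\theta}(\mathbf{x}_i)}{FP_- + FN_-}\right| < \frac{N(FP_+ + FN_++FP_- + FN_-)}{2(FP_+ + FN_+)(FP_- + FN_-)}\epsilon$, which is exactly the first term of the claimed bound.

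Second I would introduce the gap by writing $1 = D_\theta(\mathbf{x}) - (D_\theta(\mathbf{x})-1)$ inside each empirical FPR, so that $\frac{FP_+}{FP_+ + FN_+} = \frac{\sum_{FP_+}D_\theta(\mathbf{x}_i)}{FP_+ + FN_+} - \frac{\sum_{FP_+}(D_\theta(\mathbf{x}_i)-1)}{FP_+ + FN_+}$ and analogously for the $z=-1$ group. Applying the triangle inequality bounds $\left|\frac{FP_+}{FP_+ + FN_+}-\frac{FP_-}{FP_- + FN_-}\right|$ by the magnitude-difference bound from the first step plus $\left|\frac{\sum_{FP_+}(D_\theta(\mathbf{x}_i)-1)}{FP_+ + FN_+}-\frac{\sum_{FP_-}(D_\theta(\mathbf{x}_i)-1)}{FP_- + FN_-}\right|$. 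Because $D_\theta(\mathbf{x})\in[1-\gamma,1+\gamma]$ forces $D_\theta(\mathbf{x})-1\in[-\gamma,\gamma]$ pointwise, each of the sums $\sum_{FP_\pm}(D_\theta(\mathbf{x}_i)-1)$ runs over exactly $FP_\pm$ terms and so has absolute value at most $FP_\pm\gamma$; dividing by $FP_\pm+FN_\pm$ and using the triangle inequality once more bounds the gap term by $\left(\frac{FP_+}{FP_+ + FN_+}+\frac{FP_-}{FP_- + FN_-}\right)\gamma$. Since $P(\hat{y}\ne y\mid z=\pm1,y=-1)=\frac{FP_\pm}{FP_\pm + FN_\pm}$, combining the two bounds yields the stated inequality.

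I do not expect a genuine obstacle here — the argument is structurally identical to that of Theorem~\ref{theorem_ineq}, with the full training set replaced by the subpopulation $\{y=-1\}$ and $\widehat{DDP}_{\mathcal{S}}$ replaced by the FPR gap. The only places that demand care are (i) reusing the correct confusion-matrix identity so that the dataset-dependent constant matches the claimed coefficient $\frac{N(FP_+ + FN_++FP_- + FN_-)}{2(FP_+ + FN_+)(FP_- + FN_-)}$, and (ii) counting that $\sum_{FP_+}(D_\theta(\mathbf{x}_i)-1)$ has exactly $FP_+$ summands, so that normalizing by $FP_+ + FN_+$ produces the weight $\frac{FP_+}{FP_+ + FN_+}$ on $\gamma$ rather than a looser constant.
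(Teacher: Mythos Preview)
Your proposal is correct and mirrors the paper's own proof essentially step for step: the paper also starts from the identity that expresses the proxy as a constant times $\frac{\sum_{FP_+}D_\theta(\mathbf{x}_i)}{FP_++FN_+}-\frac{\sum_{FP_-}D_\theta(\mathbf{x}_i)}{FP_-+FN_-}$, divides through to obtain the $\epsilon$-term, then applies the triangle inequality with the decomposition $D_\theta(\mathbf{x})-1\in[-\gamma,\gamma]$ to produce the $\gamma$-term and identifies $\frac{FP_\pm}{FP_\pm+FN_\pm}$ with the empirical false positive rates. The two care points you flag (matching the constant and counting the $FP_\pm$ summands) are exactly the places where the paper's proof is implicit, so your proposal is, if anything, slightly more explicit than the original.
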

\begin{proof}

\begin{align}
\left| \frac{1}{N} \sum \limits_{S}(z_i-\overline{z})g_\theta(y,\mathbf{x}_i) \right| & < \epsilon
 \notag \\ \Rightarrow 
\left| \frac{\sum_{FP_+}D_{\theta}(x_i)}{FP_+ + FN_+}-\frac{\sum_{FP_-}D_{\theta}(x_i)}{FP_- + FN_-} \right| & < \frac{N(FP_+ + FN_++FP_- + FN_-)}{2(FP_+ + FN_+)(FP_- + FN_-)}\epsilon
 \notag \\ \Rightarrow  
\left| \frac{FP_+}{FP_+ + FN_+}-\frac{FP_-}{FP_- + FN_-} \right| & < \frac{N(FP_+ + FN_++FP_- + FN_-)}{2(FP_+ + FN_+)(FP_- + FN_-)}\epsilon \notag \\ & + \left| \frac{\sum_{FP_+}(D_{\theta}(x_i)-1)}{FP_+ + FN_+}-\frac{\sum_{FP_-}(D_{\theta}(x_i)-1)}{FP_- + FN_-} \right|
\end{align}
If $D_\theta(\mathbf{x}) \in [1-\gamma,1+\gamma]$, then $D_\theta(\mathbf{x})-1 \in [-\gamma,\gamma]$, which means that:
$$\left| \frac{\sum_{FP_+}(D_{\theta}(x_i)-1)}{FP_+ + FN_+}-\frac{\sum_{FP_-}(D_{\theta}(x_i)-1)}{FP_- + FN_-} \right| < \left(\frac{FP_+}{FP_+ + FN_+}+\frac{FP_-}{FP_- + FN_-}\right)\gamma$$
Notice that
$$\left| P(\hat{y}\ne y|z=+1,y=-1)-P(\hat{y}\ne y|z=-1,y=-1) \right| = \left| \frac{FP_+}{FP_+ + FN_+}-\frac{FP_-}{FP_- + FN_-} \right|$$
So by combining these formulas together and we can complete the proof.
\end{proof}

\subsubsection{False Negative Error Rate Balance}
\begin{theorem}
$S$ denotes the set of points which satisfy $y=+1$ in the whole training set and $g_\theta(y,\mathbf{x})=min(0,\frac{1+y}{2}yd_\theta(x))$. If a classifier satisfies $\frac{1}{N} \sum \limits_{S}(z_i-\overline{z})g_\theta(y,\mathbf{x}_i)=0$, then there holds:
$$\overline{D_\theta(\mathbf{x})}_{y \ne \hat{y}|z=+1,y=+1}=\overline{D_\theta(\mathbf{x})}_{y \ne \hat{y}|z=-1,y=+1}$$
Equal FNR in protected and unprotected group is equivalent to:
$$\overline{sign(D_\theta(\mathbf{x}))}_{y \ne \hat{y}|z=+1,y=+1}=\overline{sign(D_\theta(\mathbf{x}))}_{y \ne \hat{y}|z=-1,y=+1}$$
\end{theorem}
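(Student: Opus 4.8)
The plan is to follow, step for step, the argument just given for false positive error rate balance, permuting the roles of the confusion-matrix cells. First I would simplify the weighting function $g_\theta(y,\mathbf{x})=\min\!\big(0,\tfrac{1+y}{2}\,y\,d_\theta(\mathbf{x})\big)$. Since $\tfrac{1+y}{2}$ equals $1$ when $y=+1$ and $0$ when $y=-1$, the function vanishes on every actual-negative point and equals $\min(0,d_\theta(\mathbf{x}))$ on every actual-positive point; on an actual positive this is $0$ when the point is predicted positive and equals $d_\theta(\mathbf{x})=-D_\theta(\mathbf{x})$ when it is predicted negative, i.e.\ on a false negative. Hence, over the summation set $S=\{(\mathbf{x}_i,y_i):y_i=+1\}$, the only nonzero contributions come from the false negatives, which are exactly the cells $TN_+$ (actual $+1$, predicted $-1$, $z=+1$) and $TN_-$ (actual $+1$, predicted $-1$, $z=-1$), and on those $g_\theta=-D_\theta(\mathbf{x})$.

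Next I would substitute this into $\tfrac1N\sum_{S}(z_i-\overline z)g_\theta(y,\mathbf{x}_i)$, with $\overline z$ the mean of $z$ over $S$, so that the two sensitive groups in $S$ have sizes $TP_++TN_+$ and $TP_-+TN_-$ and $\overline z=\big[(TP_++TN_+)-(TP_-+TN_-)\big]\big/\big[(TP_++TN_+)+(TP_-+TN_-)\big]$. Splitting the sum by group, collecting $1-\overline z$ and $1+\overline z$, and factoring out the positive dataset-dependent constant $C=\tfrac{2(TP_++TN_+)(TP_-+TN_-)}{N[(TP_++TN_+)+(TP_-+TN_-)]}$ gives
\[
\frac1N\sum_{S}(z_i-\overline z)\,g_\theta(y,\mathbf{x}_i)= -\,C\left(\frac{\sum_{TN_+}D_\theta(\mathbf{x}_i)}{TP_++TN_+}-\frac{\sum_{TN_-}D_\theta(\mathbf{x}_i)}{TP_-+TN_-}\right)= -\,C\left(\overline{D_\theta(\mathbf{x})}_{y\ne\hat y\mid z=+1,y=+1}-\overline{D_\theta(\mathbf{x})}_{y\ne\hat y\mid z=-1,y=+1}\right).
\]
Setting the left-hand side to $0$ and dividing by $C$ yields the claimed equality of the average margins of the false negatives in the two groups. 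For the second claim I would simply observe that equal FNR means $P(\hat y\ne y\mid z=+1,y=+1)=P(\hat y\ne y\mid z=-1,y=+1)$, i.e.\ $\tfrac{TN_+}{TP_++TN_+}=\tfrac{TN_-}{TP_-+TN_-}$; since $\sign(D_\theta(\mathbf{x}))=1$ on every false negative, the counts satisfy $TN_\pm=\sum_{TN_\pm}\sign(D_\theta(\mathbf{x}_i))$, so this identity is precisely the displayed one with $D_\theta$ replaced by $\sign(D_\theta)$.

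I do not expect a genuine conceptual obstacle: the computation is the false-positive argument with $FP\leftrightarrow TN$ and $FN\leftrightarrow TP$. The only things demanding care are the combinatorial bookkeeping — verifying that $g_\theta$ really does vanish off $TN_+\cup TN_-$ (this is exactly where the factor $\tfrac{1+y}{2}$, rather than $\tfrac{1-y}{2}$ as in the FPR case, is what selects the actual-positive points), and taking $\overline z$ as the mean over the restricted set $S$ of actual positives so that the normalizing denominators emerge as $TP_\pm+TN_\pm$. As in the FPR case, the same computation then also yields the accompanying corollary (under the normalization $D_\theta(\mathbf{x})=1$, the covariance condition exactly enforces false negative error rate balance) and the negative corollary showing that keeping $S$ as the whole training set, as originally in \citep{cov_2}, instead forces $\tfrac{TN_+}{TP_++TN_++FP_++FN_+}=\tfrac{TN_-}{TP_-+TN_-+FP_-+FN_-}$, which does not coincide with equal FNR.
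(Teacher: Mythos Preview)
Your proposal is correct and mirrors the paper's own proof essentially line for line: the same case analysis of $g_\theta$ (nonzero only on the false-negative cells $TN_+,TN_-$, where it equals $-D_\theta(\mathbf{x})$), the same factoring yielding the constant $\tfrac{2(TP_++TN_+)(TP_-+TN_-)}{N[(TP_++TN_+)+(TP_-+TN_-)]}$, and the same identification of equal FNR with the $\sign(D_\theta)$ version. Your explicit remark that $\overline z$ is the mean over $S$ (not over the whole training set) is exactly the point that makes the denominators come out as $TP_\pm+TN_\pm$, and your anticipated corollaries match the paper's as well.
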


\begin{proof}
\begin{align}
g_\theta(y,\mathbf{x}) = & min(0,\frac{1+y}{2}yd_\theta(x)) \notag \\ =  & 
\begin{cases}
    yd_\theta(x) & y=+1,\hat{y}=-1 \ (false \ negative) \notag \\
	0 & others
\end{cases}
 \notag \\
= & \begin{cases}
	-D_\theta(\mathbf{x}) & y=+1,\hat{y}=-1 \ (false \ negative) \notag \\
	0 & others
\end{cases}
\end{align}

\begin{align}
& \frac{1}{N} \sum \limits_{S}(z_i-\overline{z})g_\theta(y,\mathbf{x}_i)
 \notag \\ = & -\frac{1}{N} \sum_{TN_+,TN_-}(z_i-\overline{z})D_\theta(x_i)
 \notag \\ = & -\frac{1}{N}\left[\sum_{TN_+,TN_-}z_i D_\theta(x_i)-\overline{z}\sum_{TN_+,TN_-}D_\theta(x_i)\right]
 \notag \\ = & -\frac{1}{N}\left[\sum_{TN_+}D_{\theta}(x_i)-\sum_{TN_-}D_{\theta}(x_i)-\frac{TP_+ + TN_+-E-F}{TP_+ + TN_++TP_- + TN_-}\sum_{TN_+,TN_-}D_{\theta}(x_i)\right]
 \notag \\ = & -\frac{2(TP_+ + TN_+)(TP_- + TN_-)}{N(TP_+ + TN_++TP_- + TN_-)}\left(\frac{\sum_{TN_+}D_{\theta}(x_i)}{TP_+ + TN_+}-\frac{\sum_{TN_-}D_{\theta}(x_i)}{TP_- + TN_-}\right)
 \notag \\ = & -\frac{2(TP_+ + TN_+)(TP_- + TN_-)}{N(TP_+ + TN_++TP_- + TN_-)}\left(\overline{D_\theta(\mathbf{x})}_{y \ne \hat{y}|z=+1,y=+1}-\overline{D_\theta(\mathbf{x})}_{y \ne \hat{y}|z=-1,y=+1}\right)
\end{align}

Set this equal to zero and the formula above becomes:
$$\overline{D_\theta(\mathbf{x})}_{y \ne \hat{y}|z=+1,y=+1}=\overline{D_\theta(\mathbf{x})}_{y \ne \hat{y}|z=-1,y=+1}$$
Equal FNR in protected and unprotected group satisfies $\frac{TN_+}{TP_+ + TN_+}=\frac{TN_-}{TP_- + TN_-}$, which means 
$$\overline{sign(D_\theta(\mathbf{x}))}_{y \ne \hat{y}|z=+1,y=+1}=\overline{sign(D_\theta(\mathbf{x}))}_{y \ne \hat{y}|z=-1,y=+1}$$
So we complete the proof.
\end{proof}

\begin{corollary}
$S$ denotes the set of points which satisfy $y=+1$ in the whole training set and $g_\theta(y,\mathbf{x})=min(0,\frac{1+y}{2}yd_\theta(x))$. We assume that $D_\theta(\mathbf{x})=1$. If a classifier satisfies $\frac{1}{N} \sum \limits_{S}(z_i-\overline{z})g_\theta(y,\mathbf{x}_i)=0$, then it meets with false negative error rate balance.
\end{corollary}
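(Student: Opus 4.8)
The plan is to obtain this corollary as a direct specialization of the immediately preceding theorem, exactly paralleling the proof of the corresponding corollary for false positive error rate balance. First I would apply that theorem: under the hypothesis $\frac{1}{N}\sum_{S}(z_i-\overline{z})g_\theta(y,\mathbf{x}_i)=0$ with $g_\theta(y,\mathbf{x})=\min(0,\frac{1+y}{2}yd_\theta(\mathbf{x}))$, the theorem already yields the equality $\overline{D_\theta(\mathbf{x})}_{y\ne\hat{y}\mid z=+1,y=+1}=\overline{D_\theta(\mathbf{x})}_{y\ne\hat{y}\mid z=-1,y=+1}$; that is, the averaged margin magnitude over the false-negative points is equalized across the two sensitive groups. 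The analytic content here — expanding $\frac{1}{N}\sum_S(z_i-\overline{z})g_\theta$ into a product of group sizes times the difference of these two conditional averages — has already been carried out in the theorem, so I would simply cite it.

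Next I would invoke the assumption $D_\theta(\mathbf{x})=1$. Under this assumption $D_\theta(\mathbf{x})=\mathrm{sign}(D_\theta(\mathbf{x}))$ pointwise, so $\sum_{TN_+}D_\theta(\mathbf{x}_i)=TN_+$ and likewise for the other group; hence $\overline{D_\theta(\mathbf{x})}_{y\ne\hat{y}\mid z,y=+1}$ collapses to $\frac{TN_z}{TP_z+TN_z}$, which is precisely the false negative rate $P(\hat y\ne y\mid z,y=+1)$ of group $z$ in the notation of the theorem. Substituting this identification into the equality supplied by the theorem gives $\frac{TN_+}{TP_++TN_+}=\frac{TN_-}{TP_-+TN_-}$, which is exactly the statement that false negative error rate is balanced between the protected and unprotected groups. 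This finishes the argument.

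I do not expect any real obstacle: the corollary merely reads off the case $D_\theta(\mathbf{x})=1$ of the theorem, mirroring the $\mathrm{sign}$-substitution trick already used for overall accuracy equality and false positive error rate balance. The one point requiring a word of care is the degenerate situation in which a group contains no positively-labeled examples, making a denominator $TP_z+TN_z$ vanish; as in the surrounding results I would simply exclude this case or handle it by the usual convention, so it does not affect the conclusion.
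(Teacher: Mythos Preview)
Your proposal is correct and follows essentially the same approach as the paper: the paper's proof is the single line ``We just replace $D_\theta(\mathbf{x})$ with $\sign(D_\theta(\mathbf{x}))$ to prove the result,'' which is exactly the substitution you spell out in more detail by citing the preceding theorem and then collapsing the averaged margins to the group false negative rates under $D_\theta(\mathbf{x})=1$.
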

\begin{proof}
We just replace $D_\theta(\mathbf{x})$ with $sign(D_\theta(\mathbf{x}))$ to prove the result.
\end{proof}

\begin{corollary}
We assume that $D_\theta(\mathbf{x})=1$. If a classifier satisfies $\frac{1}{N} \sum_{i=1}^N(z_i-\overline{z})g_\theta(y,\mathbf{x}_i)=0$, where $g_\theta(y,\mathbf{x})=min(0,\frac{1+y}{2}yd_\theta(x))$, then it \textbf{does not} meet with false negative error rate balance.
\end{corollary}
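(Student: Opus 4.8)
The plan is to mirror the computation in the preceding corollary for false positive error rate balance, except that now the covariance proxy is summed over the \emph{entire} training set rather than the restricted set $S=\{(x_i,y_i):y_i=+1\}$. First I would unfold $g_\theta(y,\mathbf{x})=\min(0,\frac{1+y}{2}yd_\theta(\mathbf{x}))$ by cases: when $y=-1$ the factor $\frac{1+y}{2}$ annihilates the term, and when $y=+1$ it equals $\min(0,d_\theta(\mathbf{x}))$, which is $0$ for correctly classified points and $-D_\theta(\mathbf{x})$ for false negatives, i.e.\ the points with $y=+1,\hat y=-1$ that the paper counts in the cells $TN_+$ and $TN_-$. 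Under the assumption $D_\theta(\mathbf{x})=1$, the proxy therefore collapses to $-\frac1N\sum_{TN_+,TN_-}(z_i-\overline z)$.

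Next I would expand $\overline z=\big((TP_++TN_++FP_++FN_+)-(TP_-+TN_-+FP_-+FN_-)\big)/N$, substitute, and simplify. The algebra has the same shape as in the FPR corollary and yields
$$\frac1N\sum_{i=1}^N(z_i-\overline z)g_\theta(y,\mathbf{x}_i)=-\frac{2}{N^2}\Big[(TP_-+TN_-+FP_-+FN_-)\,TN_+-(TP_++TN_++FP_++FN_+)\,TN_-\Big].$$
Setting this equal to zero gives the condition $\dfrac{TN_+}{TP_++TN_++FP_++FN_+}=\dfrac{TN_-}{TP_-+TN_-+FP_-+FN_-}$, that is, the proxy only equalizes the fraction of false negatives among \emph{all} points of each sensitive group.

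Finally I would contrast this with genuine false negative error rate balance, which (reading off the confusion matrix) requires $\dfrac{TN_+}{TP_++TN_+}=\dfrac{TN_-}{TP_-+TN_-}$ — the fraction of false negatives among the \emph{positively labelled} points of each group. The two conditions share the same numerators but have different denominators, so they coincide only in degenerate cases; in general neither implies the other, and hence the zero-proxy-over-the-whole-set condition does not enforce FNR balance. This is precisely why the correct formulation restricts the summation to $S$, as in the theorem and corollary immediately preceding it, and the present corollary serves to make that necessity explicit.

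I do not expect a serious obstacle here: the argument is a routine algebraic simplification followed by a comparison of two rational expressions. The only point requiring care is the paper's nonstandard confusion-matrix labelling (where $TN_\pm$ denotes the ``actual $+1$, predicted $-1$'' cell, i.e.\ a false negative), so I would keep the cell bookkeeping fully explicit to avoid mis-assigning terms, and I would phrase the final ``do not match'' step carefully as: equality of $\tfrac{TN_+}{M_+}=\tfrac{TN_-}{M_-}$, with $M_\pm$ the group totals, does not in general force $\tfrac{TN_+}{TP_++TN_+}=\tfrac{TN_-}{TP_-+TN_-}$ unless the remaining cell counts happen to be in a matching ratio.
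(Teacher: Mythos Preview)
Your proposal is correct and follows essentially the same approach as the paper's own proof: reduce $g_\theta$ to $-D_\theta(\mathbf{x})$ on the false-negative cells $TN_\pm$, use $D_\theta(\mathbf{x})=1$, expand $\overline z$ over the \emph{whole} training set, and arrive at the same expression $-\frac{2}{N^2}\big[TN_+(TP_-+TN_-+FP_-+FN_-)-TN_-(TP_++TN_++FP_++FN_+)\big]$, whose vanishing equalizes $TN_\pm$ against group totals rather than against $TP_\pm+TN_\pm$. Your added remark that the two conditions agree only in degenerate cases is a slight sharpening of the paper's bare ``do not match,'' and your explicit flagging of the paper's nonstandard cell labeling is prudent but not something the paper itself comments on.
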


\begin{proof}

\begin{align}
& \frac{1}{N} \sum_{i=1}^N(z_i-\overline{z})g_\theta(y,\mathbf{x}_i) \notag \\
= & -\frac{1}{N} \sum_{TN_+,TN_-}(z_i-\overline{z})D_{\theta}(x_i) \notag \\
= & -\frac{1}{N} \sum_{TN_+,TN_-}(z_i-\overline{z}) \notag \\
= & -\frac{1}{N} \left(\sum_{TN_+,TN_-}z_i - \sum_{TN_+,TN_-}\overline{z}\right) \notag \\
= & -\frac{1}{N} \left[(TN_+ - TN_-)-(TN_+ + TN_-)\frac{TP_+ +  TN_+ + FP_+ + FN_+ - TP_- -  TN_- - FP_- - FN_-}{N}\right] \notag \\
= & -\frac{2}{N^2} \left[TN_+(TP_- + TN_- + FP_- + FN_-)-TN_-(TP_+ +  TN_+ + FP_+ + FN_+)\right]
\end{align}

Set this equal to zero and the formula above becomes:
$$\frac{TN_+}{TP_+ +  TN_+ + FP_+ + FN_+}=\frac{TN_-}{TP_- + TN_- + FP_- + FN_-}$$

However, equal FNR in protected and unprotected group satisfies the formula:
$$\frac{TN_+}{TP_+ + TN_+}=\frac{TN_-}{TP_- + TN_-}$$

So the two equations do not match. The proof is complete.
\end{proof}

\begin{theorem} \label{appendix:theorem_end}
We assume that $D_\theta(\mathbf{x}) \in [1-\gamma, 1+\gamma]$. $\forall \epsilon > 0$, if the proxy satisfies: 
$$\left| \frac{1}{N} \sum \limits_{S}(z_i-\overline{z})g_\theta(y,\mathbf{x}_i) \right| < \epsilon$$
where $g_\theta(y,\mathbf{x})=min(0,\frac{1+y}{2}yd_\theta(x))$, then the violation of false negative error rate balance satisfies:
\begin{align}
\left| P(\hat{y}\ne y|z=+1,y=+1)-P(\hat{y}\ne y|z=-1,y=+1) \right|< & \frac{N(TP_+ + TN_++TP_- + TN_-)}{2(TP_+ + TN_+)(TP_- + TN_-)}\epsilon \notag \\
&+  \left(\frac{TN_+}{TP_+ + TN_+}+\frac{TN_-}{TP_- + TN_-}\right)\gamma
\end{align}
\end{theorem}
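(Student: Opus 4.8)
The plan is to mirror, almost verbatim, the argument used for the false positive error rate balance theorem above, with the false-negative cells $TN_\pm$ (actual positive, predicted negative) playing the role of the false-positive cells. The starting point is the closed-form identity derived earlier in the proof of the first theorem on false negative error rate balance above, namely that, since $g_\theta(y,\mathbf{x})$ equals $-D_\theta(\mathbf{x})$ exactly on the false-negative points of $S=\{(x_i,y_i):y_i=+1\}$ and vanishes elsewhere,
$$\frac{1}{N}\sum_{S}(z_i-\overline{z})g_\theta(y,\mathbf{x}_i) = -\frac{2(TP_+ + TN_+)(TP_- + TN_-)}{N(TP_+ + TN_++TP_- + TN_-)}\left(\frac{\sum_{TN_+}D_{\theta}(x_i)}{TP_+ + TN_+}-\frac{\sum_{TN_-}D_{\theta}(x_i)}{TP_- + TN_-}\right).$$
Feeding the hypothesis $\left|\frac{1}{N}\sum_{S}(z_i-\overline{z})g_\theta(y,\mathbf{x}_i)\right|<\epsilon$ into this identity and dividing by the positive constant in front immediately yields
$$\left|\frac{\sum_{TN_+}D_{\theta}(x_i)}{TP_+ + TN_+}-\frac{\sum_{TN_-}D_{\theta}(x_i)}{TP_- + TN_-}\right| < \frac{N(TP_+ + TN_++TP_- + TN_-)}{2(TP_+ + TN_+)(TP_- + TN_-)}\epsilon.$$

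Next I would write $D_\theta(x_i)=1+(D_\theta(x_i)-1)$ inside each of the two sums and split. The constant part collapses the sums to the counts $TN_+$ and $TN_-$, producing the two false-negative rates, while the remaining part is a residual; the triangle inequality then gives
$$\left|\frac{TN_+}{TP_+ + TN_+}-\frac{TN_-}{TP_- + TN_-}\right| < \frac{N(TP_+ + TN_++TP_- + TN_-)}{2(TP_+ + TN_+)(TP_- + TN_-)}\epsilon + \left|\frac{\sum_{TN_+}(D_{\theta}(x_i)-1)}{TP_+ + TN_+}-\frac{\sum_{TN_-}(D_{\theta}(x_i)-1)}{TP_- + TN_-}\right|.$$
The assumption $D_\theta(\mathbf{x})\in[1-\gamma,1+\gamma]$, i.e. $D_\theta(\mathbf{x})-1\in[-\gamma,\gamma]$, bounds each averaged residual by $\gamma$ times the corresponding false-negative rate, so that the last term is at most $\left(\frac{TN_+}{TP_+ + TN_+}+\frac{TN_-}{TP_- + TN_-}\right)\gamma$.

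To conclude, I would identify $\frac{TN_+}{TP_+ + TN_+}=P(\hat{y}\ne y\,|\,z=+1,y=+1)$ and $\frac{TN_-}{TP_- + TN_-}=P(\hat{y}\ne y\,|\,z=-1,y=+1)$ and substitute, which reproduces the displayed bound exactly. The step I expect to require the most care is the confusion-matrix bookkeeping: in this paper's notation the cell labelled $TN$ is precisely the actual-positive/predicted-negative count that defines the false-negative rate, so one must verify that the conditioning set $S$ and the denominators $TP_\pm+TN_\pm$ (the per-group numbers of actual-positive points) are matched consistently throughout. Once that alignment is fixed, every remaining step is the same chain of elementary algebraic manipulations and absolute-value inequalities already carried out for the false-positive case, so no genuinely new difficulty arises.
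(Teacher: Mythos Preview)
Your proposal is correct and follows essentially the same approach as the paper's own proof: invoke the closed-form identity for the proxy already derived in the preceding theorem, divide through to bound the difference of averaged distances, split $D_\theta(x_i)=1+(D_\theta(x_i)-1)$ and apply the triangle inequality, then use $D_\theta(\mathbf{x})-1\in[-\gamma,\gamma]$ to control the residual and identify the counts with the false-negative rates. Your explicit flag on the paper's non-standard confusion-matrix labeling (where the cell called $TN$ is the actual-positive/predicted-negative count) is exactly the right caveat, and with that alignment fixed your argument matches the paper's line for line.
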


\begin{proof}

\begin{align}
\left| \frac{1}{N} \sum \limits_{S}(z_i-\overline{z})g_\theta(y,\mathbf{x}_i) \right| & < \epsilon
 \notag \\ \Rightarrow  
\left| \frac{\sum_{TN_+}D_{\theta}(x_i)}{TP_+ + TN_+}-\frac{\sum_{TN_-}D_{\theta}(x_i)}{TP_- + TN_-} \right| & < \frac{N(TP_+ + TN_++TP_- + TN_-)}{2(TP_+ + TN_+)(TP_- + TN_-)}\epsilon
 \notag \\ \Rightarrow  
\left| \frac{TN_+}{TP_+ + TN_+}-\frac{TN_-}{TP_- + TN_-} \right| & < \frac{N(TP_+ + TN_++TP_- + TN_-)}{2(TP_+ + TN_+)(TP_- + TN_-)}\epsilon \notag \\ & + \left| \frac{\sum_{TN_+}(D_{\theta}(x_i)-1)}{TP_+ + TN_+}-\frac{\sum_{TN_-}(D_{\theta}(x_i)-1)}{TP_- + TN_-} \right|
\end{align}

If $D_\theta(\mathbf{x}) \in [1-\gamma,1+\gamma]$, then $D_\theta(\mathbf{x})-1 \in [-\gamma,\gamma]$, which means that:
$$\left| \frac{\sum_{TN_+}(D_{\theta}(x_i)-1)}{TP_+ + TN_+}-\frac{\sum_{TN_-}(D_{\theta}(x_i)-1)}{TP_- + TN_-} \right| < \left(\frac{TN_+}{TP_+ + TN_+}+\frac{TN_-}{TP_- + TN_-}\right)\gamma$$
Notice that
$$\left| P(\hat{y}\ne y|z=+1,y=+1)-P(\hat{y}\ne y|z=-1,y=+1) \right| = \left| \frac{TN_+}{TP_+ + TN_+}-\frac{TN_-}{TP_- + TN_-} \right|$$
So by combining these formulas together and we can complete the proof.
\end{proof}

\subsection{Deeper Understanding of the Covariance Proxy for Balance for Positive (Negative) Class} \label{relation_to_balance_for_pos_neg_class}
In this section, we deal with balance for positive/negative class.
These two fairness definitions are prepared for probabilistic classifiers, which are different from previous issues. Many probabilistic classifiers do not model the decision boundary directly, so it is hard to relate the proxy to these two definitions. To solve this problem, we creatively link $d_\theta(\mathbf{x})$ and the prediction probability $\hat{P}(d_\theta(\mathbf{x})>0 | z=+1)$ together using an assumption. 

To begin with, we first consider the relationship between $d_\theta(\mathbf{x})$ and the prediction probability $\hat{P}(d_\theta(\mathbf{x})>0 | z=+1)$. Imagine that there exists a decision boundary for probabilistic classifiers. For a probabilistic classifier, the positive prediction result is the same as $\hat{P}(d_\theta(\mathbf{x})>0 | z=+1)>\frac{1}{2}$, i.e., $\hat{P}(d_\theta(\mathbf{x})>0 | z=+1)-\frac{1}{2}>0$. Similarly, for margin-based classifiers, it is equivalent to $d_\theta(\mathbf{x})>0$. Thus, $d_\theta(\mathbf{x})$ and $\hat{P}(d_\theta(\mathbf{x})>0 | z=+1)-\frac{1}{2}$ share a similar relative trend, but the values of them are not in the same range. $d_\theta(\mathbf{x}) \in \mathbb{R}$ , while $\hat{P}(d_\theta(\mathbf{x})>0 | z=+1)-\frac{1}{2} \in [-\frac{1}{2},\frac{1}{2}]$. 

This suggests that we can view probabilistic classifiers as special margin-based classifiers. They find the location of the decision boundary, and at the same time, learn a mapping $f: \mathbb{R} \mapsto [-\frac{1}{2},\frac{1}{2}]$ from $d_\theta(\mathbf{x})$ to $\hat{P}(d_\theta(\mathbf{x})>0 | z=+1)-\frac{1}{2}$. In other words,
$$f(d_\theta(\mathbf{x}))=\hat{P}(d_\theta(\mathbf{x})>0 | z=+1)-\frac{1}{2}.$$
It is challenging to find the mapping $f$ directly. However, intuitively, we know some properties of it. It is a bounded and monotonically increasing function which passes through the origin. Thus, we can design a function $\phi$ to approximately reflect the trend of $f$. For example, $\phi(x)=\sigma(x)-\frac{1}{2}$. We assume 
$$\phi(d_\theta(\mathbf{x}))=\hat{P}(d_\theta(\mathbf{x})>0 | z=+1)-\frac{1}{2},$$
and prove that if $\frac{1}{N} \sum_{S}(z_i-\overline{z})g_\theta(y,\mathbf{x}_i)=0$, then it perfectly satisfies the corresponding fairness definition. The specific forms of $S$ and $g_\theta(y,\mathbf{x})$ for these two fairness notions are shown in the last two lines in Table \ref{table_extensions_to_fair_defs}. To the best of our knowledge, we are the first to connect the proxy to these two definitions. Please refer to Appendix \ref{Balance_for_positive (negative)_Class} for proof details.

Although the assumption is strong, we aim to provide the basis for future work about how to choose a suitable $\phi$ to approximate $f$, thereby inspiring the research community. We believe that it can be an interesting direction of future work for these fairness definitions based on predicted probability. 

\subsection{Proof for Appendix~\ref{relation_to_balance_for_pos_neg_class}} \label{Balance_for_positive (negative)_Class}

\subsubsection{Balance for Positive Class}
\begin{theorem} \label{prove_for_balance_for_positive_class}
$S$ denotes the set of points which satisfy $y=+1$ in the whole training set and $g_\theta(y,\mathbf{x})=\frac{1+y}{2}d_\theta(\mathbf{x})$. We assume that $\phi(d_\theta(\mathbf{x}))=\hat{P}(d_\theta(\mathbf{x})>0 | z=+1)-\frac{1}{2}$, where $\phi: \mathbb{R} \mapsto [-\frac{1}{2},\frac{1}{2}]$ and it is an odd function. If a classifier satisfies $\frac{1}{N} \sum_{S}(z_i-\overline{z})\phi(g_\theta(y,\mathbf{x}_i))=0$, then it perfectly meets with balance for positive class.
\end{theorem}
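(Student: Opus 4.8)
The plan is to reuse, essentially verbatim, the algebraic manipulation behind the disparate-mistreatment proofs (overall accuracy equality, false positive/negative rate balance), adapted to the probabilistic quantity. Throughout I would take $\bar z$ to be the mean of $z$ over $S$, consistent with the false-positive/negative-balance proofs above. Two preliminary reductions: first, since $\phi$ is odd we have $\phi(0)=0$, so every term with $y_i=-1$ contributes $\phi\bigl(\tfrac{1+y_i}{2}d_\theta(\mathbf{x}_i)\bigr)=\phi(0)=0$; hence the hypothesis sum over $S$ already equals the full sum, and for $i\in S$ (i.e.\ $y_i=+1$) we have $g_\theta(y_i,\mathbf{x}_i)=d_\theta(\mathbf{x}_i)$. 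Second, by the modelling assumption $\phi(d_\theta(\mathbf{x}_i))=\hat p_i-\tfrac{1}{2}$, where $\hat p_i\in[0,1]$ is the classifier's predicted probability that $\mathbf{x}_i$ is positive (the assumption is well-posed precisely because $\phi$ is $[-\tfrac12,\tfrac12]$-valued).

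Next I would split $S$ by sensitive attribute into $S^{+}=\{i\in S:z_i=+1\}$ and $S^{-}=\{i\in S:z_i=-1\}$, with $a:=|S^{+}|=TP_+ +TN_+$ and $b:=|S^{-}|=TP_- +TN_-$, so $\bar z=\tfrac{a-b}{a+b}$, $1-\bar z=\tfrac{2b}{a+b}$ and $-1-\bar z=-\tfrac{2a}{a+b}$. Substituting $z_i=\pm1$ on the two parts and $\phi(d_\theta(\mathbf{x}_i))=\hat p_i-\tfrac12$, the hypothesis $\tfrac1N\sum_{S}(z_i-\bar z)\phi(g_\theta(y,\mathbf{x}_i))=0$ becomes
\begin{align*}
0=\frac{1}{N}\sum_{i\in S}(z_i-\bar z)\!\left(\hat p_i-\tfrac{1}{2}\right)=\frac{2}{N(a+b)}\left(b\sum_{i\in S^{+}}\!\left(\hat p_i-\tfrac{1}{2}\right)-a\sum_{i\in S^{-}}\!\left(\hat p_i-\tfrac{1}{2}\right)\right).
\end{align*}
Pulling the constant $\tfrac12$ out of each inner sum produces a $-\tfrac{ab}{2}$ in both bracketed terms, which cancel, leaving $b\sum_{i\in S^{+}}\hat p_i=a\sum_{i\in S^{-}}\hat p_i$. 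Dividing by $ab>0$ gives
\begin{align*}
\frac{1}{a}\sum_{i\in S^{+}}\hat p_i=\frac{1}{b}\sum_{i\in S^{-}}\hat p_i ,
\end{align*}
i.e.\ the average predicted positive-class probability over the $y=+1$ points agrees across the two sensitive groups, $\hat P(d_\theta(\mathbf{x})>0\mid z=+1)=\hat P(d_\theta(\mathbf{x})>0\mid z=-1)$, which is exactly balance for positive class.

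Since the computation is routine, there is no serious obstacle; the one place requiring care is the bookkeeping around $\bar z$. It must be the mean of $z$ over $S$ (not over the whole training set), and correspondingly $\hat P(d_\theta(\mathbf{x})>0\mid z=+1)$ must be read as the empirical average of the $\hat p_i$ over $\{i:y_i=+1,\,z_i=+1\}$. This is what makes the group sizes $a=TP_+ +TN_+$ and $b=TP_- +TN_-$ — rather than the full training-set counts — appear in the denominators, which in turn is what causes the $\tfrac12$ offsets to cancel so that the condition collapses onto the definition. A smaller point worth stating explicitly is that the modelling assumption is applied pointwise (per data point), so the odd, $[-\tfrac12,\tfrac12]$-valued shape of $\phi$, e.g.\ $\phi(x)=\sigma(x)-\tfrac12$, is what legitimizes the identification $\hat p_i=\phi(d_\theta(\mathbf{x}_i))+\tfrac12$; neither the oddness nor the boundedness is otherwise needed in the algebra.
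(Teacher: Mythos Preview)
Your proposal is correct and follows essentially the same route as the paper: expand the covariance-type sum over $S$, split by sensitive attribute, and show it is proportional to the difference of the group averages of $\phi(d_\theta(\mathbf{x}))=\hat p_i-\tfrac12$, with $\bar z$ taken as the mean over $S$ just as in the paper's computation. Your version is in fact slightly more economical than the paper's, which additionally splits each sensitive group into $TP$/$TN$ and uses the oddness of $\phi$ to write $\phi(d_\theta(\mathbf{x}))=\pm\phi(D_\theta(\mathbf{x}))$; you bypass that detour by applying the modelling assumption up front, so the $\tfrac12$ offsets cancel directly via $\sum_{i\in S}(z_i-\bar z)=0$.
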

\begin{proof}
\begin{align}
g_\theta(y,\mathbf{x}) = & \frac{1+y}{2}d_\theta(\mathbf{x}) \notag \\ =  & 
\begin{cases}
    d_\theta(\mathbf{x}) & y=+1 \notag \\
	0 & others
\end{cases}
\\
= & \begin{cases}
    D_\theta(\mathbf{x}) & y=+1,\hat{y}=+1 \ (true \ positive) \notag \\
	-D_\theta(\mathbf{x}) & y=+1,\hat{y}=-1 \ (false \ negative) \notag \\
	0 & others
\end{cases}
\end{align}

\begin{align} 
& \frac{1}{N} \sum \limits_{S}(z_i-\overline{z})\phi(g_\theta(y,\mathbf{x}_i)) 
\notag \\= & \frac{1}{N} \sum_{TP_+,TN_+,TP_-,TN_-}(z_i-\overline{z})\phi(d_\theta(x_i))
\notag \\= & \frac{1}{N}\left[\sum_{TP_+,TP_-}(z_i-\overline{z})\phi(D_\theta(x_i))-\sum_{TN_+,TN_-}(z_i-\overline{z})\phi(D_\theta(x_i))\right]
 \notag \\= & \frac{1}{N}\left[\sum_{TP_+}(1-\overline{z})\phi(D_\theta(x_i))-\sum_{TN_+}(1-\overline{z})\phi(D_\theta(x_i))+\sum_{TP_-}(-1-\overline{z})\phi(D_\theta(x_i))-\sum_{TN_-}(-1-\overline{z})\phi(D_\theta(x_i))\right]
  \notag \\ = & \frac{1}{N}\left[(1-\overline{z})\left(\sum_{TP_+}\phi(D_\theta(x_i))-\sum_{TN_+}\phi(D_\theta(x_i))\right)+(-1-\overline{z})\left(\sum_{TP_-}\phi(D_\theta(x_i))-\sum_{TN_-}\phi(D_\theta(x_i))\right)\right]
   \notag \\ = & \frac{2(TP_+ + TN_++TP_- + TN_-)}{N}\notag \\
   & \cdot \left[(TP_- + TN_-)\left(\sum_{TP_+}\phi(D_\theta(x_i))-\sum_{TN_+}\phi(D_\theta(x_i))\right)-(TP_+ + TN_+)\left(\sum_{TP_-}\phi(D_\theta(x_i))-\sum_{TN_-}\phi(D_\theta(x_i))\right)\right]
\end{align}

Note that $TP_+ + TN_++TP_- + TN_-$ represents the number of points in the training set that belong to the positive class, so it is a constant. Set this  formula equal to zero and it becomes:

\begin{equation} \label{eqs_in_proof_for_balance_for_positive_class}
    \frac{\sum_{TP_+}\phi(D_\theta(x_i))-\sum_{TN_+}\phi(D_\theta(x_i))}{TP_+ + TN_+}=\frac{\sum_{TP_-}\phi(D_\theta(x_i))-\sum_{TN_-}\phi(D_\theta(x_i))}{TP_- + TN_-}
\end{equation}

We notice that:

\begin{align} 
& \overline{\phi(d_\theta(x_i))}_{z=+1, y=+1}= \frac{\sum_{TP_+}\phi(D_\theta(x_i))-\sum_{TN_+}\phi(D_\theta(x_i))}{TP_+ + TN_+} \notag \\ & \overline{\phi(d_\theta(x_i))}_{z=-1, y=+1}=\frac{\sum_{TP_-}\phi(D_\theta(x_i))-\sum_{TN_-}\phi(D_\theta(x_i))}{TP_- + TN_-} \notag 
\end{align}

so if $\phi(d_\theta(\mathbf{x}))=\hat{P}(d_\theta(\mathbf{x})>0 | z=+1)-\frac{1}{2}$, then:

\begin{align}
\label{expectation_in_proof_in_balance_for_posi_class}
& \hat{P}(d_\theta(\mathbf{x})>0 | z=+1)= \frac{\sum_{TP_+}\phi(D_\theta(x_i))-\sum_{TN_+}\phi(D_\theta(x_i))}{TP_+ + TN_+}+\frac{1}{2} \notag \\ & \hat{P}(d_\theta(\mathbf{x})>0 | z=-1)=\frac{\sum_{TP_-}\phi(D_\theta(x_i))-\sum_{TN_-}\phi(D_\theta(x_i))}{TP_- + TN_-}+\frac{1}{2}
\end{align}

With the equations \eqref{eqs_in_proof_for_balance_for_positive_class} and \eqref{expectation_in_proof_in_balance_for_posi_class}, we get:
$$ \hat{P}(d_\theta(\mathbf{x})>0 | z=+1)=\hat{P}(d_\theta(\mathbf{x})>0 | z=-1)$$
which perfectly meets with balance for positive class. The proof is complete.
\end{proof}

\subsubsection{Balance for Negative Class}

\begin{theorem} \label{proof_for_neg_class}
$S$ denotes the set of points which satisfy $y=-1$ in the whole training set and $g_\theta(y,\mathbf{x})=\frac{1-y}{2}\phi(d_\theta(\mathbf{x}))$. We assume that $\phi(d_\theta(\mathbf{x}))=\hat{P}(d_\theta(\mathbf{x})<0 | z=+1)-\frac{1}{2}$, where $\phi: \mathbb{R} \mapsto [-\frac{1}{2},\frac{1}{2}]$ and it is an odd function. If a classifier satisfies $\frac{1}{N} \sum \limits_{S}(z_i-\overline{z})\phi(g_\theta(y,\mathbf{x}_i))=0$, then it perfectly meets with balance for negative class.
\end{theorem}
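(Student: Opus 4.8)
The plan is to mirror the proof of Theorem~\ref{prove_for_balance_for_positive_class} step by step, swapping the positive-class bookkeeping for its negative-class counterpart. Following Table~\ref{table_extensions_to_fair_defs}, I would take $g_\theta(y,\mathbf{x})=\frac{1-y}{2}d_\theta(\mathbf{x})$ (consistent with the positive-class case, where $g_\theta$ carries no nested $\phi$ and the summand is $\phi(g_\theta)$). Then $g_\theta(y,\mathbf{x})$ equals $d_\theta(\mathbf{x})$ when $y=-1$ and $0$ otherwise, so on $S=\{(\mathbf{x}_i,y_i):y_i=-1\}$ it equals $D_\theta(\mathbf{x})$ on the false positives ($\hat y=+1$, where $d_\theta>0$) and $-D_\theta(\mathbf{x})$ on the true negatives ($\hat y=-1$, where $d_\theta\le 0$). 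In the confusion-matrix notation the four relevant cells are $FP_+,FN_+$ (for $z=+1$) and $FP_-,FN_-$ (for $z=-1$).

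Next I would expand $\frac1N\sum_{S}(z_i-\overline z)\phi(g_\theta(y,\mathbf{x}_i))$ over these four cells. Since $\phi$ is odd, $\phi(-D_\theta(\mathbf{x}_i))=-\phi(D_\theta(\mathbf{x}_i))$, which rewrites every term as $\phi(D_\theta(\mathbf{x}_i))$ with a sign fixed by the predicted label. Plugging in $z_i=\pm1$ on the corresponding cells and collecting the $(1-\overline z)$ and $(-1-\overline z)$ coefficients (with $\overline z$ the mean over $S$), the sum equals, up to a nonzero dataset-dependent constant,
\begin{align*}
&(FP_-+FN_-)\Big(\textstyle\sum_{FP_+}\phi(D_\theta(\mathbf{x}_i))-\sum_{FN_+}\phi(D_\theta(\mathbf{x}_i))\Big)\\
&\qquad-(FP_++FN_+)\Big(\textstyle\sum_{FP_-}\phi(D_\theta(\mathbf{x}_i))-\sum_{FN_-}\phi(D_\theta(\mathbf{x}_i))\Big),
\end{align*}
which is exactly the negative-class analogue of the bracket in the proof of Theorem~\ref{prove_for_balance_for_positive_class}. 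Setting it to zero and dividing by $(FP_++FN_+)(FP_-+FN_-)$ gives
\begin{align*}
&\frac{\sum_{FP_+}\phi(D_\theta(\mathbf{x}_i))-\sum_{FN_+}\phi(D_\theta(\mathbf{x}_i))}{FP_++FN_+}\\
&\qquad=\frac{\sum_{FP_-}\phi(D_\theta(\mathbf{x}_i))-\sum_{FN_-}\phi(D_\theta(\mathbf{x}_i))}{FP_-+FN_-}.
\end{align*}

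Then I would reinterpret each side: because $\phi(d_\theta(\mathbf{x}_i))=\phi(D_\theta(\mathbf{x}_i))$ on the false positives and $\phi(d_\theta(\mathbf{x}_i))=-\phi(D_\theta(\mathbf{x}_i))$ on the true negatives (oddness again), the left side is $\overline{\phi(d_\theta(\mathbf{x}))}_{z=+1,\,y=-1}$ and the right side is $\overline{\phi(d_\theta(\mathbf{x}))}_{z=-1,\,y=-1}$. Invoking the modeling assumption $\phi(d_\theta(\mathbf{x}))=\hat P(d_\theta(\mathbf{x})<0\mid z=+1)-\frac12$ identifies these averages with $\hat P(d_\theta(\mathbf{x})<0\mid z=+1)-\frac12$ and $\hat P(d_\theta(\mathbf{x})<0\mid z=-1)-\frac12$ respectively; cancelling $\frac12$ yields $\hat P(d_\theta(\mathbf{x})<0\mid z=+1)=\hat P(d_\theta(\mathbf{x})<0\mid z=-1)$, which is precisely balance for the negative class.

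The main obstacle is the sign bookkeeping rather than any new idea: within the negative class a \emph{false positive} has $d_\theta>0$ while a \emph{true negative} has $d_\theta\le 0$, so the oddness of $\phi$ must be applied consistently both when reducing the covariance-style sum to a statement about $\phi(D_\theta)$ and when re-reading the resulting equality as one about conditional averages of $\phi(d_\theta)$. The only genuinely structural ingredient is that $\phi$ is odd (hence vanishes at the origin and flips sign), which is exactly what makes the true-negative terms line up; everything else is the routine algebra already carried out in the positive-class proof.
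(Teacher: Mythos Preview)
Your proposal is correct and follows exactly the paper's own proof: both expand the covariance-style sum over the four $y=-1$ cells $FP_+,FN_+,FP_-,FN_-$, use the oddness of $\phi$ to convert $\phi(d_\theta)$ into $\pm\phi(D_\theta)$, collect the $(1-\overline z)$ and $(-1-\overline z)$ coefficients, set the resulting bracket to zero, and then invoke the modeling assumption on $\phi$ to read off balance for the negative class. Your observation that $g_\theta$ should be $\frac{1-y}{2}d_\theta(\mathbf{x})$ (as in Table~\ref{table_extensions_to_fair_defs}) rather than the theorem statement's $\frac{1-y}{2}\phi(d_\theta(\mathbf{x}))$ is also consistent with what the paper actually uses in its proof.
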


\begin{proof}
\begin{align}
g_\theta(y,\mathbf{x}) = & \frac{1-y}{2}d_\theta(\mathbf{x}) \notag \\ =  & 
\begin{cases}
    d_\theta(\mathbf{x}) & y=-1 \notag \\
	0 & others
\end{cases}
 \notag \\
= & \begin{cases}
    d_\theta(\mathbf{x}) & y=-1,\hat{y}=+1 \ (false \ positive) \notag \\
	-d_\theta(\mathbf{x}) & y=-1,\hat{y}=-1 \ (true \ negative) \notag \\
	0 & others
\end{cases}
\end{align}

\begin{align} 
& \frac{1}{N} \sum \limits_{S}\left(z_i-\overline{z}\right)\phi\left(g_\theta\left(y,\mathbf{x}_i\right)\right)
 \notag \\= & \frac{1}{N} \sum_{FP_+,FN_+,FP_-,FN_-}\left(z_i-\overline{z}\right)\phi\left(D_\theta\left(x_i\right)\right)
  \notag \\= & \frac{1}{N}\left[\sum_{FP_+, FP_-}\left(z_i-\overline{z}\right)\phi\left(D_\theta\left(x_i\right)\right)-\sum_{FN_+,FN_-}\left(z_i-\overline{z}\right)\phi\left(D_\theta\left(x_i\right)\right)\right]
   \notag \\= & \frac{1}{N}\left[\sum_{FP_+}\left(1-\overline{z}\right)\phi\left(D_\theta\left(x_i\right)\right)-\sum_{FN_+}\left(1-\overline{z}\right)\phi\left(D_\theta\left(x_i\right)\right)+\sum_{FP_-}\left(-1-\overline{z}\right)\phi\left(D_\theta\left(x_i\right)\right)-\sum_{FN_-}\left(-1-\overline{z}\right)\phi\left(D_\theta\left(x_i\right)\right)\right]
    \notag \\ = & \frac{1}{N}\left[\left(1-\overline{z}\right)\left(\sum_{FP_+}\phi\left(D_\theta\left(x_i\right)\right)-\sum_{FN_+}\phi\left(D_\theta\left(x_i\right)\right)\right)+\left(-1-\overline{z}\right)\left(\sum_{FP_-}\phi\left(D_\theta\left(x_i\right)\right)-\sum_{FN_-}\phi\left(D_\theta\left(x_i\right)\right)\right)\right]
     \notag \\ = & \frac{2\left(FP_+ + FN_++FP_- + FN_-\right)}{N} \notag \\
     & \cdot\left[\left(FP_- + FN_-\right)\left(\sum_{FP_+}\phi\left(D_\theta\left(x_i\right)\right)-\sum_{FN_+}\phi\left(D_\theta\left(x_i\right)\right)\right)-\left(FP_+ + FN_+\right)\left(\sum_{FP_-}\phi\left(D_\theta\left(x_i\right)\right)-\sum_{FN_-}\phi\left(D_\theta\left(x_i\right)\right)\right)\right] 
\end{align}

Note that $FP_+ + FN_++FP_- + FN_-$ represents the number of points in the training set that belong to the negative class, so it is a constant. Set this  formula equal to zero and it becomes:

\begin{equation} \label{eqs_in_proof_for_balance_for_negative_class}
    \frac{\sum_{FP_+}\phi(D_\theta(x_i))-\sum_{FN_+}\phi(D_\theta(x_i))}{FP_+ + FN_+}=\frac{\sum_{FP_-}\phi(D_\theta(x_i))-\sum_{FN_-}\phi(D_\theta(x_i))}{FP_- + FN_-}
\end{equation}

We notice that:

\begin{align} & \overline{\phi(d_\theta(\mathbf{x}))}_{z=+1, y=-1}= \frac{\sum_{FP_+}\phi(D_\theta(x_i))-\sum_{FN_+}\phi(D_\theta(x_i))}{FP_+ + FN_+} 
\notag \\ & \overline{\phi(d_\theta(\mathbf{x}))}_{z=-1, y=-1}=\frac{\sum_{FP_-}\phi(D_\theta(x_i))-\sum_{FN_-}\phi(D_\theta(x_i))}{FP_- + FN_-}  \notag \end{align}

so if $\phi(d_\theta(\mathbf{x}))=\hat{P}(d_\theta(\mathbf{x})<0 | z=+1)-\frac{1}{2}$, then:
\begin{align}
\label{expectation_in_proof_in_balance_for_negative_class}
& \hat{P}(d_\theta(\mathbf{x})<0 | z=+1)= \frac{\sum_{FP_+}\phi(D_\theta(x_i))-\sum_{FN_+}\phi(D_\theta(x_i))}{FP_+ + FN_+}+\frac{1}{2}
 \notag \\ & \hat{P}(d_\theta(\mathbf{x})<0 | z=-1)=\frac{\sum_{FP_-}\phi(D_\theta(x_i))-\sum_{FN_-}\phi(D_\theta(x_i))}{FP_- + FN_-}+\frac{1}{2}  \notag 
\end{align}

With the equations \eqref{eqs_in_proof_for_balance_for_negative_class} and \eqref{expectation_in_proof_in_balance_for_negative_class}, we get:
$$\hat{P}(d_\theta(\mathbf{x})<0 | z=+1) = \hat{P}(d_\theta(\mathbf{x})<0 | z=-1)$$
which perfectly meets with balance for negative class. The proof is complete.
\end{proof}

\section{Further Analysis}
We provide some interesting results and analysis about this research. While the analysis may not have a direct impact on the main body of the paper, they could serve as potential directions for further exploration.

\subsection{Fairness Surrogate Functions in the Future}
The goal of fairness surrogate functions in this paper is to approximate the indicator function. As can be seen in Figure~\ref{motiv:Surrogate_Functions}, we propose the general sigmoid surrogate to better approximate it comparing to existing surrogate functions. 
However, in practice, the effect of surrogate function is also related to the dataset. 
We believe that more complex and even dynamic no-linear functions can be used.
Such no-linear mapping can be learned via a neural network to automatically fit the flexible and complex mapping for each dataset. 
We hope that our work will help researchers better understand the fairness surrogate functions. We also hope that we have inspired interested researchers to invent fairness surrogate functions that can automatically fit the data distribution. We leave such interesting exploration for future work.

\subsection{The Insights for Large Language Models}

Recently, as the capabilities of Large Language Models (LLMs) have increased, they are remarkably powerful, transcending traditional boundaries in technology and innovation~\citep{llm_1,llm_3}. These advanced models harness vast amounts of data to understand and generate human-like text, solve complex problems with nuanced insights, and even create content that feels genuinely human-crafted~\citep{llm_2}. Their applications span from enhancing natural language processing to driving advancements in fields like healthcare~\citep{llm_app_2} and education~\citep{llm_app_1}, showcasing a transformative potential for both industry and society at large. While LLMs' trustworthiness~\citep{wang2024decodingtrust,qian2024towards} has become a focal point of widespread attention, one may ask how the insights in this paper can be extended to LLMs.

Generally, adapting the conclusions from this paper to large language models (LLMs) involves recognizing the complexity of biases these models can inherit from datasets~\citep{brunet2019understanding}. For LLMs, identifying biases requires sophisticated analysis tools that can understand language use and cultural contexts. To our knowledge, this issue remains an unsolved challenge and has not yet been thoroughly explored~\citep{li2023survey, liu2023trustworthy}. Addressing these biases involves not just algorithmic adjustments, but also curating training data, refining model architectures, and implementing continuous feedback loops to identify and mitigate bias. This approach may even emphasize the need for multidisciplinary efforts, combining technical, ethical, and social perspectives to enhance fairness in LLMs. Also, the task considered in this paper is discriminative, while LLM is widely used in generative task. Machine learning focuses on algorithms learning from data to make predictions or decisions, whereas natural language processing (NLP) involves understanding, interpreting, and generating human language.

Although addressing unfairness in LLMs is crucial but goes a long way, we believe that some insights in this machine learning paper is also beneficial and enlightening for the NLP community. For example, our theoretical analysis highlights the risks of unbounded functions causing instability. It reminds us that when we pre-train LLMs or conduct AI alignment, we may consider some theoretically bounded loss function to mitigate drastic fluctuation during training. Techniques like gradient clipping are also recommended for stability~\citep{mai2021stability}. 
Furthermore, our theoretical analysis as well as the experiments also shed light on the advantage of a balanced dataset. Additionally, our findings emphasize the benefits of balanced datasets, suggesting that pre-training language models with data balanced across sensitive attributes may enhance fairness. Also, we may need careful consideration of data proportions in reinforcement learning from human feedback when we align LLM with human values, particularly fairness.

\subsection{Biased Estimation of CP} \label{biased_CP}
It serves as a further analysis of CP. But because of the limited space, it does not appear in the main paper.

The original CP is $Cov = \mathbb{E}[(z-\overline{z})d_\theta(\mathbf{x})],$ where $\overline{z}$ is the mean of $z$ over the training set. The expectation can not be computed directly, so its empirical form $\widehat{Cov} = \frac{1}{N}\sum_{i=1}^N(z_i-\overline{z})d_{\theta}(\mathbf{x}_i)$ is proposed to estimate the expectation. We found that $\widehat{Cov}$ mentioned earlier is a biased estimator of $Cov$. The unbiased one is $\frac{1}{N-1} \sum_{i=1}^N(z_i-\overline{z})d_{\theta}(\mathbf{x}_i)$. The proof of it can be found in Appendix \ref{Proof_unbiased_estimation}. 
But it does not significantly impacts the results since $N$ is large in the reality.

First, we will give some prerequisites for the proof. The expectation of $z_id_{\theta}(\mathbf{x}_j)$ is different for $i$ and $j$: 

\begin{align}
\mathbb{E}(z_id_{\theta}(\mathbf{x}_j))=
\begin{cases}
\mathbb{E}(zd_\theta(\mathbf{x})) & i=j \\
\mathbb{E}z \cdot \mathbb{E}(d_\theta(\mathbf{x})) & i \ne j
\end{cases}
\end{align}

Because if $i=j$, each $z_id_{\theta}(\mathbf{x}_i)$ is independent of each other, so in this case, $\mathbb{E}(z_id_{\theta}(\mathbf{x}_j))=\mathbb{E}(zd_\theta(\mathbf{x}))$. If $i \ne j$, then $z_i$ and $d_{\theta}(\mathbf{x}_j)$ are independent, so we derive that $\mathbb{E}(z_id_{\theta}(\mathbf{x}_j))=\mathbb{E}z \cdot \mathbb{E}(d_\theta(\mathbf{x}))$.

It is frequently used in the proof later. With this, now we can start the proof.

\subsection{Proof of Section \ref{biased_CP}} \label{Proof_unbiased_estimation}

\begin{proof}

\begin{align}
& \mathbb{E}\left[\frac{1}{N} \sum_{i=1}^N(z_i-\overline{z})d_{\theta}(\mathbf{x}_i)\right] 
\notag \\ = & \frac{1}{N}\sum_{i=1}^N \mathbb{E}(z_id_{\theta}(\mathbf{x}_i)) - \frac{1}{N}\sum_{i=1}^N\mathbb{E}[\overline{z}d_{\theta}(\mathbf{x}_i)]
 \notag \\ = & \mathbb{E}(zd_\theta(\mathbf{x})) - \frac{1}{N}\sum_{i=1}^N \mathbb{E}[(\frac{1}{N}\sum_{j=1}^Nz_j)d_{\theta}(\mathbf{x}_i)]
 \notag \\ = & \mathbb{E}(zd_\theta(\mathbf{x})) - \frac{1}{N^2}\sum_{i=1}^N \sum_{j=1}^N\mathbb{E}[z_jd_{\theta}(\mathbf{x}_i)]
 \notag \\ = & \mathbb{E}(zd_\theta(\mathbf{x})) - \frac{1}{N^2}\sum_{i=1}^N \sum_{j=1 \atop j\ne i}^N\mathbb{E}z_j \cdot \mathbb{E}d_{\theta}(\mathbf{x}_i) - \frac{1}{N^2} \cdot N \cdot \mathbb{E}[zd_\theta(\mathbf{x})]
 \notag \\ = & \mathbb{E}(zd_\theta(\mathbf{x})) - \frac{N-1}{N}\mathbb{E}z \cdot \mathbb{E}d_\theta(\mathbf{x}) - \frac{1}{N}\mathbb{E}[zd_\theta(\mathbf{x})]
 \notag \\  = & \frac{N-1}{N}\left[\mathbb{E}(zd_\theta(\mathbf{x}))-\mathbb{E}z \cdot \mathbb{E}d_\theta(\mathbf{x})\right] 
 \notag \\  = & \frac{N-1}{N} Cov. 
\end{align}

\end{proof}

Also, we show the risk bound of $Cov$ is $O(\frac{1}{\sqrt{N}})$ below. 
\begin{theorem}
We assume that $D_\theta(\mathbf{x})$ is bounded by $\beta$, i.e., $D_\theta(\mathbf{x}) \le \beta$. Then for a given constant $\delta > 0$, the inequality below holds:
$$P \left( \left| Cov - \widehat{Cov} \right| \le t \right)  \ge 1-\delta,$$
where $t=\sqrt{2\beta^2 \ln{\frac{2}{\delta}}} \cdot \sqrt{\frac{N}{(N-1)^2}}$.
\end{theorem}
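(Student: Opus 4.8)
The plan is to mirror the proof of Theorem~\ref{risk_bound_ddp} but applied to the covariance proxy rather than to $\widehat{DDP}_{\mathcal{S}}$. First I would recall from Appendix~\ref{Proof_unbiased_estimation} that $\widehat{Cov} = \frac{1}{N}\sum_{i=1}^N (z_i-\overline{z})d_\theta(\mathbf{x}_i)$ is a biased estimator of $Cov$, with $\mathbb{E}[\widehat{Cov}] = \frac{N-1}{N}Cov$. Since the statement involves $|Cov - \widehat{Cov}|$ directly (not the unbiased version), the natural object to concentrate is the unbiased estimator $\widetilde{Cov} := \frac{1}{N-1}\sum_{i=1}^N(z_i-\overline{z})d_\theta(\mathbf{x}_i)$, whose expectation is exactly $Cov$; I then relate $\widetilde{Cov}$ to $\widehat{Cov}$ by the exact algebraic identity $\widehat{Cov} = \frac{N-1}{N}\widetilde{Cov}$, and note that $\widetilde{Cov}$ is an average over $i$ of the terms $\frac{N}{N-1}\cdot\frac{1}{N}(z_i-\overline{z})d_\theta(\mathbf{x}_i)$.

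Next I would bound the range of each summand. Since $z_i \in \{-1,+1\}$ and $\overline{z}\in[-1,1]$, we have $|z_i - \overline{z}| \le 2$, and by assumption $|d_\theta(\mathbf{x})| = D_\theta(\mathbf{x}) \le \beta$, so each term $(z_i-\overline{z})d_\theta(\mathbf{x}_i)$ lies in an interval of length at most $4\beta$. Treating $\widetilde{Cov}$ as a sum of $N$ independent bounded random variables (here one must be a little careful: $\overline{z}$ is itself a function of all the $z_i$, so strictly the cleanest route is to condition appropriately, or to invoke the standard fact that a centered empirical quantity of this form is a sum of independent contributions with the stated range — this is exactly the subtlety the paper glosses over in Theorem~\ref{risk_bound_ddp} as well), I would apply Hoeffding's inequality to get
$$P\left(\left|\widetilde{Cov} - Cov\right| \le s\right) \ge 1 - 2\exp\left(-\frac{2s^2}{\sum_{i=1}^N c_i^2}\right),$$
where each $c_i$ is the range of the $i$-th summand of $\widetilde{Cov}$, so $\sum c_i^2$ is of order $\beta^2/N$ after accounting for the $\frac{1}{N-1}$ normalization. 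Tracking the constants carefully should produce $\sum c_i^2 = \frac{N}{(N-1)^2}\cdot(\text{const}\cdot\beta^2)$, matching the claimed $t = \sqrt{2\beta^2\ln\frac{2}{\delta}}\cdot\sqrt{\frac{N}{(N-1)^2}}$ once we set $\delta = 2\exp(-2t^2/\sum c_i^2)$ and solve for $t$.

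Finally I would transfer the bound from $\widetilde{Cov}$ to $\widehat{Cov}$. Given the intended reading of the theorem, the cleanest interpretation is that the $\widehat{Cov}$ appearing in the statement should in fact be the unbiased $\widetilde{Cov}$, or equivalently the bias term $\frac{1}{N}Cov$ is absorbed as a lower-order contribution; since the excerpt itself remarks that ``it does not significantly impact the results since $N$ is large,'' I would state the concentration for the unbiased estimator and note the $O(1/N)$ discrepancy with $\widehat{Cov}$, yielding the $O(1/\sqrt{N})$ rate overall. The main obstacle is the same measure-theoretic subtlety as in Theorem~\ref{risk_bound_ddp}: the summands are not literally i.i.d.\ because $\overline{z}$ couples them, so making the Hoeffding application fully rigorous requires either conditioning on the group sizes (so that $\overline{z}$ is fixed) or an explicit decomposition showing the centered sum is still a sum of independent bounded terms — I expect the paper to proceed at the same level of rigor as its earlier proof and simply invoke Hoeffding on the bounded summands.
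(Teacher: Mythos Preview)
Your proposal is essentially the paper's own argument: rescale to the unbiased estimator $\frac{1}{N-1}\sum_i(z_i-\overline{z})d_\theta(\mathbf{x}_i)$, bound each summand via $|z_i-\overline{z}|$ and $|d_\theta|\le\beta$, apply Hoeffding with the resulting range $UB-LB$, and solve $\delta=2\exp(\cdot)$ for $t$; the paper does exactly this and, as you anticipated, simply invokes Hoeffding without addressing the $\overline{z}$-coupling or the bias term. The only discrepancy is the constant: the paper takes $|z_i-\overline{z}|\le 1$ (implicitly treating $z\in\{0,1\}$) to get $UB-LB=\frac{2N}{N-1}\beta$ and hence the stated $t$, whereas your bound $|z_i-\overline{z}|\le 2$ (from $z\in\{-1,+1\}$) would yield $t=\sqrt{8\beta^2\ln\frac{2}{\delta}}\cdot\sqrt{\frac{N}{(N-1)^2}}$, a harmless factor of $2$.
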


\begin{proof}

\begin{align}
& \frac{N}{N-1} (z_i-\overline{z})d_{\theta}(\mathbf{x}_i) \in [LB,UB]\notag \\ 
& \frac{N}{N-1} (z_i-\overline{z})d_{\theta}(\mathbf{x}_i) \ge  
\frac{N}{N-1}(0-\overline{z}) \beta \ge  
-\frac{N}{N-1}\beta=  LB(Lower \ Bound) \notag \\
& \frac{N}{N-1} (z_i-\overline{z})d_{\theta}(\mathbf{x}_i) \le 
\frac{N}{N-1}(1-\overline{z}) \beta \le 
\frac{N}{N-1}\beta=  UB(Upper \ Bound) \notag 
\end{align}

According to the Hoeffding inequality, 

\begin{align} 
P(\left|\widehat{Cov}-Cov \right| \le  t) \ge \ &  1-2\exp{\frac{-2Nt^2}{(UB-LB)^2}}\notag \\
= \ & 1-2\exp{\frac{-(N-1)^2t^2}{2N\beta^2}} \notag 
\end{align}

Let $\delta=2\exp{(\frac{-(N-1)^2t^2}{2N\beta^2})}$, we get that $t=\sqrt{2\beta^2 \ln{\frac{2}{\delta}}} \cdot \sqrt{\frac{N}{(N-1)^2}}=O(\frac{1}{\sqrt{N}})$. 
\end{proof}

\subsection{When are CP and General Sigmoid Surrogate Close to Each Other?}

We show that the number of large margin points determines how close the general sigmoid surrogate is to CP. Under mild conditions, if there are a limited number of large margin points, then the general sigmoid surrogate is close to CP. Our general sigmoid surrogate is designed to address large margin points. However, when these points are scarce or even nonexistent, general sigmoid surrogate becomes close to conventional method like CP. 

In the domain of fair machine learning, numerous studies have consistently underscored the pivotal role of data~\citep{fairness_survey_1,fairness_survey_2}, with a particular focus on theoretical insights~\citep{fairness_data_1,fairness_data_3,fairness_data_2}. From this perspective, the theorem underscores the importance of data, suggesting that if we can mitigate the impact of these large-margin points either before or during the model training process, it may also contribute to the better accuracy and fairness even we don not use general sigmoid surrogate. This theorem expresses the following vision for data-centric AI~\citep{data-centric}: If we can deal with the data very well, maybe we do not have to use complex or even sophisticated algorithms that requires careful fine-tuning.


\begin{theorem}\label{theorem_sigmoid_vs_linear_1}
We assume that $k$ points satisfy $G(D_\theta(\mathbf{x})) \in [\zeta,\mu]$ and others satisfy $G(D_\theta(\mathbf{x})) \in [0, \zeta]$, then there holds:
\begin{align} \label{sigmoid_vs_linear}
    \left|\frac{1}{2}w \cdot  Cov(z,d_\theta(\mathbf{x}))-Cov(z,G(d_\theta(\mathbf{x}))) \right|  \le \frac{k}{N}Q(\mu)+(1-\frac{k}{N})Q(\zeta),
\end{align}
\end{theorem}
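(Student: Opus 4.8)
The plan is to collapse the whole expression into a single empirical average and then split that average along the partition supplied by the hypothesis. Using the definition \eqref{foudation_of_derivation} of the covariance proxy, I would first write
\begin{align}
\tfrac{1}{2}w\cdot Cov(z,d_\theta(\mathbf{x}))-Cov(z,G(d_\theta(\mathbf{x})))=\frac{1}{N}\sum_{i=1}^{N}(z_i-\overline z)\left(\tfrac{w}{2}d_\theta(\mathbf{x}_i)-G(d_\theta(\mathbf{x}_i))\right).
\end{align}
The key preliminary fact is $\sum_{i=1}^{N}(z_i-\overline z)=0$, so the bracketed term may be recentred by any constant without changing the sum; subtracting $\tfrac12$ replaces $G$ by $\sigma(w\,\cdot)-\tfrac12$, which passes through the origin, turning each summand into a genuine approximation error that is controlled by $|d_\theta(\mathbf{x}_i)|$. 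Thus the difference equals $\frac1N\sum_i(z_i-\overline z)\left(\tfrac{w}{2}d_\theta(\mathbf{x}_i)-(G(d_\theta(\mathbf{x}_i))-\tfrac12)\right)$, on which term-by-term estimation is now meaningful.

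The core step is a per-point estimate. If $G(D_\theta(\mathbf{x}))\le t$ for some $t\in[\tfrac12,1)$, then, since $\sigma$ is strictly increasing with $\sigma(0)=\tfrac12$, this is equivalent to $D_\theta(\mathbf{x})\le\tfrac1w\log\tfrac{t}{1-t}$, hence $|d_\theta(\mathbf{x})|\le\tfrac1w\log\tfrac{t}{1-t}$. Using $\sigma(wd)-\tfrac12=\tfrac12\tanh(\tfrac{wd}{2})$, the monotonicity of $\sigma$, the elementary inequality $\tanh u\le u$ for $u\ge0$ (so that $\tfrac{w}{2}d_\theta(\mathbf{x})$ and $G(d_\theta(\mathbf{x}))-\tfrac12$ share the sign of $d_\theta(\mathbf{x})$ and need not be bounded by a lossy triangle split), and $|z_i-\overline z|\le2$, one bounds $\left|(z_i-\overline z)\left(\tfrac{w}{2}d_\theta(\mathbf{x}_i)-G(d_\theta(\mathbf{x}_i))+\tfrac12\right)\right|$ by a quantity depending only on $t$, which is precisely the function $Q(t)$ in the statement (increasing in $t$, vanishing as $t\to\tfrac12$, blowing up as $t\to1$ — exactly the dichotomy the theorem exposes). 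Then I would split $\{1,\dots,N\}$ into the set $A$ of the $k$ points with $G(D_\theta(\mathbf{x}_i))\in[\zeta,\mu]$ and the set $B$ of the remaining $N-k$ points with $G(D_\theta(\mathbf{x}_i))\in[0,\zeta]$, apply the triangle inequality to the recentred sum, bound the $A$-terms by $Q(\mu)$ and the $B$-terms by $Q(\zeta)$, and conclude
\begin{align}
\left|\tfrac{1}{2}w\cdot Cov(z,d_\theta(\mathbf{x}))-Cov(z,G(d_\theta(\mathbf{x})))\right|\le\frac{k}{N}Q(\mu)+\frac{N-k}{N}Q(\zeta)=\frac{k}{N}Q(\mu)+\left(1-\frac{k}{N}\right)Q(\zeta).
\end{align}

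The main obstacle is the per-point estimate: translating the level-set hypothesis $G(D_\theta(\mathbf{x}))\le t$ into a margin bound via $\sigma^{-1}$ and then controlling $\left|\tfrac{w}{2}d_\theta(\mathbf{x})-(G(d_\theta(\mathbf{x}))-\tfrac12)\right|$ tightly enough to reproduce the exact constant in $Q$ rather than a weaker version of it. Everything else — linearity of the proxy, the recentring identity $\sum_i(z_i-\overline z)=0$, and the two-group split — is routine bookkeeping, and one should also sanity-check the endpoints $t\to\tfrac12$ (normal points near the boundary contribute essentially nothing, so few large-margin points make $G$ and CP nearly coincide) and $t\to1$ (each large-margin point contributes an unboundedly large term).
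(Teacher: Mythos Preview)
Your overall architecture---collapse to a single empirical average, recentre so the inner function is odd, bound term by term, then split along the two groups---is exactly what the paper does. The paper's version differs only cosmetically: rather than subtracting $\tfrac12$ via the identity $\sum_i(z_i-\overline z)=0$, it announces up front that for this proof it uses $G(x)=2\sigma(wx)-1$ (already odd, $G(0)=0$), and it works with the $\tfrac{1}{N-1}$ normalisation of the covariance.

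Your ``main obstacle''---recovering the \emph{exact} constant $Q(t)$ from the level-set hypothesis $G(D_\theta(\mathbf{x}))\le t$ by inverting $\sigma$---is an artefact of a typo in the statement. The Remark immediately after the theorem plugs in $\zeta=2$, $\mu=5$, values that cannot lie in the range of $G$; the intended hypothesis (and the one the paper's proof actually uses) is $D_\theta(\mathbf{x})\in[\zeta,\mu]$ for $k$ points and $D_\theta(\mathbf{x})\in[0,\zeta]$ for the rest. With that reading, and with the odd $G$, the function $Q(x)=\tfrac{w}{2}x-G(x)$ is itself odd with $Q(0)=0$ and $Q'(x)=\tfrac{w}{2}-2wg(wx)\ge0$ (since $g\le\tfrac14$), so $|Q(d_\theta(\mathbf{x}_i))|=Q(D_\theta(\mathbf{x}_i))$ and monotonicity gives $Q(D_\theta(\mathbf{x}_i))\le Q(\mu)$ on the $k$ large-margin points and $\le Q(\zeta)$ on the others---no inversion of $\sigma$ is needed at all. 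The paper then simply bounds $|z_i-\overline z|$ by a constant and sums. Your concern about matching the constant in $Q$ dissolves once the hypothesis is read as a direct bound on $D_\theta(\mathbf{x})$ rather than on $G(D_\theta(\mathbf{x}))$.
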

where $Q(x)=\frac{1}{2}wx-G(x)$. The proof is provided in Appendix \ref{proof_of_theorem_sigmoid_vs_linear_1}. The key motivation of Theorem \ref{theorem_sigmoid_vs_linear_1} is that the general sigmoid function is close to a linear function when $x$ approaches $0$. If $\left| Cov(z,d_\theta(\mathbf{x})) \right| \le \epsilon$, which means that $\left| \frac{1}{2}w  Cov(z,d_\theta(\mathbf{x})) \right| \le \frac{1}{2}w \epsilon$. Then we have $\left| Cov(z,G(d_\theta(\mathbf{x})))\right| \le \frac{1}{2}w \epsilon+ \frac{k}{N}Q(\mu)+(1-\frac{k}{N})Q(\zeta)$. 

\begin{remark}
Taking Figure \ref{boxplot b} as an example,
the result show that $\frac{k}{N}=5.12\%,\zeta=2,\mu=5$. 
If we choose $w=\frac{1}{2}$, then for the right hand side of \eqref{sigmoid_vs_linear}, we have $\frac{k}{N}Q(\mu)+(1-\frac{k}{N})Q(\zeta)=0.038$, which means that the results of bounding the general sigmoid surrogate and bounding CP are close to each other.
\end{remark}

\subsection{Proof of Theorem \ref{theorem_sigmoid_vs_linear_1}} \label{proof_of_theorem_sigmoid_vs_linear_1}
\begin{proof}

To start with, notice that $$\sign(x) = 2\mathbbm{1}_{x>0}-1,$$
where $\sign(x): \mathbb{R} \rightarrow \{ -1,1 \}$ returns the sign of $x$. So we can replace $\mathbbm{1}_{d_\theta(\mathbf{x})>0}$ with $\sign(d_\theta(\mathbf{x}))$ in \eqref{DDP_estim}.

we have
\begin{align}
    \widehat{DDP}_{\mathcal{S}} & =  \frac{N_{1a}}{N_{1a}+N_{0a}} - \frac{N_{1b}}{N_{1b}+N_{0b}} \notag
    \\ & = \frac{1}{2} \left( \frac{\mathcal{N}_{1a}-\mathcal{N}_{0a}}{N_{1a}+N_{0a}}-\frac{N_{1b}-N_{0b}}{N_{1b}+N_{0b}} \right) \notag
    \\ & = \frac{1}{2} \left( \frac{\sum_{\mathcal{N}_{1a},\mathcal{N}_{0a}}\sign(d_\theta(\mathbf{x}))}{N_{1a}+N_{0a}}-\frac{\sum_{\mathcal{N}_{1b},\mathcal{N}_{0b}}\sign(d_\theta(\mathbf{x}))}{N_{1b}+N_{0b}} \right) \notag
    \\ & \propto \frac{\sum_{\mathcal{N}_{1a},\mathcal{N}_{0a}}\sign(d_\theta(\mathbf{x}))}{N_{1a}+N_{0a}}-\frac{\sum_{\mathcal{N}_{1b},\mathcal{N}_{0b}}\sign(d_\theta(\mathbf{x}))}{N_{1b}+N_{0b}},
\end{align}

Furthermore, we can design a new family of surrogates $\phi$ to approximate the sign function instead of the indicator function. They are equivalent to each other, but an odd function is more convenient for the proof. So we use $G(x)=2\sigma(wx)-1$ instead of $G(x)=\sigma(wx)$ in our proof.

Before the proof, we first review some properties of general sigmoid surrogate. We notice that we have $$\lim_{x \to 0} \frac{\mathrm{d} G(x)}{\mathrm{d} x} = 2wg(wx)=\frac{1}{2}w.$$
where $g(x)=\frac{e^x}{(1+e^x)^2}$. So the theorem is intuitively reasonable because when $x$ approaches $0$, the gradient approaches a fixed number, and thus the function itself approaches a linear function. Now we start the proof.

We know that $g(x)$ monotonically increases when $x \le 0$ and decreases when $x > 0$. So $g(x) \le g(0)=\frac{1}{4}$. Now we consider a linear function $L(x)=\frac{1}{2}wx$ and let $Q(x)=L(x)-G(x)$ to measure the gap between the two functions. So we have
$$\frac{\mathrm{d} Q(x)}{\mathrm{d} x} = \frac{1}{2}w-2wg(wx) \le 0,$$ which means that $Q(x)$ monotonically decreases in $\mathbb{R}$. Notice that $Q(0)=0$, so we have $Q(x) \le 0$ when $x \ge 0$ and $Q(x)\ge 0$ when $x<0$. And $Q(x)$ is an odd function because $Q(-x)=-Q(x)$. According to these properties of $Q(x)$, we know that $\left| Q(x) \right|$ monotonically increases in $[0,+\infty]$.

Then there holds:
\begin{align}
    & \left| \frac{1}{2}w \frac{1}{N-1}\sum_{i=1}^N(z_i-\overline{z})d_{\theta}(\mathbf{x}_i)-\frac{1}{N-1}\sum_{i=1}^N(z_i-\overline{z})G(d_{\theta}(\mathbf{x}_i)) \right| \notag  \\ = & \frac{1}{N-1} \left|\sum_{i=1}^N(z_i-\overline{z})(\frac{1}{2}wd_{\theta}(\mathbf{x}_i)-G(d_{\theta}(\mathbf{x}_i))) \right| \notag\\ \le & \frac{1}{N-1}\sum_{i=1}^N \left| z_i-\overline{z} \right| \left|\frac{1}{2}wd_{\theta}(\mathbf{x}_i)-G(d_{\theta}(\mathbf{x}_i)) \right|         \notag\\ \le & \frac{1}{N-1}\sum_{i=1}^N \left|\frac{1}{2}wd_{\theta}(\mathbf{x}_i)-G(d_{\theta}(\mathbf{x}_i)) \right| \notag\\ \le & \frac{1}{N-1}\sum_{i=1}^N \left| Q(d_{\theta}(\mathbf{x}_i)) \right| \notag\\ = & \frac{1}{N-1}\sum_{i=1}^N Q(D_{\theta}(\mathbf{x}_i)) \notag\\ \le & \frac{1}{N-1}[kQ(\mu)+(N-k)Q(\zeta)] \notag\\ \le & \frac{k}{N}Q(\mu)+(1-\frac{k}{N})Q(\zeta).
\end{align}
The proof is complete.

\end{proof}

\end{document}